\newcommand{\op}{\left(}
\newcommand{\cp}{\right)}
\newcommand{\ob}{\left[}
\newcommand{\cb}{\right]}
\newcommand{\R}{\mathbb{R}}
\newcommand{\E}{\mathbb{E}}
\renewcommand{\H}{\mathcal{H}}
\newcommand{\1}{\mathbb{1}}
\newcommand{\ep}{\varepsilon}
\newcommand{\C}{\mathcal{C}}
\newcommand{\N}{\mathcal{N}}
\newcommand{\td}{\widetilde}
\renewcommand{\Re}{\mathfrak{Re}}
\renewcommand{\L}{\mathcal{L}}
\DeclareMathOperator{\tr}{tr}
\DeclareMathOperator{\diag}{diag}
\DeclareMathOperator{\MMD}{MMD}
\begin{document}

\title{Optimal Denoising in Score-Based Generative Models:\\ The Role of Data Regularity}

\author{\name Eliot Beyler \email eliot.beyler@inria.fr \\
       \addr INRIA, Ecole Normale Supérieure\\
       PSL Research University\\
       Paris, France
       \AND
       \name Francis Bach \email francis.bach@inria.fr \\
       \addr INRIA, Ecole Normale Supérieure\\
       PSL Research University\\
       Paris, France}

\editor{Jianfeng Lu}

\maketitle

\begin{abstract}%
Score-based generative models achieve state-of-the-art sampling performance by denoising a distribution perturbed by Gaussian noise.
In this paper, we focus on a single deterministic denoising step, and compare the optimal denoiser for the quadratic loss, we name ``full-denoising'', to the alternative ``half-denoising'' introduced by \citet{hyvarinenNoisecorrectedLangevinAlgorithm2025}.
We show that looking at the performance in terms of distance between distributions tells a more nuanced story, with different assumptions on the data leading to very different conclusions.
We prove that half-denoising is better than full-denoising for regular enough densities, while full-denoising is better for singular densities such as mixtures of Dirac measures or densities supported on a low-dimensional subspace.
In the latter case, we prove that full-denoising can alleviate the curse of dimensionality under a \emph{linear manifold hypothesis}.
\end{abstract}%

\begin{keywords}
  Score-based generative modeling, denoising, Wasserstein distance, diffusion models
\end{keywords}

\section{Introduction}

Score-based generative models, or diffusion models \citep{sohl-dicksteinDeepUnsupervisedLearning2015,saremiNeuralEmpiricalBayes2019,hoDenoisingDiffusionProbabilistic2020,songScoreBasedGenerativeModeling2021}, achieve state-of-the-art sampling performance by denoising a distribution perturbed by Gaussian noise. This denoising is made in several steps by removing each time a fraction of the noise, which can be seen as the discretization of a stochastic differential equation.

Here we focus on a single denoising step. This setting enables a more in-depth study, that could be generalized to multiple steps in future works, but is also relevant in practice for several reasons.
Firstly, it has been used in recent work by \citet{saremiChainLogConcaveMarkov2023}, who propose an alternative formulation of diffusion models in which each step corresponds to log-concave sampling.
The method first reduces the noise level by averaging multiple measurements, then tries to approximate the data distribution with a one-step denoising.
Moreover, in the framework of \emph{stochastic localization} \citep[][]{montanariSamplingDiffusionsStochastic2023}, one simulates a process that localizes to the distribution of interest, but as this process is simulated in finite time, the author specifies that an additional step is needed, for example by taking the conditional expectation, hence denoising.
This final denoising step is also present in denoising diffusion models even if not stated explicitly.
Indeed, when one lacks regularity assumptions on the data distribution, proofs of convergence for diffusion models fail, and to overcome this, \citet{chenSamplingEasyLearning2023} use early stopping of the denoising process, as keeping a little bit of noise offsets the missing regularity.  
But then should we keep this extra noise or do an additional denoising step? This last step is examined here, and relies crucially on assumptions about the data distribution. 

Formally, given a random variable $X\in \R^d$ with distribution $\mu_X$, we will define $Y = X + \ep$ with $X$ and $\ep$ independent and $\ep \sim \N(0, \sigma^2 I)$. The ``optimal'' denoiser, in the sense of the measurable function that minimizes $\E [ \Vert X - f(Y)\Vert^2]$, is the conditional expectation $\varphi(y) = \E[X|Y=y]$.\footnote{For this conditional expectation to exist, we will always assume that $X$ is integrable, i.e., $\E[\Vert X\Vert] <\infty$.}
This value is related to the score, i.e., the gradient of the logarithm of the density $p_Y$ of $Y$, through Tweedie's formula \citep{robbinsEmpiricalBayesApproach1956,miyasawaEmpiricalBayesEstimator1961,efronTweediesFormulaSelection2011}:
\begin{equation}
\label{eq:tweedie}
\varphi(y) = \E[X|Y=y] = y + \sigma^2\nabla\log p_Y(y).
\end{equation}

But does it minimize the distance between the distribution $\mu_{X}$ and $\mu_{f(Y)}$ (in Wasserstein or any other distance between probability distributions)?

Of course, we want to limit the space of functions $f$ to ones we can compute.\footnote{In fact, any distance between $\mu_{X}$ and $\mu_{f(Y)}$ can be made zero by taking $f$ a transport map, but it will not be computable in general.} Knowing that in practice, $\nabla\log p_Y$ can be estimated by \emph{denoising score matching} \citep[][]{vincentConnectionScoreMatching2011}, it is reasonable to look for an expression involving $\nabla\log p_Y$. In a recent paper, \citet{hyvarinenNoisecorrectedLangevinAlgorithm2025} introduces half-denoising 
$$
\varphi_{1/2}(y) = \frac{y + \E[X|Y=y]}{2} = y + \frac{\sigma^2}{2}\nabla\log p_Y(y),
$$
and uses it to generate samples from $X$ with a modified Langevin algorithm.
Our aim in this paper is to study in more details the performance of this half-denoising step and to compare it to full-denoising.

The general philosophy here is that the hypotheses made on the data distribution are essential to assess the performance of the different denoising processes.
We distinguish between two kinds of assumptions made in the literature.
The first kind is to assume that the distribution of $X$ is regular enough, i.e., that it admits a density $p_X$ with respect to the Lebesgue measure, and that this density is smooth, for example that $x\mapsto \nabla\log p_X(x)$ is $L$-Lipschitz \citep[see, e.g.,][]{chenSamplingEasyLearning2023}.  
The second kind, incompatible with the first, is to assume that the distribution has some kind of singularities, i.e., that it is concentrated on low dimensional manifolds, with pockets of mass separated by areas of low densities.
This framework is known as the \emph{manifold hypothesis} \citep[see, e.g.,][]{tenenbaumGlobalGeometricFramework2000,bengioRepresentationLearningReview2013,feffermanTestingManifoldHypothesis2016}.

\medskip
\textbf{Contributions.} In this work, we make the following contributions: 
\begin{itemize}
    \item We show that half-denoising is better for regular enough densities,  in the sense that the distance  (in MMD -- maximum mean discrepancy -- and in Wasserstein-$2$ distance) between the original distribution and the denoised distribution is of order $O(\sigma^4)$, compared to $O(\sigma^2)$ for full-denoising. We thus formalize and extend the scaling in $O(\sigma^4)$ obtained by \citet{hyvarinenNoisecorrectedLangevinAlgorithm2025}. On the contrary, full-denoising is better for singular distributions such as Dirac measures or Gaussian with small variance compared to the additional noise (section \ref{sct:dist}).
    \item When the variable is supported on a lower-dimensional subspace, we show that there is a trade-off between full-denoising which reduces the Wasserstein distance by ensuring that the output belongs to the subspace, and half-denoising that reduces the Wasserstein distance on the lower-dimensional subspace (assuming a regular enough density). Moreover, in the case where the subspace is of small enough dimension compared to the full space, we show that full-denoising is adaptive to this low dimensional structure and thus alleviates the curse of dimensionality as the Wasserstein distance only depends on the distance between distributions on the lower-dimensional subspace (section \ref{sct:subspace}).
    \item We show that the denoising performance for a mixture of distributions with disjoint compact supports behaves as if we were denoising each variable independently, plus an exponentially decreasing term (section \ref{sct:mixt}).
    \item Finally, combining these results, we show that for a linear version of the manifold hypothesis, where the data distribution is supported on disjoint compact sets, each of these belonging to a (different) linear subspace of low dimension, full-denoising can alleviate the curse of dimensionality even if the support of the distribution itself spans the whole space as it adapts to the local linear structure of the distribution.
\end{itemize}

\section{Notations}
We introduce the following notations:
\begin{itemize}
    \item  For $\alpha \in \R$, we denote $\varphi_\alpha(y) = y +\alpha \sigma^2 \nabla \log p_Y(y)$, such that $\alpha = 1/2$ corresponds to half-denoising, $\alpha = 1$ to full-denoising (and $\alpha = 0$ to no denoising at all).
    \item $\Vert \cdot\Vert$ the euclidean norm on $\R^d$, $B(x,r)$ the Euclidean ball of center $x$ and radius $r$.
    \item $\N(\mu,\Sigma)$ the multivariate Gaussian distribution of mean $\mu$ and covariance matrix $\Sigma$.
    \item For a random variable $Z\in \R^n$, we write $\L(Z)$ its law, and, when it exists, $p_Z$ its density with respect to the Lebesgue measure. We write $Z \sim \mu$ if $\L(Z) = \mu$, $Z_1 \sim Z_2$ if  $\L(Z_1) = \L(Z_2)$ and $Z_1\bot Z_2$ if $Z_1$ and $Z_2$ are independent. We also denote, for $\xi \in \R^n$, $\hat{p}_Z(\xi)= \E[e^{i \xi\cdot Z}]$ the characteristic function of $Z$.
    \item For $p\geq 1$, we define the $p$-Wasserstein distance between $\L(Z_1)$ and $\L(Z_2)$ as
    $$
    W_p(\L(Z_1),\L(Z_2)) = \op\inf_{\Gamma: \Gamma_{z_1} = \L(Z_1), \Gamma_{z_2} = \L(Z_2)}\int \Vert z_1 - z_2\Vert^p d\Gamma(z_1,z_2)\cp^{1/p},
    $$
    where $\{\Gamma: \Gamma_{z_1} = \L(Z_1), \Gamma_{z_2} = \L(Z_2)\}$ is the set of distributions on $\R^d\times\R^d$ with marginals $\L(Z_1)$ and $\L(Z_2)$ \citep[see, e.g.,][]{peyreComputationalOptimalTransport2019}.
    \item For a reproducing kernel Hilbert space $\H$, with kernel $k : \R^d\times\R^d \mapsto \R$, we define the maximum mean discrepancy (MMD) \citep{grettonKernelTwosampleTest2012} between $\L(Z_1)$ and $\L(Z_2)$ as
    $$
    \MMD_k(\L(Z_1),\L(Z_2)) = \sup_{f\in \H, \Vert f\Vert_\H \leq 1} \op\E[f(Z_1)] - \E[f(Z_2)]\cp.
    $$
    \item For $k \in \{0,1,..\}\cup\{\infty\}$, $d_1,d_2$ integers, we write $\C^k(\R^{d_1},\R^{d_2})$ the functions from $\R^{d_1}$ to $\R^{d_2}$  with $k$ continuous derivatives. If $d_2 =1$, we simply write $\C^k(\R^{d_1})$.
    \item For a linear operator, $A: \R^{d_1}\rightarrow\R^{d_2}$, we write $\Vert A\Vert_\text{op}$ the operator norm of $A$, defined by $\Vert A \Vert_\text{op} = \sup_{x\neq 0}\frac{\Vert Ax\Vert}{\Vert x \Vert}$.
    \item $\nabla$ the gradient operator, $\nabla^2$ the Hessian operator, $\nabla\cdot$ the divergence operator and $\Delta$ the Laplacian, that will always be taken with respect to the space variable $x\in \R^d$.
\end{itemize}

\section{Half-Denoising is Better for Regular Densities}
\label{sct:dist}
In this section, we show that half-denoising is better for regular enough densities. We start by studying Gaussian variables, for which we have closed-form expressions of the various Wasserstein distances (section \ref{sct:gaussian}). It provides insights into the behavior of the denoiser $\varphi_\alpha$, but will also show that the bound we prove in the following section are tight in the dependence with respect to $\sigma$. 
Then in section \ref{sct:MMD}, inspired by \citet{hyvarinenNoisecorrectedLangevinAlgorithm2025}, we prove a bound on the distance between the characteristic functions, $\Vert \hat{p}_X(\xi) - \hat{p}_{\varphi_\alpha(Y)}(\xi)\Vert$, that translates to bounds in MMD between the initial and the denoised distribution, under regularity assumptions on $p_X$. 
Finally in section \ref{sct:wasserstein}, we prove similar bounds in Wasserstein distance, by making a link between half-denoising and one-step discretization of the diffusion ODE \citep{songScoreBasedGenerativeModeling2021}.

\subsection{Gaussian Variables}
\label{sct:gaussian}
If $X$ is a multivariate Gaussian distribution, we can diagonalize the covariance matrix of $X$ so that up to a rotation and a translation, $X \sim \N(0, \diag(\tau_1^2,\dots, \tau_n^2))$. Both the denoising and the $W_2$-distance can then be calculated coordinate by coordinate, so we can focus on studying Gaussian variables in $\R$.

If $X \sim \N(0,\tau^2)$, then $Y \sim \N(0,\tau^2+\sigma^2)$ and we can compute $\nabla \log p_Y(y) = \frac{-y}{\tau^2+\sigma^2}$.
It leads to
$$\varphi_\alpha(y) = \frac{\tau^2 + (1-\alpha)\sigma^2}{\tau^2+\sigma^2}y,$$
in particular,
$$
\varphi_1(y) = \frac{\tau^2}{\tau^2+\sigma^2}y, \quad
\varphi_{1/2}(y) = \frac{\tau^2 + (1/2)\sigma^2}{\tau^2+\sigma^2}y.
$$

The denoisers are linear transformations of $Y$, and their laws are Gaussian, given by
\begin{align*}
\varphi_\alpha(Y) &\sim \N\op 0,\frac{(\tau^2 + (1-\alpha)\sigma^2)^2}{\tau^2+\sigma^2}\cp,\\
\varphi_1(Y) &\sim \N\op0,\frac{\tau^4}{\tau^2+\sigma^2}\cp,\\
\varphi_{1/2}(Y) &\sim \N\op0,\frac{(\tau^2 + (1/2)\sigma^2)^2}{\tau^2+\sigma^2}\cp.
\end{align*}
For two Gaussian variables $Z_1 \sim \N(\mu_1,\sigma_1^2)$ and $Z_2 \sim \N(\mu_2,\sigma_2^2)$, the Wasserstein distance is given by \citep[see, e.g.,][]{peyreComputationalOptimalTransport2019}
$$
W_2^2(\L(Z_1),\L(Z_2)) = (\mu_1 - \mu_2)^2 + (\sigma_1 - \sigma_2)^2.
$$
We can therefore compute directly the Wasserstein distances,
\begin{equation}
\label{eq:wasserstein_gaussian}
\begin{array}{rll}
W_2(\L(X),\L(\varphi_\alpha(Y))) &= \Big|\tau - \frac{\tau^2 + (1-\alpha)\sigma^2}{\sqrt{\tau^2 + \sigma^2}}\Big| &= \tau \Big|1 - \frac{1 + (1-\alpha)\op\frac{\sigma}{\tau}\cp^2}{\sqrt{1 + \op\frac{\sigma}{\tau}\cp^2}}\Big|,\\
W_2(\L(X),\L(\varphi_1(Y))) &= \Big|\tau - \frac{\tau^2}{\sqrt{\tau^2 + \sigma^2}}\Big| &= \tau \Big|1 - \frac{1}{\sqrt{1 + \op\frac{\sigma}{\tau}\cp^2}}\Big|,\\
W_2(\L(X),\L(\varphi_{1/2}(Y))) &= \Big|\tau - \frac{\tau^2 + \frac{1}{2}\sigma^2}{\sqrt{\tau^2 + \sigma^2}}\Big| &= \tau \Big|1 - \frac{1 + \frac{1}{2}\op\frac{\sigma}{\tau}\cp^2}{\sqrt{1 + \op\frac{\sigma}{\tau}\cp^2}}\Big|.
\end{array}  
\end{equation}
In particular for $\frac{\sigma}{\tau} \ll 1$ (small noise), with an expansion in $\frac{\sigma}{\tau}$, we get that
$$
W_2(\L(X),\L(\varphi_1(Y))) \sim \frac{1}{2\tau} \sigma^2 \text{ and } W_2(\L(X),\L(\varphi_{1/2}(Y))) \sim \frac{1}{8\tau^3} \sigma^4,
$$
showing that half-denoising beats full-denoising in Wasserstein distance for small noises. In fact, when dividing by~$\tau$ the expression in (\ref{eq:wasserstein_gaussian}), we see that they only depend on the ratio~$\frac{\sigma}{\tau}$. These quantities are plotted in Figure~\ref{fig:wasserstein_gaussian}, where we observe the behavior in $\op\frac{\sigma}{\tau}\cp^2$ for full-denoisng and in $\op\frac{\sigma}{\tau}\cp^4$ for half-denoising, making half-denoising better for small noises, and we remark that it stays better up to $\frac{\sigma}{\tau} = \sqrt{8} \approx 2.83$.

Note, moreover, that for $\alpha = 0$ (no denoising), we have 
$$
W_2(\L(X),\L(\varphi_0(Y))) = W_2(\L(X),\L(Y)) = \tau \left|1 - \sqrt{1 + \op\frac{\sigma}{\tau}\cp^2} \right| \approx \frac{1}{2\tau} \sigma^2,
$$
that is, full-denoising is not better than no denoising at all!

On the contrary, for  $\frac{\sigma}{\tau} \gg 1$ (large noise), we have
$$
W_2(\L(X),\L(\varphi_1(Y))) \approx \tau \text{ and } W_2(\L(X),\L(\varphi_{1/2}(Y))) \approx \frac{1}{2}\sigma,
$$
\sloppy meaning that $W_2(\L(X),\L(\varphi_1(Y))) \ll W_2(\L(X),\L(\varphi_{1/2}(Y)))$. In fact $W_2(\L(X),\L(\varphi_1(Y))) = 0$ for a Dirac measure ($\tau = 0$) whereas $W_2(\L(X),\L(\varphi_{1/2}(Y))) = \frac{1}{2}\sigma$.

Finally, note that we can compute the optimal $\alpha$ for any $\tau$, as $\alpha = 1 + \frac{\tau^2}{\sigma^2} - \frac{\tau}{\sigma}(1 + \frac{\tau^2}{\sigma^2} )^{1/2} = \frac{\tau^2}{\sigma^2} (1 + \frac{\sigma^2}{\tau^2} - (1 + \frac{\sigma^2}{\tau^2})^{1/2})$. For $\frac{\sigma}{\tau} \ll 1$, we get $\alpha = 1/2 + O((\frac{\sigma}{\tau})^2)$ and for $\frac{\sigma}{\tau} \gg 1$, $\alpha = 1 - \frac{\tau}{\sigma}+ O((\frac{\tau}{\sigma})^2)$. We see that the first order terms do not depend on $\tau$, and corresponds either to half-denoising or to full-denoising.

\begin{figure}
    \centering
	\includegraphics[width=\linewidth]{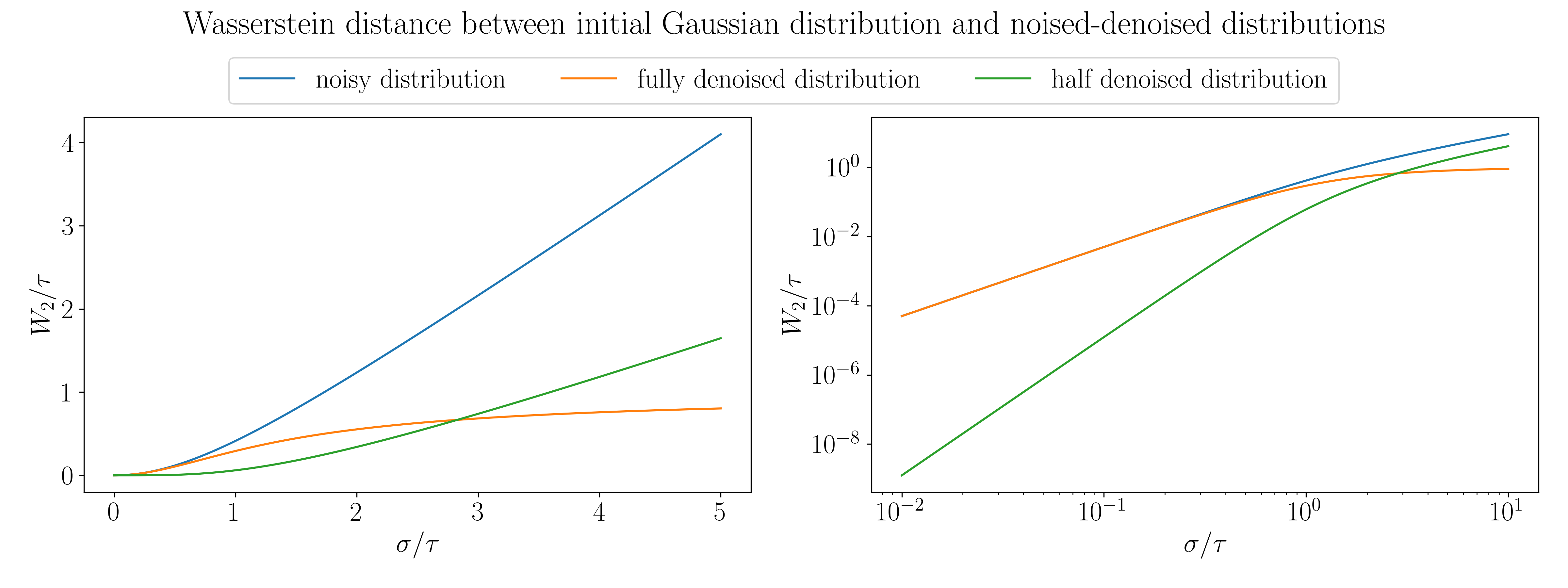}
    \vspace{-.7cm}
	\caption{Wasserstein distances for Gaussian distributions at different levels of noise. Left: linear scale, right: logarithmic scale.}
	\label{fig:wasserstein_gaussian}
\end{figure}

\subsection{Half-Denoising is Better in MMD for Variables with Smooth Densities}
\label{sct:MMD}
\citet{hyvarinenNoisecorrectedLangevinAlgorithm2025} shows that $\hat{p}_{\varphi_{1/2}(Y)}(\xi) = \hat{p}_X(\xi) + O(\sigma^4)$. The proof relies on an uniform bound of the score function $\sup_{x\in \R^d}\Vert \nabla\log p_X(x)\Vert < +\infty$ and does not keep track of all constants.
Based on the same idea, we propose a more complete statement, under a more general $L_2$ assumption, and an application to MMD.

\begin{proposition}
\label{prop:MMD}
Assume that $\E[\Vert \nabla \log p_X(X)\Vert^2]\leq C$. Then for all $\alpha\in\R$,
$$
\forall \xi \in \R^n, \Vert \hat{p}_X(\xi) - \hat{p}_{\varphi_{\alpha}(Y)}(\xi)\Vert \leq  \frac{\sigma^2 (2\alpha\sqrt{C}\Vert\xi\Vert + \Vert\xi\Vert^2)}{2},
$$
and furthermore, for $\alpha = \frac{1}{2}$,
$$
\forall \xi \in \R^n, \Vert \hat{p}_X(\xi) - \hat{p}_{\varphi_{1/2}(Y)}(\xi)\Vert \leq  \frac{\sigma^4 (C\Vert\xi\Vert^2 + \Vert\xi\Vert^4)}{8}.
$$
\end{proposition}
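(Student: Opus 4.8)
The plan is to combine the convolution structure $p_Y = p_X\ast\N(0,\sigma^2 I)$, a bound on the ``noisy score'' $s:=\nabla\log p_Y$ in terms of $C$, and an exact first-order expansion of $\hat{p}_{\varphi_\alpha(Y)}(\xi)$ whose leading correction is cancelled by the Gaussian smoothing factor precisely when $\alpha=1/2$. First I would record two facts. From $p_Y = p_X\ast\N(0,\sigma^2 I)$ and the integrability $\nabla p_X\in L^1$ — which follows from the hypothesis since $\int\Vert\nabla p_X\Vert = \int\Vert\nabla\log p_X\Vert\, p_X\le\sqrt C$ by Cauchy--Schwarz — one gets $\nabla p_Y=(\nabla p_X)\ast\N(0,\sigma^2 I)$, hence $s(y)=\E[\nabla\log p_X(X)\mid Y=y]$; conditional Jensen then gives $\E[\Vert s(Y)\Vert^2]\le\E[\Vert\nabla\log p_X(X)\Vert^2]\le C$, so in particular $\E[\Vert s(Y)\Vert]\le\sqrt C$ and $\E[(\xi\cdot s(Y))^2]\le\Vert\xi\Vert^2 C$. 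Also $\hat{p}_Y(\xi)=\hat{p}_X(\xi)e^{-\sigma^2\Vert\xi\Vert^2/2}$ and $\vert\hat{p}_X(\xi)\vert\le 1$. Recall finally that, by definition, $\varphi_\alpha(Y)=Y+\alpha\sigma^2 s(Y)$, so $e^{i\xi\cdot\varphi_\alpha(Y)}=e^{i\xi\cdot Y}e^{i\alpha\sigma^2\xi\cdot s(Y)}$.

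For the first (general $\alpha$) bound, I would split $\hat{p}_X(\xi)-\hat{p}_{\varphi_\alpha(Y)}(\xi)=\big(\hat{p}_X(\xi)-\hat{p}_Y(\xi)\big)+\big(\hat{p}_Y(\xi)-\hat{p}_{\varphi_\alpha(Y)}(\xi)\big)$. The first piece is bounded by $1-e^{-\sigma^2\Vert\xi\Vert^2/2}\le\sigma^2\Vert\xi\Vert^2/2$. For the second, $\hat{p}_Y(\xi)-\hat{p}_{\varphi_\alpha(Y)}(\xi)=\E\big[e^{i\xi\cdot Y}\big(1-e^{i\alpha\sigma^2\xi\cdot s(Y)}\big)\big]$, which, using $\vert 1-e^{ia}\vert\le\vert a\vert$ for real $a$ and then Cauchy--Schwarz, is at most $\alpha\sigma^2\Vert\xi\Vert\,\E[\Vert s(Y)\Vert]\le\alpha\sigma^2\Vert\xi\Vert\sqrt C$; adding the two pieces gives the result.

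For the refined bound at $\alpha=1/2$, the idea is to expand one order further: write $e^{ia}=1+ia+R(a)$ with $\vert R(a)\vert\le a^2/2$ and take $a=\tfrac12\sigma^2\xi\cdot s(Y)$, so that $\hat{p}_{\varphi_{1/2}(Y)}(\xi)=\hat{p}_Y(\xi)+\tfrac{i}{2}\sigma^2\,\E[e^{i\xi\cdot Y}\xi\cdot s(Y)]+\E[e^{i\xi\cdot Y}R(a)]$. The middle expectation is evaluated by integration by parts: since $\xi\cdot s(Y)\,p_Y=\xi\cdot\nabla p_Y$ with $\nabla p_Y\in L^1$, one finds $\E[e^{i\xi\cdot Y}\xi\cdot s(Y)]=\int e^{i\xi\cdot y}\,\xi\cdot\nabla p_Y(y)\,dy=-i\Vert\xi\Vert^2\hat{p}_Y(\xi)$. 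Substituting $\hat{p}_Y(\xi)=\hat{p}_X(\xi)e^{-t}$ with $t:=\sigma^2\Vert\xi\Vert^2/2$ yields $\hat{p}_{\varphi_{1/2}(Y)}(\xi)=\hat{p}_X(\xi)e^{-t}(1+t)+\E[e^{i\xi\cdot Y}R(a)]$. It then remains to use the elementary inequality $0\le 1-e^{-t}(1+t)\le t^2/2$ (since $\frac{d}{du}[e^{-u}(1+u)]=-ue^{-u}$, so $1-e^{-t}(1+t)=\int_0^t ue^{-u}\,du\le t^2/2$), which gives $\vert\hat{p}_X(\xi)\,(1-e^{-t}(1+t))\vert\le t^2/2=\sigma^4\Vert\xi\Vert^4/8$, and to bound $\vert\E[e^{i\xi\cdot Y}R(a)]\vert\le\tfrac12\E[a^2]=\tfrac18\sigma^4\E[(\xi\cdot s(Y))^2]\le\sigma^4\Vert\xi\Vert^2 C/8$; the sum is the claimed bound.

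The one genuinely delicate point — the ``hard part'' — is rigor in the two analytic manipulations: differentiating under the convolution to obtain $\nabla p_Y=(\nabla p_X)\ast\N(0,\sigma^2 I)$ and $s(y)=\E[\nabla\log p_X(X)\mid Y=y]$, and integrating by parts against $e^{i\xi\cdot y}$ to identify $\E[e^{i\xi\cdot Y}\xi\cdot s(Y)]$ with $-i\Vert\xi\Vert^2\hat{p}_Y(\xi)$. Both are legitimate because $\nabla p_X\in L^1$ (equivalently, $X$ has finite Fisher information), which is exactly the ingredient missing from the heuristic argument of \citet{hyvarinenNoisecorrectedLangevinAlgorithm2024} and the reason the second-moment hypothesis on $\nabla\log p_X(X)$ appears; alternatively one can sidestep the $y$-integration by parts by writing $\xi\cdot s(Y)=\sigma^{-2}(\xi\cdot\E[X\mid Y]-\xi\cdot Y)$ via Tweedie's formula (\ref{eq:tweedie}) and using $\hat{p}_Y=\hat{p}_X\hat{p}_\ep$ together with differentiation of characteristic functions, which only requires $\E\Vert X\Vert<\infty$.
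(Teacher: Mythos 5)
Your proof is correct and follows essentially the same route as the paper's: the Jensen transfer of the score bound from $p_X$ to $p_Y$, the factorization $\hat{p}_Y(\xi)=\hat{p}_X(\xi)e^{-\sigma^2\Vert\xi\Vert^2/2}$, the first-order expansion of $e^{i\alpha\sigma^2\xi\cdot\nabla\log p_Y(Y)}$ with quadratic remainder, and the integration-by-parts identity that converts the linear score term into $\Vert\xi\Vert^2\hat{p}_Y(\xi)$, with the same two error terms $\sigma^4 C\Vert\xi\Vert^2/8$ and $\sigma^4\Vert\xi\Vert^4/8$ (resp.\ $\alpha\sigma^2\sqrt{C}\Vert\xi\Vert$ and $\sigma^2\Vert\xi\Vert^2/2$ for general $\alpha$). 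Your added justification that $\nabla p_X\in L^1$ (via Cauchy--Schwarz from the hypothesis) legitimizes the integration by parts is a nice touch of rigor but does not change the argument.
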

(All proofs can be found in Appendix \ref{sct:proofs}.)

\textbf{Remark:} The hypothesis on the score ($\E[\Vert \nabla \log p_X(X)\Vert^2]\leq C$) \footnote{We can found a similar hypothesis in Assumption 2.5 of \citet{albergoStochasticInterpolantsUnifying2023}. Note that it's also nearly identical to the hypothesis H2 of \citet{confortiKLConvergenceGuarantees2025}, the difference being that the authors take the density with respect to the Gaussian measure rather than the Lebesgue measure. H2 implies $\E[\Vert \nabla \log p_X(X)\Vert^2]\leq C$, but the converse is not true in general.} is natural in the framework of \emph{denoising score matching}, where we use a neural network to learn the score $\log p_Y$. Indeed, \citet{vincentConnectionScoreMatching2011} showed the learning the score with an $L_2$-error $\min_\theta \E[\Vert s_\theta(Y) - \nabla \log p_Y(Y)\Vert^2]$ is equivalent to the denoising objective  $\E [ \Vert X - f_\theta(Y)\Vert^2]$, the latter being used in practice to learn the score. Therefore, as we are learning with an $L_2$-error, it is natural to ask for the $L_2$-bound $\E[\Vert \nabla \log p_Y(Y)\Vert^2]\leq C$. Imposing the bound on $p_X$, $\E[\Vert \nabla \log p_X(X)\Vert^2]\leq C$, allows us to have the bound on $p_Y$ regardless of the level of the added noise $\sigma$ as $\E[\Vert \nabla \log p_X(X)\Vert^2]\leq\E[\Vert \nabla \log p_Y(Y)\Vert^2]$ (Lemma \ref{lemma:expectation_gradient_noise} in Appendix \ref{sct:proofs}). 
It can also be deduced from other hypotheses made in the literature.
For example:
\begin{itemize} 
\begin{samepage}
\item If $X = Z + \ep_0$ , with  $Z \bot \ep_0$ and $\ep_0 \sim \N(0,\tau^2)$,  and $\E[\Vert Z\Vert^2] \leq R^2$ (in particular if $\text{supp}(Z) \subset B(0,R)$ as assumed in Theorem 1 of \cite{saremiChainLogConcaveMarkov2023}), we have  $\E[\Vert \nabla \log p_X(X)\Vert^2]\leq \frac{3(2R^2 + d\tau^2)}{\tau^4}$. Indeed, with Tweedie's formula (\ref{eq:tweedie}),
\end{samepage}
\begin{align*}
\E[\Vert \nabla \log p_X(X)\Vert^2] &= \E\ob\left\Vert\frac{1}{\tau^2}(\E [ Z| X] - X )\right\Vert^2\cb
= \E\ob\left\Vert\frac{1}{\tau^2}(\E [ Z| X] - Z - \ep_0)\right\Vert^2\cb\\
&\leq \frac{3}{\tau^4} (\E[\Vert\E [ Z| X]\Vert^2] + \E[\Vert Z\Vert^2] + \E[\Vert \ep_0\Vert^2]) \quad\text{(Jensen's inequality)}  \\
&\leq \frac{3}{\tau^4} (\E[\E[\Vert Z\Vert^2| X]] + \E[\Vert Z\Vert^2] + d\tau^2)\\
&\quad\text{(Jensen's inequality on conditional expectation)} \\
&= \frac{3}{\tau^4} (2\E[\Vert Z\Vert^2] + d\tau^2) 
\leq \frac{3}{\tau^4} (2R^2 + d\tau^2).
\end{align*}
\item If $X$ is such that $x \mapsto \nabla \log p_X(x)$ is $L$-Lipschitz and $\E[\Vert X\Vert^2] = m_2 < \infty$ (cf. assumptions A1 and A2 of \cite{chenSamplingEasyLearning2023}), we have $\E[\Vert \nabla \log p_X(X)\Vert^2]\leq 2(L^2m_2 + \Vert  \nabla \log p_X(0)\Vert^2)$.
\end{itemize}

These bounds on the characteristic functions lead to bounds in MMD \citep{grettonKernelTwosampleTest2012} for a translation-invariant kernel. 
We assume that we are given a kernel $k$ to compute a MMD, and that $k$ is translation-invariant, i.e., $k(x,y) = \psi(x - y)$, with $\psi: \R^n \rightarrow \R$ a bounded, continuous positive definite function. From Bochner's theorem, there is a unique finite nonnegative Borel measure $\Lambda$ on $\R^n$ such that
$$
\psi(x) = \int e^{-i \xi\cdot x} d\Lambda(\xi).
$$
Then, for two random variables $X$ and $Y$, we have \citep[][Corollary~4]{sriperumbudurHilbertSpaceEmbeddings2010}
$$
\MMD_k(\L(X),\L(Y)) = \op \int |\hat{p}_X(\xi) - \hat{p}_Y(\xi)|^2 d\Lambda(\xi) \cp^{1/2}.
$$

\begin{corollary}
\label{cor:MMD}
Assume that, $\E[\Vert \nabla \log p_X(X)\Vert^2]\leq C$, and, 
$$
C_2 = \int \Vert\xi\Vert^2 d\Lambda(\xi) < \infty,
\quad\quad
C_4 = \int \Vert\xi\Vert^4 d\Lambda(\xi) < \infty,
\quad\text{and }\quad
C_8 = \int \Vert\xi\Vert^8 d\Lambda(\xi) < \infty.
$$
Then, for all $\alpha \in \R$,
$$
\MMD_k(\L(X),\L(\varphi_{\alpha}(Y))) \leq K_1 \sigma^2,
$$
with $K_1=  \frac{\sqrt{4\alpha^2C C_2 + C_4}}{\sqrt{2}}$.

Furthermore, for $\alpha = \frac{1}{2}$,
$$
\MMD_k(\L(X),\L(\varphi_{1/2}(Y))) \leq K_2 \sigma^4,
$$
with $K_2 = \frac{\sqrt{C^2 C_4 + C_8}}{4\sqrt{2}}.$
\end{corollary}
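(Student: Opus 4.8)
The plan is to feed the pointwise bounds of Proposition~\ref{prop:MMD} into the spectral formula for the MMD recalled just before the statement, namely $\MMD_k(X,\varphi_\alpha(Y))^2 = \int |\hat{p}_X(\xi) - \hat{p}_{\varphi_\alpha(Y)}(\xi)|^2\, d\Lambda(\xi)$, where $\Lambda$ is the finite nonnegative measure supplied by Bochner's theorem. The hypothesis $\E[\Vert\nabla\log p_X(X)\Vert^2]\le C$ assumed here is exactly the one required by Proposition~\ref{prop:MMD}, so its bounds on $|\hat{p}_X(\xi) - \hat{p}_{\varphi_\alpha(Y)}(\xi)|$ apply and can be integrated against $\Lambda$.

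For general $\alpha$, squaring the first bound of Proposition~\ref{prop:MMD} gives $|\hat{p}_X(\xi) - \hat{p}_{\varphi_\alpha(Y)}(\xi)|^2 \le \tfrac{\sigma^4}{4}(2\alpha\sqrt{C}\Vert\xi\Vert + \Vert\xi\Vert^2)^2$. To avoid the cross term in $\Vert\xi\Vert^3$ — which would otherwise force an assumption on the fifth moment of $\Lambda$ — I would apply $(a+b)^2\le 2a^2+2b^2$ with $a = 2\alpha\sqrt{C}\Vert\xi\Vert$, $b=\Vert\xi\Vert^2$, obtaining $|\hat{p}_X(\xi) - \hat{p}_{\varphi_\alpha(Y)}(\xi)|^2 \le \sigma^4\big(2\alpha^2 C\Vert\xi\Vert^2 + \tfrac12\Vert\xi\Vert^4\big)$. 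Integrating against $\Lambda$ and using the definitions of $C_2$ and $C_4$ yields $\MMD_k(X,\varphi_\alpha(Y))^2 \le \tfrac{\sigma^4}{2}(4\alpha^2 C C_2 + C_4)$, and taking the square root gives $K_1 = \tfrac{\sqrt{4\alpha^2 C C_2 + C_4}}{\sqrt{2}}$.

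The case $\alpha=\tfrac12$ is structurally identical, starting from the sharper $O(\sigma^4)$ bound: squaring gives $|\hat{p}_X(\xi) - \hat{p}_{\varphi_{1/2}(Y)}(\xi)|^2 \le \tfrac{\sigma^8}{64}(C\Vert\xi\Vert^2 + \Vert\xi\Vert^4)^2$, then $(C\Vert\xi\Vert^2+\Vert\xi\Vert^4)^2\le 2C^2\Vert\xi\Vert^4 + 2\Vert\xi\Vert^8$, and integration against $\Lambda$ using $C_4$ and $C_8$ gives $\MMD_k(X,\varphi_{1/2}(Y))^2 \le \tfrac{\sigma^8}{32}(C^2 C_4 + C_8)$, hence $K_2 = \tfrac{\sqrt{C^2 C_4 + C_8}}{4\sqrt{2}}$ (using $\sqrt{32}=4\sqrt{2}$).

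There is no genuinely hard step: all the analytic content sits in Proposition~\ref{prop:MMD}. The only points requiring a little care are (i) ensuring every term appearing after squaring is $\Lambda$-integrable, which is exactly what the three moment conditions $C_2,C_4,C_8<\infty$ guarantee (and one could in fact drop $C_2$ and $C_4$, since a finite measure with finite eighth moment automatically has finite second and fourth moments by Hölder's inequality), and (ii) using the split $(a+b)^2\le 2a^2+2b^2$ rather than expanding the square, so that no odd moments of $\Lambda$ — which were not assumed finite — are introduced. These are the only design choices in the argument.
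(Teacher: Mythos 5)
Your proof is correct and follows essentially the same route as the paper: plug the pointwise bounds of Proposition~\ref{prop:MMD} into the spectral formula $\MMD_k^2 = \int |\hat{p}_X - \hat{p}_{\varphi_\alpha(Y)}|^2\, d\Lambda$, use $(a+b)^2 \le 2a^2 + 2b^2$ to avoid odd moments of $\Lambda$, and integrate; the constants $K_1$ and $K_2$ come out identically. Your side remark that $C_2, C_4 < \infty$ already follows from $C_8 < \infty$ and the finiteness of $\Lambda$ is also accurate, though the paper keeps them as explicit assumptions since they appear in the constants.
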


The first result applies for $\alpha = 1$, hence we can compare the bound that we get for full-denoising (first result) to the one we get for half-denoising (second result).
It shows that for regular enough densities ($\E[\Vert \nabla \log p_X(X)\Vert^2]\leq C$), we have $\MMD_k(\L(X),\L(\varphi_{1/2}(Y))) = O(\sigma^4)$, whereas $\MMD_k(\L(X),\L(\varphi_{1}(Y))) = O(\sigma^2)$ and hence the bound on $\MMD_k(\L(X),\L(\varphi_{1/2}(Y)))$ is negligible compared to the bound on $\MMD_k(\L(X),\L(\varphi_{1}(Y)))$ for small $\sigma$, therefore extending the result seen above for Gaussian distributions.
Moreover, the bound $\MMD_k(\L(X),\L(\varphi_{\alpha}(Y))) = O(\sigma^2)$ also applies for $\alpha = 0$, hence as in the Gaussian case, full-denoising does not do better than no-denoising for small $\sigma$.

\subsection{Half-Denoising is Better in Wasserstein-2 for Variables with Smooth Densities}
\label{sct:wasserstein}
We now prove similar bounds in Wasserstein distance. To do so, we introduce a continuous diffusion process, progressively adding Gaussian noise to $X$ with a Brownian motion, and the diffusion ODE, which generates the same marginals with a deterministic process.
This deterministic process, sometimes referred to as the probability flow ODE \citep{songScoreBasedGenerativeModeling2021}, has been used as a way to have deterministic generation with diffusion models. Here, we will use the fact that half-denoising can be seen as a one step discretization of this ODE. We give a complete proof of the construction of this ODE in Appendix \ref{sct:fokker}. Here is a brief overview.

We define a process $X_t = X + B_t$, with $B_t$ a Brownian motion, such that we have $X = X_0$ and $Y = X_{\sigma^2}$. We also denote $p_t = p_{X_t}$ the density of $X_t$ with respect to the Lebesgue measure. $p_t$ verifies the Fokker-Planck equation $\partial p_t = \Delta p_t$, which can be rewritten as $\partial p_t = - \nabla\cdot (- p_t\nabla\log p_t)$. We deduce that we can then defined the following ODE:
\begin{equation}
\label{eq:ODE}
\left\{\begin{array}{rll}
    \frac{d x_t}{dt} &= -\frac{1}{2} \nabla \log p_t(x_t) &\text{for } t > 0\\
    x_{t^*} &= X_{t^*} &\text{for some } t^* > 0,
\end{array}\right.
\end{equation}
and that it has the same marginals as $X_t$:  $\forall t \in [0,+\infty[, x_t \sim X_t$, and verifies, for all $t,s \geq 0$,
$$
x_t - x_s = -\frac{1}{2}\int_s^t \nabla \log p_u(x_u) du. 
$$
Note here that $\nabla \log p_t$ is not, in general, Lipschitz-continuous near $t=0$.\footnote{This should not come as a surprise, as if we take $\mu_X$ to be a Dirac at $0$, then all trajectories will coincide at $t=0$, which is prohibited by Cauchy-Lipschitz Theorem.}
That's why we take an initial condition\footnote{The choice of $t^*$ does not matter as for any $t^*>0$, we will have the same marginals for $(x_t)_{t\geq0}$, and in particular it will give a path between $x_{\sigma^2}\sim Y$ and  $x_{0}\sim X$.} at $t^*>0$. However, we verify that the trajectory can in fact be integrated up to $t=0$ (see Appendix \ref{sct:fokker} for more details).

As \citet{gentiloni-silveriLogConcavityScoreRegularity2025},\footnote{\citet{gentiloni-silveriLogConcavityScoreRegularity2025} compute Wasserstein distance for the diffusion SDE with a multiple step discretization.} we use the continuous time process $x_t$, and its one step discretization, to get natural couplings between distribution of $X$ and $\varphi_\alpha(Y)$ and compute Wasserstein distances. We have $X \sim x_0$ and $\varphi_\alpha(Y) \sim\hat{x}_0$, with $\hat{x}_0 = x_t + \alpha t \nabla \log p_t(x_t)$ and $t=\sigma^2$. This leads to the bound $W_2^2(\L(X),\L(\varphi_\alpha(Y))) \leq \E[\Vert x_0 - \hat{x}_0\Vert^2]$, with
$$
x_0 - \hat{x}_0 = \frac{1}{2}\int_0^t \nabla \log p_s(x_s) ds - \alpha t \nabla \log p_t(x_t).
$$
To conclude, we need to be able to bound $\nabla\log p_s(x_s)$ for $s \in [0,t]$, and to do that, we only need to have a bound on $\nabla \log p_X(X)$. Formally, we have the following result:
\begin{proposition}
\label{prop:W2}
Assume that $\E[\Vert \nabla \log p_X(X)\Vert^2]\leq C$. Then
$$
W_2 (\L(X),\L(\varphi_\alpha(Y))) \leq \sqrt{\frac{(1+4\alpha^2)C}{2}} \sigma^2.
$$
\end{proposition}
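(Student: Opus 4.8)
The plan is to use the coupling between $X$ and $\varphi_\alpha(Y)$ furnished by the probability-flow ODE (\ref{eq:ODE}), exactly as outlined just above the statement. Fix $t=\sigma^2$ and run the ODE backward from the initial condition $x_t = X_t = Y$; by the construction of Appendix~\ref{sct:fokker}, this produces a trajectory $(x_s)_{s\in[0,t]}$ with $x_s\sim X_s$ for every $s$, which extends continuously to $s=0$ with $x_0\sim X$, and satisfies $x_0 = x_t + \tfrac12\int_0^t\nabla\log p_s(x_s)\,ds$. Since $\hat{x}_0 := x_t + \alpha t\,\nabla\log p_t(x_t)$ is $\varphi_\alpha$ applied to $x_t\stackrel{d}{=}Y$, it has law $\mu_{\varphi_\alpha(Y)}$, and $(x_0,\hat{x}_0)$ is built on the same randomness, so the pair is a valid coupling of $\mu_X$ and $\mu_{\varphi_\alpha(Y)}$; hence
$$
W_2^2(X,\varphi_\alpha(Y)) \;\le\; \E\big[\Vert x_0-\hat{x}_0\Vert^2\big],\qquad x_0-\hat{x}_0 \;=\; \tfrac12\int_0^t\nabla\log p_s(x_s)\,ds \;-\; \alpha t\,\nabla\log p_t(x_t).
$$

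The remaining work is a purely $L^2$ estimate of the right-hand side. I would split it with $\Vert a+b\Vert^2\le 2\Vert a\Vert^2+2\Vert b\Vert^2$ into the integral contribution and the endpoint contribution, control the integral term by Cauchy--Schwarz (Jensen) in time, $\Vert\int_0^t f_s\,ds\Vert^2\le t\int_0^t\Vert f_s\Vert^2\,ds$, and take expectations, which (after Fubini) gives
$$
\E\big[\Vert x_0-\hat{x}_0\Vert^2\big] \;\le\; \frac{t}{2}\int_0^t \E\big[\Vert\nabla\log p_s(x_s)\Vert^2\big]\,ds \;+\; 2\alpha^2 t^2\,\E\big[\Vert\nabla\log p_t(x_t)\Vert^2\big].
$$

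The key input is a uniform bound $\E[\Vert\nabla\log p_s(x_s)\Vert^2]\le C$ for all $s\in[0,t]$, and this is where the hypothesis enters. Since $x_s\sim X_s$ and $X_s = X+B_s$ with $B_s\sim\N(0,sI)$ independent of $X$, the variable $X_s$ is exactly of the form ``signal plus Gaussian noise'' at level $\sqrt{s}$, so Lemma~\ref{lemma:expectation_gradient_noise} applies and gives $\E[\Vert\nabla\log p_s(x_s)\Vert^2] = \E[\Vert\nabla\log p_{X_s}(X_s)\Vert^2]\le \E[\Vert\nabla\log p_X(X)\Vert^2]\le C$ (with the first two quantities equal when $s=0$). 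Substituting, the first term is $\le \tfrac{t}{2}\cdot tC$ and the second $\le 2\alpha^2 t^2C$, so $\E[\Vert x_0-\hat{x}_0\Vert^2]\le \tfrac{(1+4\alpha^2)C}{2}\,t^2$; taking the square root and setting $t=\sigma^2$ yields the claimed bound.

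I expect the only genuinely delicate points to be those deferred to Appendix~\ref{sct:fokker}: that the flow of (\ref{eq:ODE}) really has marginals $\mu_{X_s}$ and extends continuously down to $s=0$ despite $\nabla\log p_s$ not being Lipschitz near $0$, together with the measurability and integrability needed to exchange $\E$ and $\int_0^t$ — all of which become routine once the $L^2$ bound from Lemma~\ref{lemma:expectation_gradient_noise} is in hand. (One could shave the constant slightly: using Minkowski's integral inequality in place of the $2\Vert a\Vert^2+2\Vert b\Vert^2$ split gives $(\tfrac12+\alpha)^2$ instead of $\tfrac{1+4\alpha^2}{2}$, but the two agree at $\alpha=\tfrac12$, which is the case of interest, and the stated form is the cleaner one.)
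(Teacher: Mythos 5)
Your proof is correct and follows essentially the same route as the paper's: the ODE coupling $(x_0,\hat{x}_0)$, the split $\Vert a+b\Vert^2\le 2\Vert a\Vert^2+2\Vert b\Vert^2$ with Jensen in time, and the uniform bound $\E[\Vert\nabla\log p_s(x_s)\Vert^2]\le C$ from Lemma \ref{lemma:expectation_gradient_noise}, yielding the same constant $\frac{(1+4\alpha^2)C}{2}$. Your aside on the slightly sharper constant $(\tfrac12+\alpha)^2$ via the $L^2$ triangle inequality is a valid minor refinement but does not change the argument.
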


For $\alpha = 1$, this bound is already better (in $\sigma$) than the bound given by Proposition 2 of \citet{saremiChainLogConcaveMarkov2023} which is $W_2(\L(X),\L(\varphi_1(Y))) = O(\sigma)$. Moreover this bound is tight (in $\sigma$) as for a Gaussian variable with variance~$\tau^2$, we have $W_2(\L(X),\L(\varphi_1(Y))) \sim \frac{1}{2\tau}\sigma^2$ for small enough~$\sigma$.
Note that if we remove the assumption (i.e., $C = +\infty$), then we fall back on the bound by \citet{saremiChainLogConcaveMarkov2023} (take for example $X$ a Dirac measure, for which we have seen that $ W_2(\L(X),\L(\varphi_{1/2}(Y))) = \frac{1}{2}\sigma$).

But for $\alpha = \frac{1}{2}$ we can hope to have a better bound, in $\sigma^4$, as it was the case with the MMD. Indeed, we have
\begin{align*}
x_0 - \hat{x}_0 
&= \frac{1}{2}\int_0^t \nabla \log p_s(x_s) ds - \frac{1}{2} t \nabla \log p_t(x_t) \\
&= \frac{1}{2}\int_0^t \op\nabla \log p_s(x_s) - \nabla \log p_t(x_t)\cp ds \\
&= \frac{1}{2}\int_0^t \int_s^t \frac{d}{du}\op\nabla \log p_u(x_u) \cp du ds.
\end{align*}
If we can control $\E[\Vert \frac{d}{dt}\nabla \log p_{t}(x_{t}) \Vert^2] \leq C$, we will get $\E[\Vert x_0 - x_t\Vert^2] = O(t^4)$ hence $W_2(\L(X),\L(\varphi_{1/2}(Y))) = O(\sigma^4)$. Formally, we need the following lemma:

\begin{lemma}
\label{lemma:derivative_bound}
Assume that the variable $X$ of density $p_X$ satisfies:
\begin{itemize}
    \item $\log p_X \in \mathcal{C}^3(\R^d)$. 
    \item $C_1 = \E[\Vert \nabla \log p_X(X)\Vert^6] < \infty$.
    \item $C_2 = \E[\Vert \nabla^2\log p_X(X)\Vert_{\textnormal{op}}^3] < \infty$, where $\Vert A\Vert_\textnormal{op}$ is defined for any matrix $A$ as $\Vert A \Vert_\textnormal{op} = \sup_{x\neq 0}\frac{\Vert Ax\Vert}{\Vert x \Vert}$.
    \item $C_3 = \E[\Vert \nabla \Delta \log p_X(X)\Vert^2] < \infty$.
\end{itemize}
Then,
$$
\E\ob\left\Vert \frac{d}{dt}\nabla \log p_t(x_t)\right\Vert^2\cb \leq C,
$$
with $C = \frac{9}{4}(4C_1 + (2d^2+5)C_1^{1/3}C_2^{2/3}+C_3)$.
\end{lemma}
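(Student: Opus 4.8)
The plan is to differentiate $t\mapsto\nabla\log p_t(x_t)$ along the ODE trajectory (\ref{eq:ODE}), rewrite the derivative as a combination of $\nabla\log p_t$, $\nabla^2\log p_t$ and $\nabla\Delta\log p_t$ evaluated at $x_t$, and then transfer the needed moment control from time $0$ to time $t$ using the fact that spatial derivatives of $\log p_t$ are posterior expectations of the corresponding derivatives of $\log p_X$. Since $p_t = p_X * \gamma_t$ (with $\gamma_t$ the density of $\N(0,tI)$) is $\C^\infty$ and strictly positive, $\log p_t$ is smooth in space, and the heat equation $\partial_t p_t = \tfrac12\Delta p_t$ gives $\partial_t\log p_t = \tfrac12(\Delta\log p_t + \Vert\nabla\log p_t\Vert^2)$, hence $\partial_t\nabla\log p_t = \nabla^2\log p_t\,\nabla\log p_t + \tfrac12\nabla\Delta\log p_t$. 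Combining this with the chain rule and $\dot x_t = -\tfrac12\nabla\log p_t(x_t)$ yields the identity $\frac{d}{dt}\nabla\log p_t(x_t) = \tfrac12\big(\nabla^2\log p_t(x_t)\,\nabla\log p_t(x_t) + \nabla\Delta\log p_t(x_t)\big)$. Using $x_t \sim X_t$, $\Vert Mv\Vert\le\Vert M\Vert_\mathrm{op}\Vert v\Vert$, and Hölder's inequality (exponents $3/2$ and $3$), it then suffices to bound $\E[\Vert\nabla\log p_t(X_t)\Vert^6]$, $\E[\Vert\nabla^2\log p_t(X_t)\Vert_\mathrm{op}^3]$ and $\E[\Vert\nabla\Delta\log p_t(X_t)\Vert^2]$, uniformly in $t$.

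Next I would set up the posterior representation. Writing $q_z$ for the conditional density of $X$ given $X_t=z$ (proportional to $p_X(x)\gamma_t(z-x)$), integration by parts gives $\tfrac{\nabla^k p_t}{p_t}(z) = \E\big[\tfrac{\nabla^k p_X}{p_X}(X)\,\big|\,X_t=z\big]$ whenever $\nabla^k p_X\in L^1$; expanding $\tfrac{\nabla^k p_X}{p_X}$ into derivatives of $\log p_X$ (Faà di Bruno) and converting back yields $\nabla\log p_t(z) = \E[\nabla\log p_X(X)\mid X_t=z]$, $\nabla^2\log p_t(z) = \E[\nabla^2\log p_X(X)\mid X_t=z] + \mathrm{Cov}(\nabla\log p_X(X)\mid X_t=z)$, and — after the contributions of $\E[\Delta\log p_X+\Vert\nabla\log p_X\Vert^2\mid X_t=z]$ and of $\Vert\E[\nabla\log p_X\mid X_t=z]\Vert^2$ collapse — a formula for $\nabla\Delta\log p_t(z)$ equal to $\E[\nabla\Delta\log p_X(X)\mid X_t=z]$ plus a conditional covariance of $(\Delta\log p_X,\nabla\log p_X)$, plus one of $(\nabla^2\log p_X,\nabla\log p_X)$, plus a conditional centered third moment of $\nabla\log p_X$. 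The crucial structural point is that $\nabla^3\log p_X$ enters only through its trace $\nabla\Delta\log p_X$, and correspondingly only $\Delta p_X,\nabla\Delta p_X\in L^1$ is needed — which follows from the hypotheses since $\lvert\Delta\log p_X\rvert\le d\Vert\nabla^2\log p_X\Vert_\mathrm{op}$ — so no bound on the full third derivative of $\log p_X$ is required.

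Then each of the three quantities is controlled by Jensen's inequality and the tower property: $\Vert\cdot\Vert^6$, $\Vert\cdot\Vert_\mathrm{op}^3$, $\Vert\cdot\Vert^2$ are convex and conditional expectation contracts them (for matrices, $\Vert\E[M\mid\cdot]\Vert_\mathrm{op}\le\E[\Vert M\Vert_\mathrm{op}\mid\cdot]$ since $M\mapsto\Vert M\Vert_\mathrm{op}$ is convex), giving $\E[\Vert\nabla\log p_t(X_t)\Vert^6]\le C_1$; the conditional covariance of $\nabla\log p_X$ is bounded in operator norm by its trace $\le\E[\Vert\nabla\log p_X(X)\Vert^2\mid\cdot]$, which combined with $(a+b)^3\le 4(a^3+b^3)$ yields $\E[\Vert\nabla^2\log p_t(X_t)\Vert_\mathrm{op}^3]\le 4(C_1+C_2)$; for $\nabla\Delta\log p_t$, each covariance/third-moment term is bounded by conditional moments of $\nabla\log p_X$ and $\nabla^2\log p_X$ via conditional Cauchy–Schwarz followed by Hölder over $X_t$, with the dimension appearing only through $\lvert\Delta\log p_X\rvert\le d\Vert\nabla^2\log p_X\Vert_\mathrm{op}$, producing the $d^2$. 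Collecting everything with elementary inequalities such as $(a+b)^2\le 2a^2+2b^2$ and tuning the numerical constants gives the announced $C=\tfrac94\big(4C_1+(2d^2+5)C_1^{1/3}C_2^{2/3}+C_3\big)$.

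The hard part will be the second step: deriving the exact closed form for $\nabla\Delta\log p_t$ and verifying the cancellations that leave $\nabla^3\log p_X$ present only through the traced quantity $\nabla\Delta\log p_X$ — this Faà di Bruno bookkeeping is precisely what makes the weak third-derivative hypothesis sufficient. A secondary, more routine difficulty is justifying the integrations by parts and differentiations under the integral sign under only the stated $\C^3$ and moment assumptions (for which it suffices to check $\nabla p_X,\nabla^2 p_X\in L^1$ and $\Delta p_X,\nabla\Delta p_X\in L^1$), together with the final tracking of constants.
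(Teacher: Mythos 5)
Your proposal follows essentially the same route as the paper's proof: the same identity $\frac{d}{dt}\nabla\log p_t(x_t)=\frac12\big(\nabla^2\log p_t\,\nabla\log p_t+\nabla\Delta\log p_t\big)(x_t)$ obtained from the chain rule, the ODE and the Fokker--Planck equation, the same posterior-expectation representations of $\nabla\log p_t$, $\nabla^2\log p_t$ and $\nabla\Delta\log p_t$ in terms of conditional expectations of derivatives of $\log p_X$ (the paper's nine-term expansion is exactly your conditional-expectation-plus-covariances-plus-centered-third-moment structure), and the same Jensen/H\"older machinery on conditional expectations. The only divergence is bookkeeping: the paper substitutes the posterior formulas first and bounds the nine resulting terms in $L_2$, which is precisely what yields the stated constant $\frac94\big(4C_1+(2d^2+5)C_1^{1/3}C_2^{2/3}+C_3\big)$, whereas your grouping (uniform-in-$t$ moment bounds for the three spatial derivatives, then H\"older at time $t$) gives a bound of the same form in $C_1$, $C_1^{1/3}C_2^{2/3}$, $C_3$ and $d^2$, but would not literally reproduce that numerical constant by mere tuning.
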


Using this lemma, we have the following proposition:
\begin{proposition}
\label{prop:W2half}
Under the assumptions of Lemma \ref{lemma:derivative_bound}, we have 
$$
W_2(\L(X),\L(\varphi_{1/2}(Y))) \leq K \sigma^4.
$$
with $K = \frac{\sqrt{3}}{4}\sqrt{4C_1 + (2d^2+5)C_1^{1/3}C_2^{2/3}+C_3}$.
\end{proposition}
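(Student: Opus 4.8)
The plan is to reuse the probability-flow ODE coupling from the proof of Proposition~\ref{prop:W2}: started from $x_{t^*} = X_{t^*}$, the trajectory of (\ref{eq:ODE}) satisfies $x_0 \sim X$ and, for $t = \sigma^2$, $\hat x_0 := x_t + \tfrac12 t\,\nabla\log p_t(x_t) \sim \varphi_{1/2}(Y)$ (since $x_t\sim Y$ and $t\,\nabla\log p_t = \sigma^2\nabla\log p_Y$). Hence $(x_0,\hat x_0)$ is an admissible coupling of $\mu_X$ and $\mu_{\varphi_{1/2}(Y)}$, so $W_2^2(X,\varphi_{1/2}(Y)) \le \E[\|x_0-\hat x_0\|^2]$. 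The whole point of $\alpha = \tfrac12$ is the cancellation already displayed just before Lemma~\ref{lemma:derivative_bound}: writing $g(u) := \tfrac{d}{du}\big(\nabla\log p_u(x_u)\big)$,
\begin{equation*}
\|x_0 - \hat x_0\| = \frac12\,\Big\|\int_0^t\!\!\int_s^t g(u)\,du\,ds\Big\|,
\end{equation*}
so that instead of integrating $\nabla\log p_s(x_s)$ itself (of size $O(1)$, which only gives $O(t)=O(\sigma^2)$ as in Proposition~\ref{prop:W2}) we integrate its time derivative over a triangle of area $t^2/2$, which is where the two extra powers of $\sigma$ will appear.

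First I would interchange the order of integration, $\int_0^t\!\int_s^t g(u)\,du\,ds = \int_0^t u\,g(u)\,du$, so that $\|x_0-\hat x_0\| = \tfrac12\big\|\int_0^t u\,g(u)\,du\big\|$. Cauchy--Schwarz applied to the scalar factor $u$ against the vector-valued $g$ then gives
\begin{equation*}
\|x_0 - \hat x_0\|^2 \;\le\; \frac14\Big(\int_0^t u^2\,du\Big)\Big(\int_0^t \|g(u)\|^2\,du\Big) \;=\; \frac{t^3}{12}\int_0^t \|g(u)\|^2\,du.
\end{equation*}
Taking expectations, swapping $\E$ and $\int_0^t$ by Tonelli, and applying Lemma~\ref{lemma:derivative_bound} at every time $u\in(0,t]$ --- its hypotheses and its constant $C$ are stated on $p_X$ only, hence hold uniformly in $u$ --- yields $\E[\|x_0-\hat x_0\|^2] \le \frac{t^3}{12}\cdot C\,t = \frac{C t^4}{12}$, i.e. $W_2(X,\varphi_{1/2}(Y)) \le \frac{\sqrt C}{2\sqrt3}\,\sigma^4$. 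Substituting $C = \tfrac94\big(4C_1 + (2d^2+5)C_1^{1/3}C_2^{2/3} + C_3\big)$ from the lemma and using $\frac{1}{2\sqrt3}\cdot\frac32 = \frac{\sqrt3}{4}$ gives exactly $K = \frac{\sqrt3}{4}\sqrt{4C_1 + (2d^2+5)C_1^{1/3}C_2^{2/3} + C_3}$.

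The genuinely hard part is Lemma~\ref{lemma:derivative_bound} itself, which I am allowed to assume; inside the proof of the proposition the only delicate points are (i) that the backward trajectory $u\mapsto x_u$ and the map $u\mapsto\nabla\log p_u(x_u)$ are differentiable and that $g$ is integrable all the way down to $u=0$, so that the fundamental-theorem-of-calculus manipulation leading to the first display is legitimate --- this rests on the extension of (\ref{eq:ODE}) to $t=0$ carried out in Appendix~\ref{sct:fokker} --- and (ii) that Lemma~\ref{lemma:derivative_bound} really furnishes $\E[\|g(u)\|^2]\le C$ simultaneously for all $u\in(0,t]$ with one $u$-independent constant, which is precisely the form of that statement. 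Everything else is the short chain Fubini $\to$ Cauchy--Schwarz $\to$ Tonelli $\to$ constant bookkeeping above.
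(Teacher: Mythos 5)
Your proof is correct and follows essentially the same route as the paper: the probability-flow ODE coupling, the cancellation at $\alpha=\tfrac12$ that turns $x_0-\hat x_0$ into a double integral of $\frac{d}{du}\nabla\log p_u(x_u)$, and Lemma~\ref{lemma:derivative_bound} applied uniformly in $u$. The only difference is cosmetic — you swap the order of integration and use Cauchy--Schwarz with the weight $u$, whereas the paper applies Jensen twice to the nested integrals — and both yield the identical bound $\E[\Vert x_0-\hat x_0\Vert^2]\leq \frac{C}{12}t^4$ and the same constant $K$.
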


\textbf{Remarks}:
\begin{itemize}
\begin{samepage}
    \item In Appendix \ref{sct:usual_distrib}, we show that if $X = Z + \ep_0$ , with  $\E[\Vert Z\Vert^6] <\infty$, $Z \bot \ep_0$ and $\ep_0 \sim \N(0,\tau^2)$, then $X$ verifies the assumptions of Lemma \ref{lemma:derivative_bound}. In particular, it applies to the case of $\text{supp}(Z) \subset B(0,R)$, assumed in Theorem 1 of \citet{saremiChainLogConcaveMarkov2023}, as then $\E[\Vert Z\Vert^6] \leq R^6$. It also proves that a mixture of Gaussian distributions verifies the hypothesis (take $\tau$ the smallest eigenvalue of all the covariances matrices of the Gaussian distributions in the mixture).
\end{samepage}
    \item The assumptions of Lemma \ref{lemma:derivative_bound} and Proposition \ref{prop:W2half} control the regularity of the density $p_X$. The fact that we need to bound derivative up to order 3 comes directly from the Fokker-Planck equation $\partial_tp_t = \Delta p_t$, which can be interpreted heuristically as ``one derivative in $t$ equals two derivatives in $x$'', hence to control $\frac{d}{dt}\nabla \log p_t(x_t)$, we need to control derivatives in $x$ up to order 3.
    \item The control of the Hessian $\Vert \nabla^2\log p_X(x)\Vert_{\textnormal{op}}$ corresponds to controlling the Lipschitz constant of the function $ x \mapsto  \nabla\log p_X(x)$ and is an assumption usually done in the literature \citep[see, e.g.,][assumption A1]{chenSamplingEasyLearning2023}. Having an uniform bound is a strong assumption, and in particular it implies that the distribution has full support on $\R^d$. But here we only need to control this quantity in expectation under the law of~$X$. As a consequence, this result can apply to distribution such that $ x \mapsto  \nabla\log p_X(x)$ is not globally Lipschitz-continuous, and moreover it does not even have to be defined everywhere. Take for example
    $$
    p_X(x) = \left\{\begin{array}{cl}
        \frac{1}{Z} e^{-\frac{1}{1-x^2}}& \text{if } |x| \leq 1\\
        0& \text{otherwise},
    \end{array}
    \right.
    $$
    where $Z$ is a normalizing constant (Fig. \ref{fig:density_compact}). Then $\log p_X(x)= -\frac{1}{1-x^2}$ is only defined on the interval $(-1,1)$ and we have
    \begin{align*}
        \nabla \log p_X(x) &= -\frac{2x}{\op1-x^2\cp^2}, \\
        \nabla^2 \log p_X(x) &= -\frac{2\op3x^2+1\cp}{\op1-x^2\cp^3},\\
        \nabla\Delta \log p_X(x) &= -\frac{24x\op x^2+1\cp}{\op1-x^2\cp^4},
    \end{align*}
    with none of these quantities being bounded on $(-1,1)$. However, our assumptions only require the bounds in expectation, and as the density $p_X$ decreases exponentially fast when $|x|\rightarrow 1$, overcoming the polynomial growth of the derivatives, all three constants $C_1$, $C_2$ and $C_3$ are finite.
    
    Note that as Propositions \ref{prop:W2} and \ref{prop:W2half} can be applied to distributions with compact support, they can be combined with Proposition \ref{prop:mixt} of section \ref{sct:mixt}.
    \item The assumptions are not verified for a very singular distribution, e.g., a Dirac $\mu_X = \delta_0$, for which $\log p_X$ is not even defined. In this case, we have $x_t \sim \N(0,tI)$ hence $\log p_t(x) = -\frac{\Vert x\Vert^2}{2t}$, $\nabla\log p_t(x) = -\frac{x}{t}$ and $\E[\Vert\nabla\log p_t(X) \Vert^2_\text{op}] = \frac{d}{t}$, which is not bounded, and not integrable near $0$. The result of Proposition \ref{prop:W2half} don't apply, and indeed we have (cf. section \ref{sct:gaussian}) $W_2(\L(X),\L(\varphi_{1/2}(Y))) = \frac{\sigma}{2}$ whereas ${W_2(\L(X),\L(\varphi_{1}(Y))) = 0}$.
    \item All results from this subsection can be extended to Wasserstein-$p$ distance for any $p \geq 1$ (see Appendix \ref{sct:extended_wasserstein}).
\end{itemize}

\begin{figure}
    \centering
	\includegraphics[width=.5\linewidth]{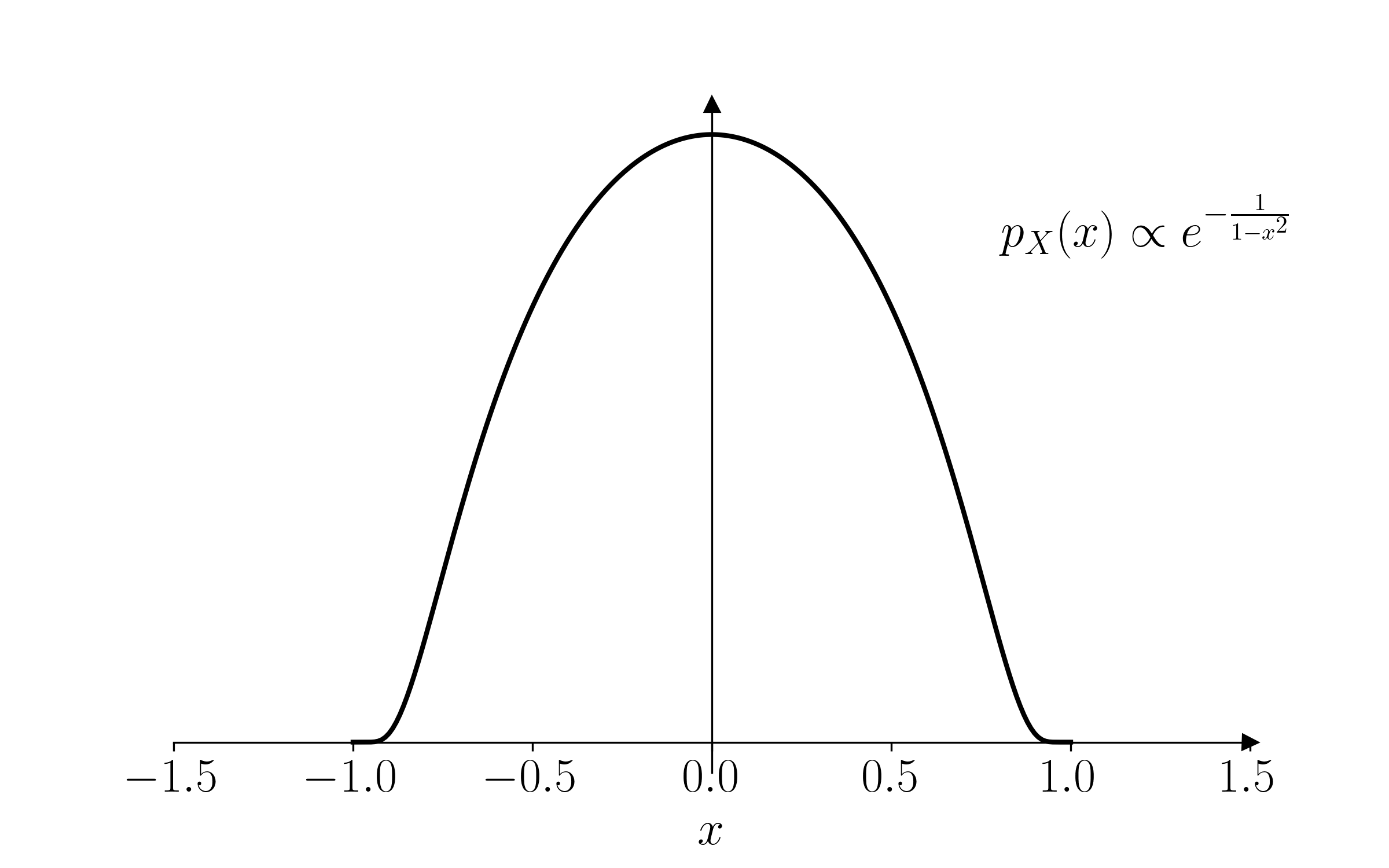}
	\caption{Example of a smooth density with a compact support.}
	\label{fig:density_compact}
\end{figure}

\section{Variable with Support on a Subspace: Balancing Between the Impacts of Singularity and Regularity of the Distribution}
\label{sct:subspace}

In many practical applications, the data distribution is supported on a lower dimensional manifold, a case known as the \emph{manifold hypothesis} \citep[see, e.g.,][]{tenenbaumGlobalGeometricFramework2000,bengioRepresentationLearningReview2013,feffermanTestingManifoldHypothesis2016}. 
Locally, this manifold will look like a linear space, therefore, we take a look at the idealized case when the variable is supported on a \emph{linear} lower-dimensional subspace.

\begin{proposition}
\label{prop:subspace}
Assume that $X$ is supported on a linear subspace $H$ of dimension $m$, with $m\leq d$. Write $X_1 = p_H(X)$, with $p_H$ the orthogonal projection on $H$, and $Y_1 = X_1 + \ep_1 \in H$ with  $\ep_1 \sim \N(0,\sigma^2I_m) \in H$. Then:
$$
W_2^2(\L(X),\L(\varphi_\alpha(Y))) = W_2^2(\L(X_1),\L(\varphi_\alpha(Y_1))) + (d-m)(1-\alpha)^2\sigma^2.
$$ 
\end{proposition}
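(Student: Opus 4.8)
The plan is to decompose $\R^d = H \oplus H^\perp$ and show the denoiser $\varphi_\alpha$ acts separately on the two components, with the $H$-component reproducing the lower-dimensional problem and the $H^\perp$-component being a pure Gaussian noise contraction. Write $Y = Y_1 + \ep_2$ where $Y_1 = p_H(Y) = X_1 + p_H(\ep)$ and $\ep_2 = p_{H^\perp}(\ep)$; since $\ep \sim \N(0,\sigma^2 I)$, the components $p_H(\ep)$ and $p_{H^\perp}(\ep)$ are independent, so $Y_1$ and $\ep_2$ are independent, $\ep_2 \sim \N(0,\sigma^2 I_{H^\perp})$, and $Y_1$ has exactly the law of $X_1 + \ep_1$ from the statement.

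The key step is to compute $\nabla \log p_Y$. Because $X$ is supported on $H$, the law of $Y$ factorizes as $\mu_{Y_1} \otimes \N(0,\sigma^2 I_{H^\perp})$ under the splitting $y = (y_H, y_{H^\perp})$, so $p_Y(y) = p_{Y_1}(y_H)\, g_{\sigma}(y_{H^\perp})$ where $g_\sigma$ is the Gaussian density on $H^\perp$. Hence $\nabla \log p_Y(y) = \nabla_{H}\log p_{Y_1}(y_H) - y_{H^\perp}/\sigma^2$, i.e., on $H$ it equals the score of $Y_1$ and on $H^\perp$ it is $-y_{H^\perp}/\sigma^2$. Therefore
$$
\varphi_\alpha(Y) = \bigl(\varphi_\alpha(Y_1),\; (1-\alpha)\,\ep_2\bigr)
$$
under the orthogonal splitting, where the first coordinate is the $m$-dimensional half/full-denoiser applied to $Y_1$, living in $H$, and the second is $(1-\alpha)\ep_2 \in H^\perp$.

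Finally, I assemble the Wasserstein distance. Since $X = (X_1, 0)$ lives in $H$ and $\varphi_\alpha(Y)$ splits as above with the $H$ and $H^\perp$ parts built from the independent pieces $Y_1$ and $\ep_2$, any coupling of $X$ and $\varphi_\alpha(Y)$ can be taken to respect this product structure — or more carefully, one shows $\le$ by taking the optimal coupling on $H$ for $(X_1, \varphi_\alpha(Y_1))$ and the trivial coupling on $H^\perp$ (where $X$ contributes $0$), and shows $\ge$ by projecting any coupling onto $H$ and $H^\perp$ and using that the squared cost is additive over the orthogonal decomposition. This gives
$$
W_2^2(X,\varphi_\alpha(Y)) = W_2^2(X_1,\varphi_\alpha(Y_1)) + \E\bigl[\Vert (1-\alpha)\ep_2\Vert^2\bigr] = W_2^2(X_1,\varphi_\alpha(Y_1)) + (d-m)(1-\alpha)^2\sigma^2,
$$
since $\ep_2 \sim \N(0,\sigma^2 I_{H^\perp})$ and $\dim H^\perp = d-m$. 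The main obstacle is the rigorous justification that the optimal coupling decomposes along $H \oplus H^\perp$; this follows because $W_2^2$ of a product against a product (with one factor being a point mass on $H^\perp$) is additive, but it needs care since $\varphi_\alpha(Y_1)$ and $(1-\alpha)\ep_2$ are not a priori independent as functions of $Y$ — here one uses that they are measurable functions of the independent variables $Y_1$ and $\ep_2$ respectively, so they are in fact independent, and the product of optimal couplings is admissible while the cost is separable.
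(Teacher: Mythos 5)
Your proposal is correct and follows essentially the same route as the paper: decompose along $H\oplus H^\perp$, observe that the score of $Y$ factorizes so that $\varphi_\alpha(Y)=(\varphi_\alpha(Y_1),(1-\alpha)\ep_2)$, and conclude by the additivity of $W_2^2$ for product measures (the paper invokes $W_2^2(\mu_1\otimes\mu_2,\nu_1\otimes\nu_2)=W_2^2(\mu_1,\nu_1)+W_2^2(\mu_2,\nu_2)$ directly, with $X=(X_1,0)$ and $Y_1\bot\ep_2$ giving the product structure). Your extra care in justifying both inequalities of the coupling step and the independence of $\varphi_\alpha(Y_1)$ and $(1-\alpha)\ep_2$ is sound but matches the paper's argument in substance.
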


We can interpret this result as trade-off between $\alpha = 1$ (full-denoising) which cancels the second term, as it ensures that $\varphi_\alpha(Y)$ belongs to the subspace $H$ (see the proof in Appendix~\ref{sct:proofs} for more details), and  $\alpha = 1/2$ (half-denoising) that will reduce the first term if $p_{X_1}$ is regular enough to apply the results of the previous section.
More precisely, under the assumption that the density on the subspace is regular enough, $W_2^2(\L(X_1),\L(\varphi_\alpha(Y_1)))$ is in $O(\sigma^4)$ for full-denoising (Proposition~\ref{prop:W2}), and in $O(\sigma^8)$ for half-denoising (Proposition~\ref{prop:W2half}). Half-denoising is better on the subspace for $\sigma$ small enough, but the term $\frac{(d-m)\sigma^2}{4}$ dominates as $\sigma$ goes to zero.
Depending on the ratio between the dimension $m$ of the subspace and $d$ of the whole space, there  may or may not be a sweet spot for half-denoising where the gain obtained by reducing the first term outweighs the increase in the second term. 

We illustrated this in Figure~\ref{fig:wasserstein_gaussian_subspace}, where the target distribution is a Gaussian $\N(0,\tau^2 I_m)$ supported on the subspace $\R^m\times\{0\}^{d-m}$ (we use the closed-form formulas from Section~\ref{sct:gaussian}).
For full-denoising, the only term that remains is $ W_2^2(\L(X_1),\L(\varphi_\alpha(Y_1)))$, that is plotted in orange.
Half-denoising is plotted in green, and is the sum of the term $W_2^2(\L(X_1),\L(\varphi_\alpha(Y_1)))$ (red dotted line) and the term $\frac{(d-m)\sigma^2}{4}$ (purple dotted line).
The term corresponding to half-denoising on the subspace (red dotted line) is smaller that the Wasserstein distance for full-denoising (orange line) for $\sigma \lesssim 2.83\tau$.
However, we observe that as $\sigma$ goes to zero, the term $\frac{(d-m)\sigma^2}{4}$ (purple dotted line) dominates hence the Wasserstein distance for half-denoising (green line) is greater that the error for full-denoising.
For $d=10,m=9$ (a), there is a sweet spot in which the trade-off is in favor of half-denoising, while for $d=10,m=5$ (b), full-denoising is better at all noise levels.

\begin{figure}[ht]
    \captionsetup[subfloat]{captionskip=-10pt}
    \centering
	\subfloat[][$d=10, m = 9$]{
    \includegraphics[width=.98\linewidth]{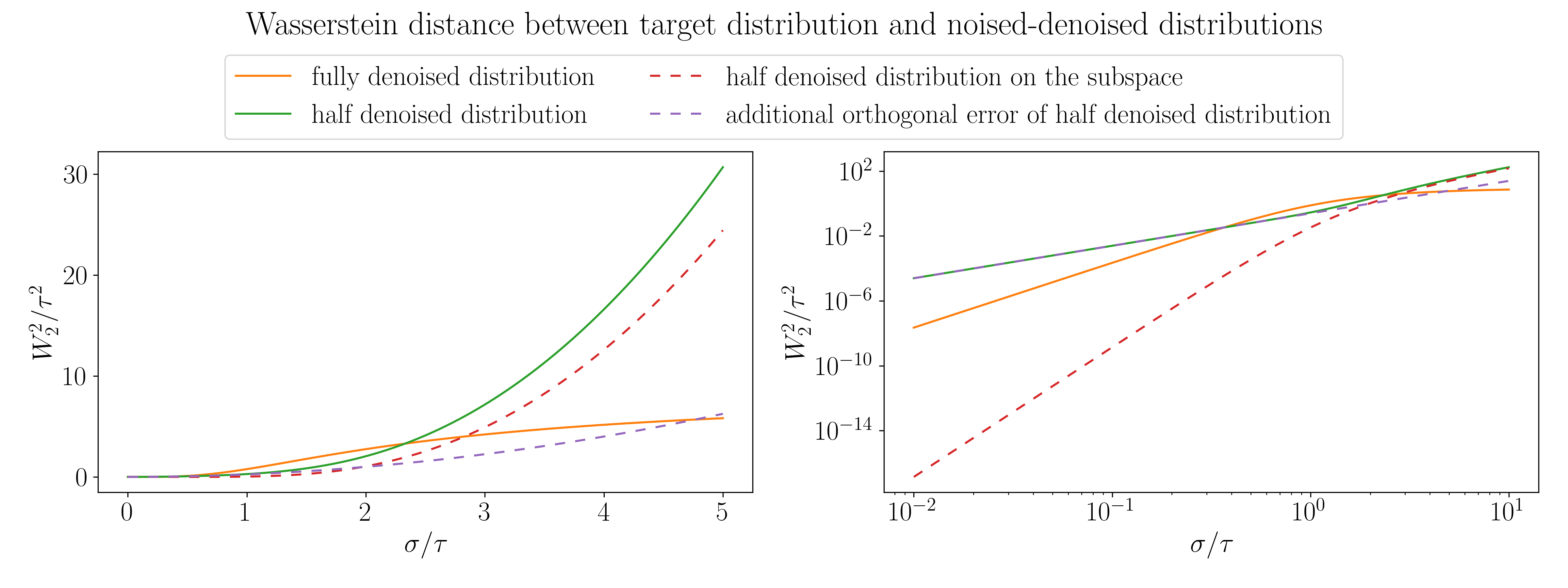}
    }
    \vspace{10pt}
    \subfloat[][$d=10, m = 5$]{
    \includegraphics[width=.98\linewidth]{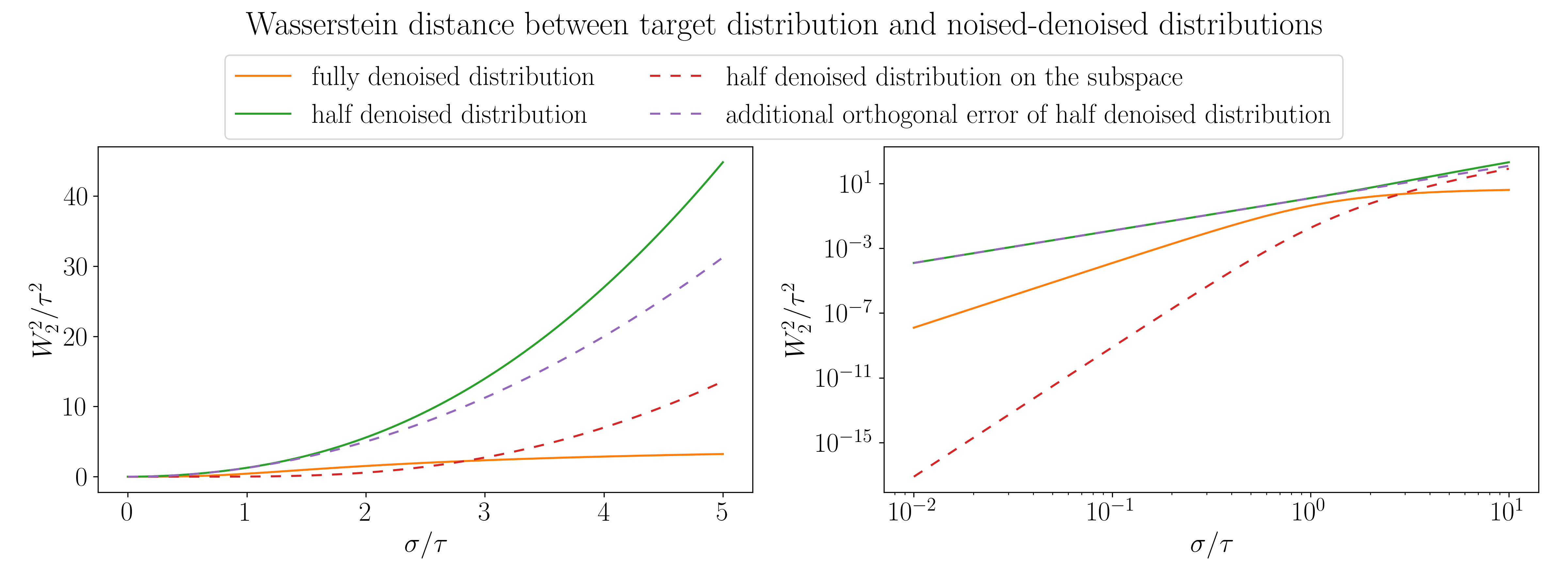}
    }
    \caption{Wasserstein distances for Gaussian distribution supported on the subspace $\R^m\times\{0\}^{d-m}$ as target distribution and different noise levels $\sigma$.}
	\label{fig:wasserstein_gaussian_subspace}
\end{figure}

Proposition~\ref{prop:subspace} also shows that full-denoising is adaptive to the low dimensional structure of the data, as the Wasserstein distance only depends on the distance between distributions on the lower-dimensional subspace. Therefore, in the case where $m \ll d$, it alleviates the curse of dimensionality.

\section{Mixtures of Distributions with Disjoint Compact Supports Behave like Independent Variables}
\label{sct:mixt}

In the case of the \emph{manifold hypothesis}, it is also common to suppose that the distribution is made of small pockets with high density of probability, representing different classes of objects, separated by regions of low density.
We model this case be saying that the distribution of $X$ is a mixture of distributions with disjoint compact supports.
In this case, we show that the denoising performance for a mixture of distributions with disjoint compact supports behaves as if we were denoising each variable independently, plus an exponentially decreasing term.

More formally, let $ X \sim \mu_X = \sum_{i=1}^N \pi_i \mu_i$ (with  $\sum_{i=1}^N \pi_i = 1,\pi_i \geq 0$ ) a mixture of distribution $\mu_i$ with compact support $S_i$ such that $D = \min_{i\neq j} d(S_i,S_j) > 0$ (with $d(S_i,S_j) = \min_{x_i \in S_i, x_j \in S_j}\Vert x_i - x_j\Vert$).
We denote $Y = X + \ep$ with $X \bot \ep$ and $\ep \sim \N(0, \sigma^2 I)$, for $\sigma>0$. For $\alpha \in \R$, we denote $\varphi_\alpha(y) = y +\alpha \sigma^2 \nabla \log p_Y(y)$, $\nu$ the law of $Y$ and $\mu_\alpha$ the law of $\varphi_\alpha(Y)$. Similarly, for $X_i \sim \mu_i$, and  $Y_i = X_i + \ep$ with $X_i \bot \ep$ and $\ep \sim \N(0, \sigma^2 I)$,  we denote $\varphi_{i,\alpha}(y) = y +\alpha \sigma^2 \nabla \log p_{Y_i}(y)$, $\nu_i$ the law of $Y_i$ and $\mu_{i,\alpha}$ the law of $\varphi_{i,\alpha}(Y_i)$.
We have the following proposition:
\begin{proposition}
\label{prop:mixt}
\begin{samepage}
We have, under the above assumptions,
$$W_2^2(\mu_X,\mu_\alpha) \leq 2 \sum_i \pi_i W_2^2(\mu_i,\mu_{i,\alpha}) + O\op\frac{1}{\sigma^{\max(d-2,8)}}\exp\op-\frac{K}{\sigma^2}\cp\cp,$$
where $K$ is a constant that depends only on $D$.
\end{samepage}
\end{proposition}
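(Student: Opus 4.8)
The plan is to reduce the statement, by the triangle inequality together with $\op W_2(a,c)\cp^2\le 2W_2^2(a,b)+2W_2^2(b,c)$ applied with $a=\mu_X=\sum_i\pi_i\mu_i$, $b=\sum_i\pi_i\mu_{i,\alpha}$ and $c=\mu_\alpha$, to controlling two pieces. The first, $W_2^2\op\sum_i\pi_i\mu_i,\sum_i\pi_i\mu_{i,\alpha}\cp\le\sum_i\pi_iW_2^2(\mu_i,\mu_{i,\alpha})$, follows from convexity of $W_2^2$ (glue the optimal couplings of $(\mu_i,\mu_{i,\alpha})$ with weights $\pi_i$), and produces exactly the leading term $2\sum_i\pi_iW_2^2(\mu_i,\mu_{i,\alpha})$ of the statement. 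For the second, $W_2^2\op\sum_i\pi_i\mu_{i,\alpha},\mu_\alpha\cp$, I would use the ``which-component'' coupling: draw $I$ with $\mathbb{P}(I=i)=\pi_i$, then $Y\sim\nu_I$, and return $\op\varphi_{I,\alpha}(Y),\varphi_\alpha(Y)\cp$; since $\sum_i\pi_i\nu_i=\nu$, this is a coupling of $\sum_i\pi_i\mu_{i,\alpha}$ and $\mu_\alpha$, so $W_2^2\op\sum_i\pi_i\mu_{i,\alpha},\mu_\alpha\cp\le\sum_i\pi_i\,\E_{Y\sim\nu_i}\ob\Vert\varphi_{i,\alpha}(Y)-\varphi_\alpha(Y)\Vert^2\cb$, and it remains to show the right-hand side is exponentially small.

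Writing $p_Y=\pi_ip_{Y_i}+r_i$ with $r_i=\sum_{j\ne i}\pi_jp_{Y_j}$ and the ``posterior weight'' $q_i(y)=r_i(y)/p_Y(y)\in[0,1]$, one has $\varphi_{i,\alpha}(y)-\varphi_\alpha(y)=\alpha\sigma^2\op\nabla\log p_{Y_i}(y)-\nabla\log p_Y(y)\cp$. Applying Tweedie's formula \eqref{eq:tweedie} to $Y_i$, to each $Y_j$ and to $Y$ and collecting terms gives $\nabla\log p_{Y_i}(y)-\nabla\log p_Y(y)=\frac{1}{\sigma^2}\op q_i(y)\,\E[X_i\mid Y_i=y]-\sum_{j\ne i}\frac{\pi_jp_{Y_j}(y)}{p_Y(y)}\E[X_j\mid Y_j=y]\cp$, where the two groups of terms carry equal total weight $q_i(y)$. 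Since every conditional expectation lies in $\conv\op\bigcup_kS_k\cp$, the parenthesis equals $q_i(y)$ times the difference of two points of this convex set, so $\Vert\varphi_{i,\alpha}(y)-\varphi_\alpha(y)\Vert\le\alpha\,\mathrm{diam}\op\bigcup_kS_k\cp q_i(y)$, and the whole problem reduces to bounding $\E_{Y\sim\nu_i}[q_i(Y)^2]=\int\frac{r_i(y)^2}{p_Y(y)^2}p_{Y_i}(y)\,dy$.

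To bound this integral I would split $\R^d$ at the tube $A_i=\{y:d(y,S_i)<D/2\}$. On $A_i$ every other support is at distance $\ge D/2$ from $y$, hence $r_i(y)\le(2\pi\sigma^2)^{-d/2}\exp\op-d\op y,\bigcup_{j\ne i}S_j\cp^2/(2\sigma^2)\cp$; combining this with the pointwise inequality $\frac{r_i(y)^2p_{Y_i}(y)}{p_Y(y)^2}\le\frac{r_i(y)}{\pi_i}$ (valid because $\pi_ip_{Y_i}\le p_Y$ and $r_i\le p_Y$) and a Gaussian-tail / layer-cake estimate over the bounded set $A_i$, this part of the integral is $O\op\sigma^{-(d-2)}\exp(-D^2/(8\sigma^2))\cp$. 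On $\R^d\setminus A_i$ I would simply use $q_i\le1$ and $p_{Y_i}(y)\le(2\pi\sigma^2)^{-d/2}\exp\op-d(y,S_i)^2/(2\sigma^2)\cp$ to get $\int_{\R^d\setminus A_i}q_i^2p_{Y_i}\le\mathbb{P}\op d(Y_i,S_i)\ge D/2\cp\le\mathbb{P}\op\Vert\ep\Vert\ge D/2\cp$, which by the Gaussian tail is again $O\op\sigma^{-(d-2)}\exp(-D^2/(8\sigma^2))\cp$. The threshold $D/2$ is chosen precisely to balance the two exponential rates, which is why the constant $K=D^2/8$ depends on $D$ only; collecting everything and keeping track of the polynomial-in-$1/\sigma$ factors (which yields the $\sigma^{-\max(d-2,8)}$ prefactor of the statement) concludes.

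The main obstacle is the estimate of $\E_{Y\sim\nu_i}[q_i(Y)^2]$ on the tube $A_i$: there $r_i$ is exponentially small, but $p_Y$ can also be small — for instance near the boundary of $S_i$, far from the bulk of $\mu_i$ — so one cannot simply lower-bound $p_Y$ pointwise. The device that makes it work is the pointwise bound $\frac{r_i^2p_{Y_i}}{p_Y^2}\le\frac{r_i}{\pi_i}$, which trades the division by $p_Y$ for a plain integral of $r_i$ over the bounded neighborhood $A_i$, where $r_i$ is uniformly exponentially small. The remaining work — tracking Gaussian tails, volumes of tubular neighborhoods of the $S_i$, and the final power of $\sigma$ — is routine bookkeeping.
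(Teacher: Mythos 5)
Your proposal is correct, and while your initial reduction is essentially the paper's in disguise (the paper glues per-component couplings of $(\mu_i,\hat{\mu}_{i,\alpha})$, where $\hat{\mu}_{i,\alpha}$ is the law of $\varphi_\alpha(Y_i)$, then applies a componentwise triangle inequality; your route through the intermediate mixture $\sum_i \pi_i \mu_{i,\alpha}$ lands on exactly the same key quantity $\sum_i \pi_i \E_{Y\sim\nu_i}\big[\Vert \varphi_{i,\alpha}(Y)-\varphi_\alpha(Y)\Vert^2\big]$ with the same leading term), your treatment of that key quantity is genuinely different and in fact slightly sharper. The paper works at the level of scores and uses a three-region split: outside a $\delta_1$-tube of $S_i$ (chi-square tail with bounded conditional means), inside the tube where $p_{Y_i}$ is below an exponential threshold (integrand bound times the tube's volume), and inside the tube where $p_{Y_i}$ is above the threshold, where it controls $\Vert \nabla\log p_Y - \nabla\log p_{Y_i}\Vert$ via uniform bounds on $f_i=\sum_{j\neq i}\pi_j p_{Y_j}$ and on $\nabla f_i$; the differentiation of $f_i$ and the thresholding are precisely what generate the $\sigma^{-8}$ factor in $\max(d-2,8)$ and the constant $K=D^2/16$ in the paper's example. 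You instead stay at the level of posterior means: $\varphi_{i,\alpha}(y)-\varphi_\alpha(y)=\alpha\op\E[X_i|Y_i=y]-\E[X|Y=y]\cp$ with $\E[X|Y=y]$ the $\frac{\pi_j p_{Y_j}(y)}{p_Y(y)}$-weighted average of the $\E[X_j|Y_j=y]$, which gives the pointwise bound $2R\alpha\, q_i(y)$ with $q_i=r_i/p_Y$, and the inequality $\frac{r_i^2 p_{Y_i}}{p_Y^2}\leq \frac{r_i}{\pi_i}$ removes any need to lower-bound $p_Y$ or to differentiate $r_i$; a single split at the $D/2$-tube with two chi-square tails then suffices. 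This buys a cleaner two-region argument and a stronger estimate (prefactor $\sigma^{-(d-2)}$ only, no competing $\sigma^{-8}$ term, and $K=D^2/8$), at the mild cost of $1/\pi_i$ factors that cancel after the $\pi_i$-weighted sum, leaving a dependence on $N$ just as in the paper's hidden constant.
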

The $O$ hides a constant depending on $\alpha$, $d$, $N$ and $R$ such that $\text{supp}(X) \subset B(0,R)$ (see the proof in Appendix \ref{sct:proofs} for more details).

In the case where the $\mu_i$'s are Dirac measures, and for $\alpha = 1$, we have $ W_2^2(\mu_i,\mu_{i,\alpha}) = 0$, hence $$W_2^2(\mu_X,\mu_\alpha) = O\op\frac{1}{\sigma^{\max(d-2,8)}}\exp\op-\frac{K}{\sigma^2}\cp\cp,$$ which is way better than polynomial rates in $\sigma$ seen above.

In Figure~\ref{fig:wasserstein_mixture}, we illustrate this result for a mixture of two Dirac measures in 1D, with ${X = \frac{\delta_{-\mu} + \delta_\mu}{2}}$ for some $\mu >0$. In this case, we can derive integral expressions for $\frac{W_2 (\L(X),\L(\varphi_\alpha(Y)))}{\mu}$ as functions of $\frac{\sigma}{\mu}$ that can be evaluated numerically (see Appendix~\ref{sct:integral_expression_mixture}). For full-denoising, the only remaining term is the $O\op\frac{1}{\sigma^{\max(d-2,8)}}\exp\op-\frac{K}{\sigma^2}\cp\cp$, which decreases much faster than the error for half-denoising.

\begin{figure}
    \centering
	\includegraphics[width=\linewidth]{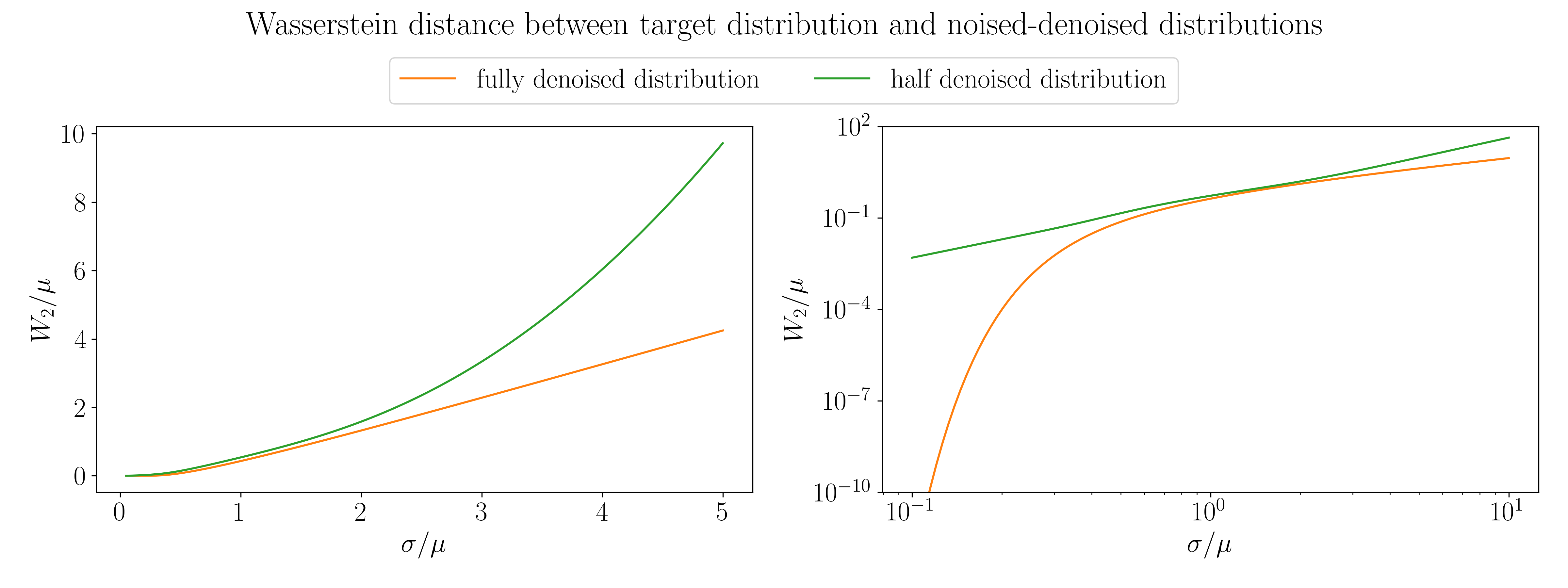}
    \vspace{-.7cm}
	\caption{Wasserstein distances for a mixture of two Dirac measures $\frac{\delta_{-\mu} + \delta_\mu}{2}$ as target distribution and different noise levels $\sigma$.}
	\label{fig:wasserstein_mixture}
\end{figure}

\section{Consequences}
In this section, we examine the consequences of our results for methods based on one-step denoising as well as for multi-step diffusion models.

\subsection{Linear Manifold Hypothesis}
We can combine Propositions \ref{prop:subspace} and \ref{prop:mixt} to tackle what we call the \emph{linear manifold hypothesis}.
We defined the linear  manifold hypothesis as a simplified version of the manifold hypothesis, where the data distribution is supported on disjoint compact sets, each of these belonging to a (different) linear subspace of low dimension, as illustrated in Figure \ref{fig:manifold}. 
Then applying Proposition \ref{prop:mixt} allows to bound the Wasserstein distance between the original distribution and the fully denoised distribution by the sum of the Wasserstein distances between the distributions on each compact sets (plus on exponentially decreasing term). As each compact set belongs to a low-dimensional subspace, we can apply Proposition \ref{prop:subspace}, which tells that for full-denoising, the Wasserstein distance depends only on the distribution on the sub-space. In this case, we see that full-denoising can alleviate the curse of dimensionality even if the support of the mixture distribution itself spans the whole space as it adapts to the local linear structure of the distribution.
More generally, understanding the performance of score-based generative models for data distribution supported on a low dimensional manifold is an active area of research \citep[see, e.g.,][]{tangAdaptivityDiffusionModels2024,azangulovConvergenceDiffusionModels2025}. 

In particular, a result from \citet{azangulovConvergenceDiffusionModels2025} also gives a bound for full-denoising that only depends on the manifold dimension under the manifold hypothesis.
For a smooth manifold $M$ of dimension $d_M < d$ and a regular enough density $p_X$  (see the original paper for all assumptions), we can deduce from their Theorem 12 that
\begin{equation}
\label{eq:bound_azangulovConvergenceDiffusionModels2025}
W_2(\L(X),\L(\varphi_1(Y))) \leq \sigma \sqrt{8(40d_M \log_+\sigma^{-1} + 8d_M C_{\log}+ 3)}
\end{equation}
where $\log_+ : x\mapsto \max (\log x,1)$ and $C_{\log}$ is a constant that controls the regularity of $p_X$, in particular, it must verify $\forall x \in M, e^{-d_MC_{\log}}< p_X(x) < e^{d_MC_{\log}}$. 
This bound shows that the Wasserstein distance between the fully denoised distribution and the original distribution only depends on the subspace dimension. 
Compared to Propositions \ref{prop:subspace} and \ref{prop:mixt}, it allows to tackle the more complete case of a smooth manifold, however it relies on stronger smoothness assumptions, with in particular the need for a lower and upper bounded density  ${e^{-d_MC_{\log}}< p_X(x) < e^{d_MC_{\log}}}$, while our results do not not need any regularity assumptions on $p_X$.
Moreover, we see that (\ref{eq:bound_azangulovConvergenceDiffusionModels2025}) scales as $O(\sigma)$, whereas Proposition~\ref{prop:W2} gives a scaling in $O(\sigma^2)$ is the density is regular enough.
Future work could be dedicated to seeing whether we can draw inspiration from their approach and ours to achieve a more general result with a scaling in $O(\sigma^2)$.

\begin{figure}[ht]
    \centering
	\subfloat[][]{
    \includegraphics[width=.32\linewidth]{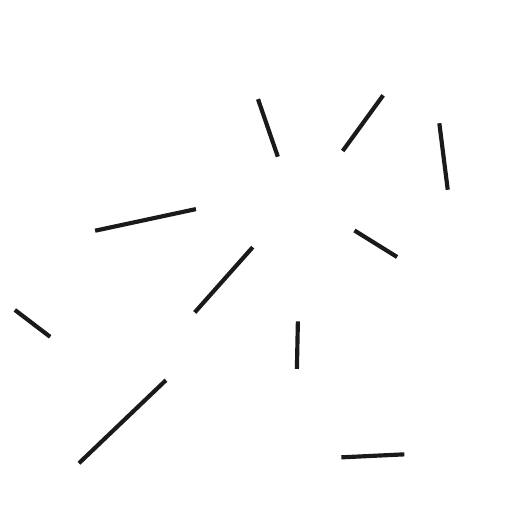}
    }
    \hspace{1cm}
    \subfloat[][]{
    \includegraphics[width=.32\linewidth]{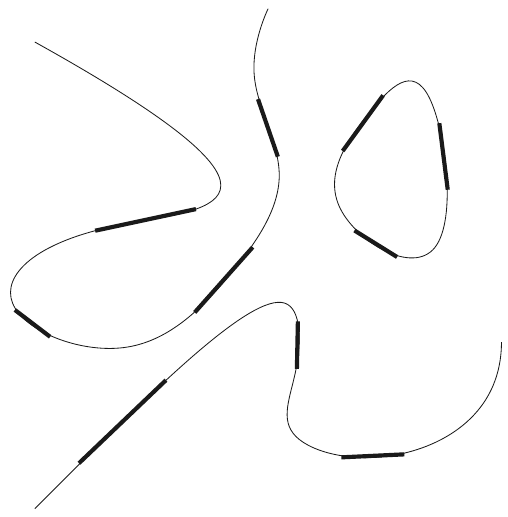}
    }
    \caption{Illustration of the linear manifold hypothesis (a -- thick line), and the manifold hypothesis (b -- fine line).}
	\label{fig:manifold}
\end{figure}

\subsection{Half vs Full for One-Step Denoising}
Our results give insights to choose between half and full denoising in algorithms that use one step denoising. This includes the algorithm proposed by \citet{hyvarinenNoisecorrectedLangevinAlgorithm2025}, which can be viewed (for $\mu = \frac{\sigma^2}{2}$) as a regular Langevin on the noisy variable $Y$ followed by one step of half-denoising, \emph{walk-jump sampling} \citep{saremiNeuralEmpiricalBayes2019}, which is a regular Langevin on the noisy variable $Y$ followed one step of full-denoising,  or also \emph{Sequential multimeasurement walk-jump sampling} \citep{saremiChainLogConcaveMarkov2023}, where noise is first reduced by averaging multiple noisy measurements then one step of full-denoising is applied.
While in the algorithms listed here, the choice of half or full denoising is arbitrary, our results provide guidance on choosing one over the other depending on practical cases. If the density is regular, then half-denoising is preferable, whereas if it is singular, for example in the case of the manifold hypothesis, full-denoising is preferable.  

\subsection{Denoising Diffusion Models}
We will briefly discuss the first insights that our results on one step denoising give about multi-step diffusion models, leaving a further analysis for future work. We will focus on diffusion models with a deterministic denoising process, such as the probability flow ODE by \citet{songScoreBasedGenerativeModeling2021} and DDIM by \citet{songDenoisingDiffusionImplicit2021}.
We start from the more general formulation of the probability flow ODE given by \citet{karrasElucidatingDesignSpace2022}:
\begin{equation}
\label{eq:ODE_karras}
\frac{dx_t}{dt} = \frac{\dot s(t)}{s(t)}x_t - s(t) \dot \sigma(t) \sigma(t)\nabla\log p\op\frac{x_t}{s(t)}; \sigma(t)^2\cp,
\end{equation}
where $p\op x; \sigma^2\cp$ is the density
\footnote{We differ slightly from the notation of \citet{karrasElucidatingDesignSpace2022} as we parametrize this density by $\sigma^2$ rather than $\sigma$, to be more coherent with the usual notation for Gaussian distributions and such that we have $p_t(x) = p(x;t)$ for the process of (\ref{eq:ODE}).}
of the variable $X+\N(0,\sigma^2)$, and $x_t$ as marginal $x_t \sim s(t)\cdot(X + \N(0,\sigma(t)^2))$.

From this equation, we get algorithms by choosing some time $T$, sampling $\hat{X}_0 \sim \N(0,s(t)^2\sigma(t)^2)\approx x_T$,  and discretizing the process given by (\ref{eq:ODE_karras}) with $N$ (possibly non uniform) steps $t_0 = T > t_1 > \cdots > t_N = 0$.

There are multiple ways to discretize (\ref{eq:ODE_karras}), leading to different algorithms (see Appendix~\ref{sct:derivation_diffusion_models} for more details on the derivations).  The DDIM algorithm \citep{songDenoisingDiffusionImplicit2021} corresponds to the update
$$
\hat X_{k+1} = \frac{s(t_{k+1})}{s(t_k)}\hat X_k + s(t_{k+1})(\sigma(t_k)^2- \sigma(t_k)\sigma(t_{k+1})) \nabla\log p\op\frac{\hat X_k}{s(t_k)}; \sigma(t_k)^2\cp.
$$
Equivalently,\footnote{From a theoretical perspective, looking at $\hat X_k$ and $\td X_k$ is strictly equivalent as we can go from one to the other simply by multiplying by a known quantity. Note however that this scaling can have consequences on the training and numerical stability of the algorithms (but those effects can be dissociated by adding multiplicative constants to modify how our neural network and training loss are parametrized, as done by \citet{karrasElucidatingDesignSpace2022}).} using the rescaled variable $\td X_k = \hat X_k / s(t_k)$, we have
$$
\td X_{k+1} = \td X_k +\alpha_k\op\sigma(t_k)^2- \sigma(t_{k+1})^2\cp \nabla\log p\op\td X_k; \sigma(t_k)^2\cp,
$$
with $\alpha_k = \frac{\sigma(t_k)}{\sigma(t_{k+1}) + \sigma(t_k)}$. This update corresponds to $\alpha_k$-denoising, between noise levels $\sigma(t_k)$ and $\sigma(t_{k+1})$.

On the other hand, the Euler discretization of (\ref{eq:ODE_karras}) gives
$$
\hat X_{k+1} = \op 1 + \frac{\dot s(t_{k})(t_{k+1}-t_k)}{s(t_k)}\cp\hat X_k - s(t_{k}) (t_{k+1}-t_k) \dot \sigma(t_k) \sigma(t_k) \nabla\log p\op\frac{\hat X_k}{s(t_k)}; \sigma(t_k)^2\cp,
$$
and with the rescaled variable $\td X_k = \hat X_k / s(t_k)$, we have
\begin{equation}
\label{eq:general_update_multisteps}
\td X_{k+1} =  \frac{s(t_k) + \dot s(t_{k})(t_{k+1}-t_k)}{s(t_{k+1})}\td X_k - \frac{s(t_k)}{s(t_{k+1})}(t_{k+1}-t_k) \dot \sigma(t_k) \sigma(t_k) \nabla\log p\op\td X_k; \sigma(t_k)^2\cp.
\end{equation}
This cannot be directly interpreted as $\alpha$-denoising, but for $s(t) = 1$ constant, then the update becomes
$$
\td X_{k+1} = \td X_k +\alpha_k\op\sigma(t_k)^2- \sigma(t_{k+1})^2\cp \nabla\log p\op\td X_k; \sigma(t_k)^2\cp,
$$
with $\alpha_k = \frac{- (t_{k+1}-t_k)\dot\sigma(t_k)\sigma(t_k)}{\sigma(t_{k})^2 - \sigma(t_{k+1})^2}$.
Here, we have steps of $\alpha$-denoising, but the coefficient $\alpha_k$ does not only depend on the noise levels $(\sigma(t_k))_k$, but also on the specific choice of noise parametrization $t\mapsto \sigma(t)$. Indeed, taking $\sigma(t) = \sqrt{t}$ (which corresponds to the model introduced in Section~\ref{sct:wasserstein}) gives
$$
\alpha _k = \frac{-(\sigma(t_{k+1})^2 - \sigma(t_k)^2)\frac{1}{2\sigma(t_k)}\sigma(t_k)}{\sigma(t_k)^2 - \sigma(t_{k+1})^2} = \frac{1}{2},
$$
while taking $\sigma(t) = t$ \citep[as][]{karrasElucidatingDesignSpace2022}, leads to
$$
\alpha _k = \frac{-(\sigma(t_{k+1}) - \sigma(t_k))\sigma(t_k)}{\sigma(t_k)^2 - \sigma(t_{k+1})^2} = \frac{\sigma(t_k)}{\sigma(t_{k+1})+\sigma(t_k)},
$$
which is exactly the same as the DDIM update.

For the parametrization $s(t) = 1-t$ and $\sigma(t) = \frac{t}{1-t}$ used in flow matching \citep{liuFlowStraightFast2023,lipmanFlowMatchingGenerative2023,albergoStochasticInterpolantsUnifying2023}, the update (\ref{eq:general_update_multisteps}) can also be interpreted as $\alpha_k$-denoising. Indeed, we have
$$
 \frac{s(t_k) + \dot s(t_{k})(t_{k+1}-t_k)}{s(t_{k+1})} = \frac{1-t_k - (t_{k+1}-t_k)}{1-t_{k+1}} = 1,
$$
and with the identities $t = \frac{\sigma(t)}{1+\sigma(t)}$, $s(t) = \frac{1}{1+\sigma(t)}$, and $\dot \sigma(t) = \frac{1}{(1-t)^2} = (1 + \sigma(t))^2$, we get that
\begin{align*}
\alpha_k &= \frac{s(t_k)}{s(t_{k+1})}\frac{- (t_{k+1}-t_k)\dot\sigma(t_k)\sigma(t_k)}{\sigma(t_{k})^2 - \sigma(t_{k+1})^2} \\ 
&= \frac{1+\sigma(t_{k+1})}{1+\sigma(t_{k})} \frac{\op \frac{\sigma(t_k)}{1+\sigma(t_k)} -  \frac{\sigma(t_{k+1})}{1+\sigma(t_{k+1})}\cp (1 + \sigma(t_k))^2 \sigma(t_k)}{(\sigma(t_{k}) + \sigma(t_{k+1}))(\sigma(t_{k}) - \sigma(t_{k+1}))} \\
&= \frac{\sigma(t_k)}{\sigma(t_{k+1})+\sigma(t_k)},
\end{align*}
which is exactly the same as the DDIM update.

We can interpret these different coefficients $\alpha_k$ in the light of our theoretical results.
At the beginning of the reverse process, one is going from a noisy distribution to a sightly less noisy one.
As noising regularizes the density, our finding shows that half-denoising is better. For the Euler discretization with noise schedule $\sigma(t) = \sqrt{t}$, we do have $\alpha_k = \frac{1}{2}$. For DDIM---and equivalently the Euler discretization with parametrization $s(t)=1,\, \sigma(t) = t$ or $s(t)=1-t,\, \sigma(t) = t/(1-t)$---if we assume that $\sigma(t_{k+1}) = \sigma(t_k) - \Delta\sigma$ with $ \Delta\sigma \ll \sigma(t_k)$, then we have $\alpha_k \approx \frac{1}{2}$.

On the contrary, at the end of the reverse process, one is going from a noisy distribution to the target distribution, therefore the choice of the coefficient $\alpha$ should depend on what we know about the target density.
For the Euler discretization with noise schedule $\sigma(t) = \sqrt{t}$, as the coefficients $\alpha_k$ are constant equal to $\frac{1}{2}$, our theoretical results suggest that this choice is best for a regular target density.
For DDIM, or Euler with $s(t)=1,\, \sigma(t) = t$ or $s(t)=1-t,\, \sigma(t) = t/(1-t)$, then $\alpha_k$ gradually shifts to 
$$\alpha_{N-1} =\frac{\sigma(t_{N-1})}{\sigma(t_{N-1})+\sigma(t_N)} = \frac{\sigma(t_{N-1})}{\sigma(t_{N-1})+0} = 1,$$
which corresponds to full-denoising.%
\footnote{Note that this is coherent with the fact that DDIM is exact if the data distribution is a Dirac \citep{nakkiranStepbystepDiffusionElementary2024}, a  distribution for which full-denoising is also exact.}
Our results suggest that this choice is best suited for a singular density, for example under the \emph{manifold hypothesis}.

When interpreting each step as $\alpha$-denoising, it appears that these design choices lead to different algorithms that may be more or less suited to certain target distributions.
We believe that it would be interesting in future works to study whether it is possible to choose directly different time schedules for the coefficient $\alpha$. More generally, as the optimal $\alpha$ depends on the properties of the data distribution, it would be interesting to see if it could be fine-tuned in a data-dependent way.

\section{Conclusion}
We have shown that half-denoising is better than full-denoising for regular enough densities, while full-denoising is better for singular densities such as mixtures of Dirac measures or Gaussian with small variance compare to the additional noise.
Moreover, the performance of the denoisers can be further accessed with additional assumptions on the data distribution, that occur naturally in real-world data, for example with images under the \emph{manifold hypothesis}.

When the variable is supported on a lower-dimensional subspace, we have shown that there is a trade-off between full-denoising which reduces the Wasserstein distance by ensuring that the output belongs to the subspace, and half-denoising that reduces the Wasserstein distance on the lower-dimensional subspace assuming a regular enough density. In the case where the subspace is of small enough dimension compared to the full space, full-denoising alleviates the curse of dimensionality as the Wasserstein distance only depends on the distance between distributions on the lower-dimensional subspace. Moreover, we have shown that the denoising performance for a mixture of distributions with disjoint compact supports behaves as if we were denoising each variable independently, plus an exponentially decreasing term.
This led to a case we called \emph{linear manifold hypothesis}, where the data distribution is supported on disjoint compact sets, each of these belonging to a (different) linear subspace of low dimension, and for which full-denoising can alleviate the curse of dimensionality even if the support of the distribution itself spans the whole space, as it adapts to the local linear structure of the distribution.

For algorithms using one step denoising \citep{saremiNeuralEmpiricalBayes2019,saremiChainLogConcaveMarkov2023,hyvarinenNoisecorrectedLangevinAlgorithm2025}, our results provide guidance on choosing between half-denoising (regular density) and full-denoising (singular target density, manifold hypothesis).
Moreover, for multiple steps denoising models, we have shown that each step can be seen as $\alpha$-denoising, with different $\alpha$'s depending on design choices.
Our theoretical results therefore offer new insights into design choices based on assumptions about the data distribution.

There are several avenues to explore to extend this work.
We could try to extend our results for the \emph{linear manifold} to a more general low-dimensional manifold, drawing inspiration from \cite{azangulovConvergenceDiffusionModels2025} to get a result in their setting, while improving the dependency in $\sigma$ and relaxing some assumptions. 
It would also be interesting to explore further around diffusion models, for which multiple denoising steps are repeated, and compare to existing theoretical result on the performance of the models \citep[see, e.g.,][]{bortoliConvergenceDenoisingDiffusion2022,chenSamplingEasyLearning2023,confortiKLConvergenceGuarantees2025,gentiloni-silveriLogConcavityScoreRegularity2025}.
We believe that the idea of tuning the coefficient $\alpha$ of the denoising steps depending on the target distribution could be further pursued.
We also hope that the techniques developed here could lead to a better understanding of the Wasserstein error of diffusion models, both for deterministic sampling (discretization of the diffusion ODE (\ref{eq:ODE})) and stochastic sampling (discretization of a reverse time SDE).
To do so, one should also take into account the error at initialization (approximating the noisy distribution by a Gaussian), the impact of the (possibly varying) step size of the noise schedule \citep{strasmanAnalysisNoiseSchedule2024} and the error in learning the score.
For the latter, it would be interesting to include results on the ability of neural networks to learn the score for distributions supported on a lower-dimensional manifold, as done by \citet{tangAdaptivityDiffusionModels2024} and by \citet{azangulovConvergenceDiffusionModels2025}.
We tried to answer some of these questions related to the Wasserstein convergence guarantees of the multiple step diffusion models in a subsequent paper \citep{beylerConvergenceDeterministicStochastic2025}.

\acks{%
We thank the Action Editor and the referees for their valuable feedback. We thank Saeed Saremi and  Dario Shariatian for insightful discussions related to this work.
This work has received support from the French government, managed by the National Research Agency, under the France 2030 program with the reference ``PR[AI]RIE-PSAI'' (ANR-23-IACL-0008).
}

\newpage
\appendix
\appendixpage

\section{Fokker-Planck Equation and the Diffusion ODE}
\label{sct:fokker}

In this section, we give a short proof of the Fokker-Planck equation, and use it to define the ODE (\ref{eq:ODE}) presented in section \ref{sct:wasserstein}. There are many references about Fokker-Planck equations \citep[see, e.g.,][]{riskenFokkerPlanckEquationMethods1996,bogachevFokkerPlanckKolmogorov2015}, but here we try to give a proof as simple and self-contain as possible. For general results on SDEs, we refer the reader to \citet{legallBrownianMotionMartingales2016}. We will always assume, but note explicitly write, that we have a probability space, a filtration and a Brownian motion such that all randoms variables are well defined.

We study the process $X_t = X + B_t$, in particular, $X_t = X + t Z$ with $X\bot Z$, $Z \sim \N(0,I)$. From that we deduce that $X_t$ admit a density $p_t$ with respect to the Lebesgue measure that verifies, for $t>0$,
$$
p_t(x) = \int \frac{\exp\op -\Vert x-u\Vert^2/2t\cp}{(2\pi t)^{d/2}} d\mu_X(u) > 0,
$$
and that $(t,x) \mapsto p_t(x)$ is $\C^\infty$.

The evolution of the marginal $p_t$ is dictated by a partial differential equation, the Fokker-Planck equation.

\begin{proposition}[Fokker-Planck equation]
\label{prop:fokker_planck}
Let $f \in \C^1(\R\times\R^d,\R^d)$, $g \in \C^0(\R,\R)$, and $p_t$ the density of a process $X_t$ that verifies the following SDE
\begin{equation}
\label{eq:general_SDE}
dX_t = f(t,X_t)dt + g(t)dB_t,
\end{equation}
and assume that $(t,x) \mapsto p_t(x)$ is $\C^2$ on $\R_{+}^*\times\R^d$.
Then for all $t>0$, we have
$$
\partial_t p_t = -\nabla \cdot (fp_t) + \frac{1}{2}g^2\Delta p_t.
$$
\end{proposition}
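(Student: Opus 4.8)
The plan is to test the equation against smooth compactly supported functions and transport the time derivative onto $p_t$ via It\^o's formula. Fix $h \in \C^\infty(\R^d)$ with compact support $K$, and fix times $0 < s < t$. Since $X_t$ solves (\ref{eq:general_SDE}), it is an It\^o process with cross-variations $d\langle X^i, X^j\rangle_u = g(u)^2 \delta_{ij}\, du$, so It\^o's formula applied to $h$ yields
$$
h(X_t) - h(X_s) = \int_s^t \nabla h(X_u)\cdot f(u,X_u)\, du + \frac{1}{2}\int_s^t g(u)^2\, \Delta h(X_u)\, du + \int_s^t g(u)\, \nabla h(X_u)\cdot dB_u.
$$
Taking expectations, the stochastic integral has zero mean because its integrand lies in $L^2(\Omega\times[s,t])$: $\nabla h$ is bounded (as $h$ has compact support) and $g$ is continuous hence bounded on $[s,t]$. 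Using that $p_u$ is the density of $X_u$, i.e.\ $\E[\phi(X_u)] = \int \phi(x) p_u(x)\, dx$, this gives the integrated identity
$$
\int h\, p_t\, dx - \int h\, p_s\, dx = \int_s^t \op \int \nabla h(x)\cdot f(u,x)\, p_u(x)\, dx + \frac{1}{2}g(u)^2 \int \Delta h(x)\, p_u(x)\, dx \cp du.
$$

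Next I would differentiate in $t$. Since $h$ is supported on the compact $K$, $f$ is continuous and $(u,x)\mapsto p_u(x)$ is $\C^2$ on $\R_+^*\times\R^d$, the two inner integrands are continuous functions of $(u,x)$ on $[s,t]\times K$, so the bracketed quantity is continuous in $u$; the fundamental theorem of calculus then gives
$$
\frac{d}{dt}\int h\, p_t\, dx = \int \nabla h(x)\cdot f(t,x)\, p_t(x)\, dx + \frac{1}{2}g(t)^2 \int \Delta h(x)\, p_t(x)\, dx.
$$
On the left-hand side I would exchange $\frac{d}{dt}$ with the integral: for $t$ in any compact subinterval $[a,b]\subset\R_+^*$, $\partial_t p_t(x)$ is continuous, hence bounded, on $K\times[a,b]$, which legitimizes $\frac{d}{dt}\int h\, p_t\, dx = \int h\, \partial_t p_t\, dx$. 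On the right-hand side, since $h$ has compact support, integration by parts produces no boundary terms: $\int \nabla h\cdot (f p_t)\, dx = -\int h\, \nabla\cdot(f p_t)\, dx$ and $\int \Delta h\, p_t\, dx = \int h\, \Delta p_t\, dx$. Hence for every $h \in \C^\infty(\R^d)$ with compact support,
$$
\int h(x) \op \partial_t p_t(x) + \nabla\cdot(f p_t)(x) - \frac{1}{2}g(t)^2 \Delta p_t(x) \cp dx = 0 .
$$
Because the bracket is a continuous function of $x$, the standard variational argument (if it were nonzero at a point, localize $h$ there to get a contradiction) forces it to vanish identically, which is the claimed equation.

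The main technical points, neither of them deep, are (i) the vanishing of the expectation of the It\^o integral, handled by boundedness of $\nabla h$ and local boundedness of $g$, and (ii) the exchange of $\frac{d}{dt}$ and $\int\,\cdot\,dx$ on the left-hand side, which is precisely where the $\C^2$ hypothesis on $(t,x)\mapsto p_t(x)$ enters, through continuity (hence local boundedness) of $\partial_t p_t$ over the compact support of the test function. A variant that sidesteps differentiating under the integral altogether is to keep the integrated-in-time identity and additionally pair with a test function $\chi \in \C^\infty(\R_+^*)$ of compact support, integrating by parts in $t$ as well; this delivers the equation in the distributional sense on $\R_+^*\times\R^d$, and continuity of all the terms involved upgrades it to the pointwise statement.
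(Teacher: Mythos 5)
Your proof is correct and follows essentially the same route as the paper: test against a smooth compactly supported function, apply It\^o's formula, kill the stochastic integral in expectation, integrate by parts, and conclude via the fundamental lemma of the calculus of variations using continuity of the integrand. The only cosmetic difference is that you derive an integrated-in-time identity and differentiate it via the fundamental theorem of calculus (justifying the exchange of $\frac{d}{dt}$ and $\int dx$ explicitly), whereas the paper works directly with difference quotients $\frac{1}{h}\E[\varphi(X_{t+h})-\varphi(X_t)]$; your treatment of these interchange steps is, if anything, slightly more careful.
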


\begin{proof}
Let $\varphi \in \C^\infty(\R^d,\R)$ with compact support. Then for $t>0$,
\begin{align*}
\int\varphi(x)\partial_tp_t(x) dx 
&= \int\varphi(x) \lim_{h\rightarrow0}\frac{p_{t+h}(x)-p_t(x)}{h}dx\\
&= \lim_{h\rightarrow0}\frac{1}{h} \op \int\varphi(x)p_{t+h}(x)dx  - \int\varphi(x)p_{t}(x)dx\cp \\
&=\lim_{h\rightarrow0}\frac{1}{h} \E[\varphi(X_{t+h}) - \varphi(X_t)].
\end{align*}
By Itō's formula,
\begin{align*}
\varphi(X_{t+h}) - \varphi(X_t) 
&= \int_t^{t+h} \nabla\varphi(X_s)\cdot dX_s + \frac{1}{2}\int_t^{t+h} \Delta \varphi(X_s)d\langle X,X\rangle_s \\
&= \int_t^{t+h} \nabla\varphi(X_s)\cdot f(s,X_s)ds +\int_t^{t+h}g(s)\nabla\varphi(X_s)\cdot dB_s \\
&\quad + \frac{1}{2}\int_t^{t+h} \Delta \varphi(X_s)g(s)^2ds.
\end{align*}
Taking the expectation, we get that,
$$
\E[\varphi(X_{t+h}) - \varphi(X_t)]= \int_t^{t+h}\E[\nabla\varphi(X_s)\cdot f(s,X_s)] ds +0 + \frac{1}{2}\int_t^{t+h} E[\Delta \varphi(X_s)]g(s)^2ds.
$$
As $\varphi$ has compact support, integration by part gives:
$$
\E[\nabla\varphi(X_s)\cdot f(s,X_s)] = \int \nabla\varphi(x)\cdot f(s,x)p_s(x)dx = - \int \varphi(x)\nabla \cdot (f(s,x)p_s(x))dx,
$$
and
$$
E[\Delta \varphi(X_s)] = \int \Delta \varphi(x)p_s(x)dx =  \int \varphi(x)\Delta  p_s(x)dx.
$$
It follows that,
\begin{align*}
\int&\varphi(x)\partial_tp_t(x) dx \\
&=\lim_{h\rightarrow0}\frac{1}{h} \op-\int_t^{t+h}\int\varphi(x)\nabla\cdot (f(s,x)p_s(x))dxds + \frac{1}{2}\int_t^{t+h}\int \varphi(x)g(s)^2\Delta  p_s(x)dxds\cp \\
&=-\int\varphi(x)\nabla\cdot (f(t,x)p_t(x))dx + \frac{1}{2}\int \varphi(x)g(t)^2\Delta  p_t(x)dx,
\end{align*}
as both integrands are continuous with respect to time, and we can rewrite,
$$
\int\varphi(x)\op\partial_tp_t(x) 
+\nabla\cdot (f(t,x)p_t(x))  - \frac{1}{2}g(t)^2\Delta  p_t(x)\cp dx = 0,
$$
for all $\varphi \in \C^\infty(\R^d,R)$ with compact support. It follows that for all $t>0$, almost surely in~$x$,
$$
\partial_tp_t(x) 
 = -\nabla\cdot (f(t,x)p_t(x)) + \frac{1}{2}g(t)^2\Delta  p_t(x),
$$
which gives the desired result as these quantities are continuous in $x$.
\end{proof}

For the process $X_t = X + B_t$, we have $dX_t = dB_t$, which is (\ref{eq:general_SDE}) with $f=0$ and $g=1$. The Fokker-Planck equation is therefore 
$$
\partial_t p_t =  \frac{1}{2}\Delta p_t.
$$
As $\Delta p_t = \nabla\cdot\nabla p_t = \nabla\cdot\frac{p_t}{p_t}\nabla p_t = \nabla\cdot p_t\nabla\log p_t$, the equation can be rewritten as 
$$
\partial_t p_t =  - \nabla\cdot\op-\frac{1}{2}\nabla\log p_t\cp p_t,
$$
which correspond to the Fokker-Planck equation with drift term $f(t,x) = -\frac{1}{2}\nabla\log p_t(x)$. This allows use to define the diffusion ODE, used in section \ref{sct:wasserstein}.

\begin{proposition}
\label{prop:diffusion_ODE}
Let $t^* >0$. We can define a process $(x_t)_{t\geq0}$ by
\begin{equation}
\tag{\ref{eq:ODE}}
\left\{\begin{array}{rll}
    \frac{d x_t}{dt} &= -\frac{1}{2} \nabla \log p_t(x_t) &\text{for } t > 0\\
    x_{t^*} &= X_{t^*}.
\end{array}\right.
\end{equation}
This process has the same marginals as $X_t$: $\forall t \in [0,+\infty[, x_t \sim X_t$, and verifies, for all $t,s \geq 0$,
$$
x_t - x_s = -\frac{1}{2}\int_s^t \nabla \log p_u(x_u) du. 
$$
\end{proposition}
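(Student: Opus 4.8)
The plan is to handle (\ref{eq:ODE}) by the classical Cauchy--Lipschitz argument together with the method of characteristics for the associated continuity equation, being careful near $t=0$ where the drift degenerates. First I would observe that on $(0,\infty)\times\R^d$ the drift $v(t,x):=-\tfrac12\nabla\log p_t(x)$ is $\C^\infty$, since $(t,x)\mapsto p_t(x)$ is smooth and strictly positive; in particular it is locally Lipschitz in $x$ and continuous in $t$, so Cauchy--Lipschitz yields, for any initial condition at time $t^*>0$, a unique maximal solution, and more generally a smooth flow $\Phi_{s,t}$ with $x_t=\Phi_{t^*,t}(x_{t^*})$. To see that the solution is defined on all of $(0,\infty)$, I would use Tweedie's formula $\nabla\log p_t(x)=\tfrac1t\op\E[X\mid X_t=x]-x\cp$: since $\E[X\mid X_t=x]$ lies in the closed convex hull of $\text{supp}(\mu_X)$, the field $v$ has at most linear growth in $x$ on every compact subinterval of $(0,\infty)$ (immediately when $\mu_X$ has bounded support, the main case, and in general via a truncation argument), which rules out finite-time blow-up.

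Next I would prove the marginal identity for $t>0$. Let $q_t$ denote the law of $x_t$, so that $q_t=(\Phi_{t^*,t})_\#p_{t^*}$ by construction. Fix $t>0$ and $\varphi\in\C^\infty(\R^d)$ with compact support, and set $\psi(s,x)=\varphi(\Phi_{s,t}(x))$; this solves the backward transport equation $\partial_s\psi=-v\cdot\nabla\psi$ with $\psi(t,\cdot)=\varphi$, and stays compactly supported in $x$ (locally uniformly in $s$). Using the Fokker--Planck identity $\partial_s p_s=-\nabla\cdot(v p_s)$ (equivalently $\partial_s p_s=\tfrac12\Delta p_s$), established just after Proposition \ref{prop:fokker_planck}, an integration by parts gives
$$\frac{d}{ds}\int\psi(s,x)p_s(x)\,dx=\int(\partial_s\psi)\,p_s+\int\psi\,\partial_s p_s=\int(-v\cdot\nabla\psi)\,p_s+\int(\nabla\psi\cdot v)\,p_s=0.$$
Evaluating this constant at $s=t$ and $s=t^*$ gives $\int\varphi\,dp_t=\int\varphi\circ\Phi_{t^*,t}\,dp_{t^*}=\int\varphi\,dq_t$, and since $\varphi$ is arbitrary this forces $q_t=p_t$, i.e.\ $x_t\sim X_t$ for all $t>0$. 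The integral formula $x_t-x_s=-\tfrac12\int_s^t\nabla\log p_u(x_u)\,du$ for $s,t>0$ is then just the fundamental theorem of calculus applied to the ODE.

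Finally I would extend everything down to $t=0$. Combining $q_u=p_u$ with Tweedie's formula and Jensen's inequality, $\E\Vert\nabla\log p_u(x_u)\Vert=\tfrac1u\E\Vert\E[B_u\mid X_u]\Vert\leq\tfrac1u\E\Vert B_u\Vert\leq\sqrt{d/u}$, which is integrable near $0$. By Tonelli, $\int_0^t\Vert\nabla\log p_u(x_u)\Vert\,du<\infty$ almost surely, so $x_0:=\lim_{s\to0^+}x_s$ exists a.s., and passing to the limit extends the integral formula to $s=0$. Moreover $x_s\to x_0$ a.s.\ gives $q_s\Rightarrow\mu_{x_0}$, while $X_s=X+B_s\to X$ a.s.\ gives $p_s\Rightarrow\mu_X$; since $q_s=p_s$, uniqueness of weak limits yields $x_0\sim X=X_0$, which completes the proof.

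The hard part is the behavior at $t=0$: there the drift $-\tfrac12\nabla\log p_t$ is neither Lipschitz nor even bounded, so classical ODE theory does not apply directly, and the reverse implication (that the limiting variable $x_0$ has law $\mu_X$) must be obtained from the marginal identity on $(0,\infty)$ rather than from the ODE itself. The key enabling estimate is the a.s.\ $L^1$-in-time bound on $\nabla\log p_u(x_u)$, which in turn relies on having already identified the marginals $q_u=p_u$. A secondary technical point is justifying global-in-$t$ existence of the flow for a general integrable $\mu_X$ rather than only a compactly supported one.
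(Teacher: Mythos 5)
Your proposal is correct, and on the central step it takes a genuinely different route from the paper. The skeleton is shared: smoothness and positivity of $(t,x)\mapsto p_t(x)$ for $t>0$ give a smooth flow on $(0,\infty)$, the integral formula for $s,t>0$ is the fundamental theorem of calculus, and the extension to $s=0$ uses exactly the paper's estimate $\E[\Vert\nabla\log p_u(x_u)\Vert]\le\sqrt{d/u}$ obtained from Tweedie's formula and Jensen, together with Tonelli. Where you diverge is the identification of the marginals for $t>0$: the paper writes the law of $x_t$ explicitly as a pushforward through the resolvent, $\tilde p_t(x)=|\det\nabla R(t,t^*,x)|\,p_{t^*}(R(t,t^*,x))$, checks that $(t,x)\mapsto\tilde p_t(x)$ is smooth, applies Proposition \ref{prop:fokker_planck} (with $g=0$) to see that $\tilde p_t$ solves the same continuity equation as $p_t$ with the same data at $t^*$, and concludes $\tilde p_t=p_t$ — a step that implicitly invokes uniqueness for that PDE. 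You instead run the dual argument: transport a compactly supported test function along the flow, use $\partial_s p_s=-\nabla\cdot(v\,p_s)$ and one integration by parts to show that $s\mapsto\int\varphi(\Phi_{s,t}(x))\,p_s(x)\,dx$ is constant, and evaluate at $s=t$ and $s=t^*$. This buys you two things: you never need regularity of the law of $x_t$, and you never need a uniqueness statement for the Fokker--Planck/continuity equation; the price is justifying differentiation under the integral and the compact support of $\psi(s,\cdot)$, which is routine at the paper's level of rigor. You also supply something the paper omits: the identification of the law at $t=0$ (via a.s.\ convergence $x_s\to x_0$, $X_s\to X$, and $q_s=p_s$), which is genuinely part of the statement since the marginal equality is claimed on $[0,+\infty)$.

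The one soft spot is global-in-time existence of the flow. Your convex-hull bound gives linear growth of the drift, locally uniformly in $t>0$, when $\mu_X$ has bounded support, but for a general integrable $\mu_X$ the conditional mean $\E[X\mid X_t=x]$ admits no obvious linear-in-$x$ bound, and the ``truncation argument'' you invoke is not spelled out. To be fair, the paper itself simply asserts that the ODE ``can be solved for all time $t>0$'' with no non-explosion argument, so you are not behind it here — but a fully complete proof would need this step made precise.
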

\begin{proof}
We know that for $t>0$, $(t,x) \mapsto p_t(x)$ is $\C^\infty$. In particular, the ODE defined by (\ref{eq:ODE}) can be solved for all time $t>0$, and we have for $s,t >0$:
\begin{equation}
\label{eq:int_ODE}
x_t - x_s = -\frac{1}{2}\int_s^t \nabla \log p_u(x_u) du.
\end{equation}

For now, we will write $\tilde{p}_t$ the density of the marginal of $x_t$. As $x_t$ is defined as the solution of an ODE, we introduce the resolvent $R(s,t,x)$ that gives the solution at time $t$ of the ODE $$y' = -\frac{1}{2} \nabla \log p_t(y),$$ with initial condition $x$ at time $s$. $R$ is invertible and verifies $R(s,t,x)^{-1} = R(t,s,x)$. Moreover, as $(t,x) \mapsto \nabla \log p_t(x)$ is $C^\infty$, $R$ is also $\C^\infty$ \citep[see, e.g.][Théorème 7.21]{paulinTopologieAnalyseCalcul2009}.

By construction, $x_t = R(t^*,t,X_{t^*})$, leading to:
$$
\tilde{p}_t(x) = |\det \nabla R(t,t^*,x)| p_{t^*}(R(t,t^*,x)),
$$
in particular, $(t,x)\mapsto\tilde{p}_t(x)$ is $\C^\infty$ on $\R_{+}^*\times\R^d$. We can apply proposition \ref{prop:fokker_planck}, to get that 
$$
\partial_t \tilde{p}_t = \nabla\cdot\op-\frac{1}{2} \nabla \log p_t\cp = -\frac{1}{2}\Delta \log p_t = \partial_tp_t,
$$
and as $\tilde{p}_{t^*} = p_{t^*}$, it leads to $\tilde{p}_{t} = p_{t}$ for all $t>0$.

We finally need to verify that we can extend (\ref{eq:int_ODE}) up to time $s = 0$, i.e., that the trajectories can be integrated up to time $t=0$. For $t>0$, Tweedie's formula (\ref{eq:tweedie}) gives:
$$
\nabla\log p_t(x) = \frac{1}{t}(x - \E[X|X_t = x]).
$$
In particular, as $x_t$ and $X_t$ have the same distribution,
$$
\E[\Vert \nabla\log p_t(x_t)\Vert] = \frac{1}{t}\E[\Vert X_t - \E[X|X_t]\Vert] = \frac{1}{t}\E[\Vert\E[X_t - X|X_t]\Vert].
$$
Jensen's inequality gives 
\begin{align*}
\E[\Vert\E[X_t - X|X_t]\Vert] &\leq \E[\E[\Vert X_t - X\Vert|X_t]] = \E[\Vert X_t - X\Vert] \\
&\leq  \sqrt{\E[\Vert X - X_t\Vert^2]} = \sqrt{\E[\Vert B_t\Vert^2]} = \sqrt{td},
\end{align*}
therefore,
$$
\E[\Vert \nabla\log p_t(x_t)\Vert] \leq \sqrt{\frac{d}{t}},
$$
which is integrable near $0$. We get that
$$
\E\ob\int_0^t \Vert \nabla \log p_u(x_u)\Vert du \cb < \infty,
$$
hence 
$$
\int_0^t \Vert \nabla \log p_u(x_u)\Vert du < \infty,
$$
almost everywhere, proving that we can extend (\ref{eq:int_ODE}) up to time $s = 0$ (almost everywhere).
\end{proof}

\section{Proofs}
\label{sct:proofs}
\subsection{Proofs of Proposition \ref{prop:MMD} and Corollary \ref{cor:MMD}}
We will first state the following lemma:
\begin{lemma}
\label{lemma:expectation_gradient_noise}
Let $X$ be a random variable with density $p_X$ such that  $\E[\Vert \nabla \log p_X(X)\Vert^p]\leq C$ for some constant $C < \infty$ and  $p\geq 1$. Then for all $\sigma >0$, defining $Y = X + \ep$  with $X \bot \ep$ and $\ep \sim \N(0, \sigma^2 I)$, we have,
$$\E[\Vert \nabla \log p_Y(Y)\Vert^p]\leq C.$$
\end{lemma}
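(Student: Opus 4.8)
The plan is to identify the score of $Y$ with a conditional expectation of the score of $X$, namely to prove that
$$
\nabla \log p_Y(y) = \E[\nabla \log p_X(X) \mid Y = y],
$$
and then to conclude immediately by Jensen's inequality for conditional expectations applied to the convex function $z \mapsto \Vert z\Vert^p$ on $\R^d$. Indeed, once this identity is established, $\Vert \nabla \log p_Y(Y)\Vert^p = \Vert \E[\nabla \log p_X(X)\mid Y]\Vert^p \leq \E[\Vert \nabla \log p_X(X)\Vert^p \mid Y]$, and taking expectations and using the tower property yields $\E[\Vert \nabla \log p_Y(Y)\Vert^p] \leq \E[\Vert \nabla \log p_X(X)\Vert^p] \leq C$, which is the claim.

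To obtain the identity, I would proceed as follows. Since $Y = X + \ep$ with $\ep \sim \N(0,\sigma^2 I)$ independent of $X$, the density of $Y$ is the convolution $p_Y = p_X * g_\sigma$, where $g_\sigma(z) = (2\pi\sigma^2)^{-d/2}\exp(-\Vert z\Vert^2/(2\sigma^2))$; it is $\C^\infty$ and strictly positive, and the joint density of $(X,Y)$ is $(x,y)\mapsto p_X(x)\,g_\sigma(y-x)$. Differentiating under the integral sign (justified because $g_\sigma$ and its derivatives are bounded and rapidly decreasing while $p_X \in L^1$) and using $\nabla_y g_\sigma(y-x) = -\nabla_x g_\sigma(y-x)$, an integration by parts in $x$ gives
$$
\nabla p_Y(y) = \int p_X(x)\,\nabla_y g_\sigma(y-x)\,dx = \int \nabla p_X(x)\, g_\sigma(y-x)\,dx = \int \nabla \log p_X(x)\, p_X(x)\, g_\sigma(y-x)\,dx,
$$
where the last integral is absolutely convergent because $\int \Vert \nabla \log p_X(x)\Vert\, p_X(x)\,dx = \E[\Vert \nabla \log p_X(X)\Vert] \leq C^{1/p} < \infty$ by Jensen's inequality. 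Dividing by $p_Y(y) > 0$ and recognising $x \mapsto p_X(x) g_\sigma(y-x)/p_Y(y)$ as the conditional density of $X$ given $Y=y$ gives exactly $\nabla \log p_Y(y) = \E[\nabla \log p_X(X)\mid Y=y]$.

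The main obstacle is the rigorous justification of the integration by parts: it requires $p_X$ to be weakly differentiable with $\nabla p_X \in L^1(\R^d)$ — the latter is precisely guaranteed by the hypothesis, since $\nabla p_X = p_X\,\nabla\log p_X$ wherever $p_X>0$ (and $\nabla p_X = 0$ almost everywhere on $\{p_X = 0\}$ for a Sobolev density), so $\int \Vert\nabla p_X\Vert = \E[\Vert\nabla\log p_X(X)\Vert] < \infty$ — and one must check that the boundary contribution vanishes, which follows from the Gaussian decay of $g_\sigma$ together with $p_X,\nabla p_X \in L^1$. Modulo this regularity point (implicitly presupposed by writing $\nabla\log p_X$ at all), the argument is short and the remaining steps are routine.
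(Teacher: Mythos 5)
Your proof is correct and follows essentially the same route as the paper: the paper also reduces the claim to the identity $\nabla \log p_Y(y)= \E [\nabla \log p_X(X) \mid Y=y]$ (cited from (B.2) of \citealp{saremiChainLogConcaveMarkov2023}) and then applies Jensen's inequality for conditional expectations together with the tower property. The only difference is that you derive this identity directly by convolution and integration by parts, with the regularity discussion, rather than citing it.
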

\begin{proof}
From (B.2) of \cite{saremiChainLogConcaveMarkov2023}, $\nabla \log p_Y(y)= \E [\nabla \log p_X(X) | Y=y]$ hence:
\begin{align*}
\E[\Vert \nabla \log p_Y(Y)\Vert^p] &= \E[\Vert \E [\nabla \log p_X(X) | Y]\Vert^p]\\
\text{(Jensen's inequality)} &\leq \E[\E[\Vert\nabla \log p_X(X)\Vert^p|Y]]\\
& = \E[\Vert \nabla \log p_X(X)\Vert^p]\leq C.
\end{align*}
\end{proof}

We can now prove Proposition \ref{prop:MMD}. 
\begin{proof}(Proposition \ref{prop:MMD})

We follow \citet{hyvarinenNoisecorrectedLangevinAlgorithm2025} and keep track of the different constants. 

\vspace{.2cm}
\textbf{For $\alpha = \frac{1}{2}$:} Let $\hat{X} = \varphi_{1/2}(Y)$. As $Y = X + \ep$ with $X \bot \ep$ and $\ep \sim \N(0, \sigma^2 I)$, we have $\hat{p}_Y(\xi) = \hat{p}_X(\xi) e^{-\frac{\sigma^2}{2} \Vert \xi\Vert^2}$.

The proof uses the equality:
$$\E[e^{i \xi\cdot Y} i\frac{\sigma^2}{2}\xi\cdot\nabla \log p_Y(Y)] =\frac{\sigma^2}{2} \Vert\xi\Vert^2 \hat{p}_Y(\xi),$$
that comes from an integration by parts ((19) of \cite{hyvarinenNoisecorrectedLangevinAlgorithm2025}). Then the idea is to write $\hat{p}_{\hat{X}}(\xi) = \E[e^{i \xi\cdot \hat{X}}] = \E[e^{i \xi\cdot Y}e^{i \frac{\sigma^2}{2}\xi\cdot\nabla \log p_Y(Y)}] = \E[e^{i \xi\cdot Y}(1 + i \frac{\sigma^2}{2}\xi\cdot\nabla \log p_Y(Y) + O(\sigma^4)] = \E[e^{i \xi\cdot Y}] + \E[e^{i \xi\cdot Y} i\frac{\sigma^2}{2}\xi\cdot\nabla \log p_Y(Y)] + O(\sigma^4) = \hat{p}_Y(\xi)\op 1+\frac{\sigma^2}{2} \Vert\xi\Vert^2\cp + O(\sigma^4) =  \hat{p}_Y(\xi) e^{\frac{\sigma^2}{2} \Vert \xi\Vert^2} + O(\sigma^4) = \hat{p}_X(\xi) + O(\sigma^4)$.

To make it quantitative, we start by noticing that for any $z \in \mathbb{C}$, $d\in \mathbb{N}$, 
$$\left| e^{z} - \sum_{k=0}^{d} \frac{z^k}{k!}\right| \leq \frac{|z|^{d+1} \max(1,e^{\Re(z)})}{(d+1)!}.$$
Then
\begin{align*}
|\hat{p}_{\hat{X}}(\xi) - (1 + 1/2 \sigma^2 \Vert\xi\Vert^2)\hat{p}_Y(\xi)| &\leq \E[|e^{i \xi\cdot Y}|\cdot|e^{i \frac{1}{2}\sigma^2\xi\cdot\nabla \log p_Y(Y)} - (1 + i\frac{1}{2}\sigma^2\xi\cdot\nabla \log p_Y(Y))|] \\
&\leq \E[\frac{1}{2}(\frac{1}{2}\sigma^2\xi\cdot\nabla \log p_Y(Y))^2] \\
&\leq \frac{\sigma^4\Vert\xi\Vert^2}{8}\E[\Vert\nabla \log p_Y(Y)\Vert^2] \\
&\leq \frac{\sigma^4 \Vert\xi\Vert^2 C}{8} \quad\text{(Lemma \ref{lemma:expectation_gradient_noise})},
\end{align*}
and,
\begin{align*}
|\hat{p}_X(\xi) - (1 + 1/2 \sigma^2 \Vert\xi\Vert^2)\hat{p}_Y(\xi)| &= |\hat{p}_X(\xi) ||1 - (1 + 1/2 \sigma^2 \Vert\xi\Vert^2)e^{-\frac{1}{2}\sigma^2 \Vert \xi\Vert^2}|\\
&\leq |1 - (1 + 1/2 \sigma^2 \Vert\xi\Vert^2)e^{-\frac{1}{2}\sigma^2 \Vert \xi\Vert^2}| \\
&= e^{-\frac{1}{2}\sigma^2 \Vert \xi\Vert^2} |e^{\frac{1}{2}\sigma^2 \Vert \xi\Vert^2}  - (1 + \frac{1}{2} \sigma^2 \Vert\xi\Vert^2)|\\
&\leq e^{-\frac{1}{2}\sigma^2 \Vert \xi\Vert^2} e^{\frac{1}{2}\sigma^2 \Vert \xi\Vert^2} \frac{1}{2} (\frac{1}{2} \sigma^2 \Vert\xi\Vert^2)^2 \\
&= \frac{\sigma^4 \Vert \xi\Vert^4}{8}.
\end{align*}
Finally we get the result by combining the two inequalities.

\vspace{.2cm}
\textbf{For $\alpha \neq \frac{1}{2}$:} Let $\hat{X} = \varphi_{\alpha}(Y)$. We can still write $\hat{p}_{\hat{X}}(\xi) = \E[e^{i \xi\cdot \hat{X}}] = \E[e^{i \xi\cdot Y}e^{i \alpha\sigma^2\xi\cdot\nabla \log p_Y(Y)}] = \E[e^{i \xi\cdot Y}(1 + i \alpha\sigma^2 \xi\cdot\nabla \log p_Y(Y) + O(\sigma^4)] = \E[e^{i \xi\cdot Y}] + \E[e^{i \xi\cdot Y} i\alpha\sigma^2\xi\cdot\nabla \log p_Y(Y)] + O(\sigma^4) = \hat{p}_Y(\xi)\op 1+\alpha \sigma^2 \Vert\xi\Vert^2\cp + O(\sigma^4) =  \hat{p}_Y(\xi) e^{\alpha \sigma^2 \Vert \xi\Vert^2} + O(\sigma^4) = \hat{p}_X(\xi) e^{\op\alpha -\frac{1}{2}\cp \sigma^2 \Vert \xi\Vert^2}+ O(\sigma^4)$. But here the term $\op\alpha -\frac{1}{2}\cp$ does not cancel, hence we do not have $\hat{p}_{\hat{X}}(\xi) = \hat{p}_X(\xi) + O(\sigma^4) $. However, we can still write that $e^{\op\alpha -\frac{1}{2}\cp \sigma^2 \Vert \xi\Vert^2} = 1  + O(\sigma^2)$ to have the equality, but to a smaller order in $\sigma$. Quantitatively: 
\begin{align*}
|\hat{p}_{\hat{X}}(\xi) - \hat{p}_Y(\xi)| &\leq \E[|e^{i \xi\cdot Y}|\cdot|e^{i \alpha\sigma^2\xi\cdot\nabla \log p_Y(Y)} - 1|] \\
&\leq \E[|\alpha\sigma^2\xi\cdot\nabla \log p_Y(Y)|] \\
&\leq \alpha\sigma^2\Vert\xi\Vert\E[\Vert\nabla \log p_Y(Y)\Vert] \\
&\leq \alpha\sigma^2\Vert\xi\Vert\sqrt{\E[\Vert\nabla \log p_Y(Y)\Vert^2]}\\
&\leq \alpha\sigma^2 \Vert\xi\Vert \sqrt{C}\quad \text{(Lemma \ref{lemma:expectation_gradient_noise})},
\end{align*}
and,
\begin{align*}
|\hat{p}_X(\xi) - \hat{p}_Y(\xi)| &= |\hat{p}_X(\xi) |\cdot|1 - 1 e^{-\frac{1}{2}\sigma^2 \Vert \xi\Vert^2}|\\
&\leq |1 - e^{-\frac{1}{2}\sigma^2 \Vert \xi\Vert^2}| \\
&= e^{-\frac{1}{2}\sigma^2 \Vert \xi\Vert^2} |e^{\frac{1}{2}\sigma^2 \Vert \xi\Vert^2}  - 1|\\
&\leq e^{-\frac{1}{2}\sigma^2 \Vert \xi\Vert^2} e^{\frac{1}{2}\sigma^2 \Vert \xi\Vert^2}(\frac{1}{2} \sigma^2 \Vert\xi\Vert^2) \\
&= \frac{\sigma^2 \Vert \xi\Vert^2}{2},
\end{align*}
which gives the result by combining the two inequalities.
\end{proof}

\begin{proof}(Corollary \ref{cor:MMD})

\textbf{For $\alpha = \frac{1}{2}$:}
\begin{align*}
\MMD_k(\L(X),\L(\varphi_{1/2}(Y))) &= \op \int |\hat{p}_X(\xi) - \hat{p}_{\varphi_{1/2}(Y)}(\xi)|^2 d\Lambda(\xi) \cp^{1/2} \\
&\leq \op \int \op\frac{\sigma^4 (C\Vert\xi\Vert^2 + \Vert\xi\Vert^4)}{8}\cp^2 d\Lambda(\xi) \cp^{1/2} \\
&\leq \frac{\sigma^4}{8} \op 2C^2 \int \Vert\xi\Vert^4d\Lambda(\xi) + 2 \int \Vert\xi\Vert^8 d\Lambda(\xi) \cp^{1/2} \\
&= \sigma^4 \frac{\sqrt{C^2 C_4 + C_8}}{4\sqrt{2}}.
\end{align*}

\vspace{.2cm}
\textbf{For $\alpha \neq \frac{1}{2}$:} 
\begin{align*}
\MMD_k(\L(X),\L(\varphi_{\alpha}(Y))) &= \op \int |\hat{p}_X(\xi) - \hat{p}_{\varphi_{\alpha}(Y)}(\xi)|^2 d\Lambda(\xi) \cp^{1/2} \\
&\leq \op \int \op\frac{\sigma^2 (2\alpha\sqrt{C}\Vert\xi\Vert + \Vert\xi\Vert^2)}{2}\cp^2 d\Lambda(\xi) \cp^{1/2} \\
&\leq \frac{\sigma^2}{2} \op 8\alpha^2 C \int \Vert\xi\Vert^2d\Lambda(\xi) + 2 \int \Vert\xi\Vert^4 d\Lambda(\xi) \cp^{1/2} \\
&= \sigma^2 \frac{\sqrt{4 \alpha^2 C C_2 + C_4}}{\sqrt{2}}.
\end{align*}
\end{proof}

\subsection{Proofs of Proposition \ref{prop:W2}, Lemma \ref{lemma:derivative_bound} and Proposition \ref{prop:W2half}}

\begin{proof}(Proposition \ref{prop:W2})

We have 
$$x_0 =  x_t + \frac{1}{2}\int_0^t \nabla \log p_s(x_s) ds,$$
and 
$$\hat{x}_0 = x_t + \alpha t \nabla \log p_t(x_t).$$
Therefore,
$$
x_0 - \hat{x}_0 = \frac{1}{2}\int_0^t \nabla \log p_s(x_s) ds - \alpha t \nabla \log p_t(x_t).
$$
Using Jensen's inequality two times,
\begin{align*}
\Vert x_0 - \hat{x}_0\Vert^2
&\leq 2\op\left\Vert\frac{1}{2}\int_0^t \nabla \log p_s(x_s) ds\right\Vert^2  + \Vert\alpha t \nabla \log p_t(x_t)\Vert^2\cp\\
&\leq 2\op\frac{t}{4}\int_0^t \Vert\nabla \log p_s(x_s)\Vert^2ds + \alpha^2 t^2 \Vert\nabla \log p_t(x_t)\Vert^2\cp,  
\end{align*}
and it follows that,
$$
W_2^2(\L(X),\L(\varphi_\alpha(X_t))) \leq  \E[\Vert x_0 - x_t\Vert^2] \leq 2\op\frac{t}{4}\int_0^t \E[\Vert\nabla \log p_s(x_s)\Vert^2]ds + \alpha^2 t^2 \E[\Vert\nabla \log p_t(x_t)\Vert^2]\cp.
$$

As stated in Lemma \ref{lemma:expectation_gradient_noise}, assuming $\E[\Vert\nabla \log p_X(X)\Vert^2] \leq C$ leads to $\E[\Vert\nabla \log p_s(x_s)\Vert^2] \leq C$ for all $s\geq 0$, so finally:
$$
W_2^2(\L(X),\L(\varphi_\alpha(X_t))) \leq \frac{(1+4\alpha^2)C}{2}t^2,
$$
which gives for $t = \sigma^2$:
$$
W_2(\L(X),\L(\varphi_\alpha(Y))) \leq \sqrt{\frac{(1+4\alpha^2)C}{2}} \sigma^2.
$$
\end{proof}

\begin{proof}(Lemma \ref{lemma:derivative_bound})
Using the chain rule and equation (\ref{eq:ODE}), we have
\begin{align*}
\frac{d}{dt}\nabla \log p_t(x_t) &= \nabla^2\log p_t(x_t)\cdot\frac{d}{dt}x_t + [\partial_t\nabla \log p_t](x_t) \\
&= -\frac{1}{2}\nabla^2\log p_t(x_t)\cdot\nabla\log p_t(x_t) + [\nabla \partial_t\log p_t](x_t).
\end{align*}
Moreover, the Fokker-Planck equation for $p_t$ is
$$
\partial_t p_t = \frac{1}{2} \Delta  p_t = \frac{1}{2} \nabla\cdot\nabla p_t =\frac{1}{2} \nabla\cdot\op p_t \nabla\log p_t\cp = \frac{1}{2} \nabla p_t \cdot \nabla\log p_t + \frac{1}{2} p_t \Delta  \log p_t,
$$
hence,
$$
\partial_t \log p_t = \frac{1}{2} \Vert \nabla \log p_t\Vert^2 + \frac{1}{2}\Delta  \log p_t.
$$
Taking the gradient in $x$ gives
$$
\nabla\partial_t \log p_t = \nabla^2 \log p_t \cdot \nabla \log p_t + \frac{1}{2}\nabla\Delta  \log p_t.
$$
This finally leads to 
$$
\frac{d}{dt}\nabla \log p_t(x_t) = \frac{1}{2}\nabla^2\log p_t(x_t)\cdot\nabla\log p_t(x_t) + \frac{1}{2}\nabla\Delta  \log p_t (x_t).
$$

We can express $\nabla\log p_t$, $\nabla^2\log p_t$ and $\nabla\Delta  \log p_t$ in term of $\nabla\log p_X$, $\nabla^2\log p_X$ and $\nabla\Delta  \log p_X$. With $Y = X + \ep$ with $X \bot \ep$ and $\ep \sim \N(0, t I)$, (B.2) and (B.4) of \citet{saremiChainLogConcaveMarkov2023} give
$$
\nabla\log p_t (x) = \E[\nabla\log p_X(X)|Y = x],
$$
and 
\begin{align*}
\nabla^2\log p_t(x) 
&= \E[\nabla^2\log p_X(X)|Y=x]\\
&\quad + \E[\nabla\log p_X(X) \nabla\log p_X(X)^\top|Y=x]\\
&\quad - \E[\nabla \log p_X(X)|Y=x]\E[\nabla\log p_X(X)|Y=x]^\top \\
&= \E[\nabla^2\log p_X(X)|Y=x] + \text{cov}(\nabla\log p_X(X)|Y=x).
\end{align*}
Similarly, we can compute,
\begin{align*}
\nabla\Delta  \log p_t(x)
&= \E[\nabla\Delta  \log p_X(X)|Y=x]\\
&\quad+ \E[\Delta  \log p_X(X) \nabla\log p_X(X) |Y=x]\\
&\quad- \E[\Delta  \log p_X(X)|Y=x]\E[\nabla\log p_X(X)|Y=x]\\
&\quad+ 2\E[\nabla^2\log p_X(X)\cdot\nabla\log p_X(X)|Y=x]\\
&\quad+ \E[\Vert \nabla\log p_X(X)\Vert^2 \nabla\log p_X(X)|Y=x]\\
&\quad- \E[\Vert \nabla\log p_X(X)\Vert^2|Y=x]\E[\nabla\log p_X(X)|Y=x]\\
&\quad- 2\E[\nabla^2\log p_X(X)|Y=x] \cdot \E[\nabla\log p_X(X)|Y = x] \\
&\quad- 2\E[\nabla\log p_X(X) \nabla\log p_X(X)^\top|Y=x] \cdot \E[\nabla\log p_X(X)|Y = x]\\
&\quad+ 2\Vert\E[\nabla \log p_X(X)|Y=x]\Vert^2 \E[\nabla \log p_X(X)|Y=x].
\end{align*}

Combining the expressions above, we get:
\begin{align}
\frac{d}{dt}\nabla \log p_t(x_t)
&= \frac{1}{2}\E[\nabla\Delta  \log p_X(X)|Y=x_t] \label{term:1}\\
&\quad+ \frac{1}{2}\E[\Delta  \log p_X(X) \nabla\log p_X(X) |Y=x_t] \label{term:2}\\
&\quad- \frac{1}{2}\E[\Delta  \log p_X(X)|Y=x_t]\E[\nabla\log p_X(X)|Y=x_t] \label{term:3}\\
&\quad+ \E[\nabla^2\log p_X(X)\cdot\nabla\log p_X(X)|Y=x_t] \label{term:4}\\
&\quad+ \frac{1}{2}\E[\Vert \nabla\log p_X(X)\Vert^2 \nabla\log p_X(X)|Y=x_t] \label{term:5}\\
&\quad- \frac{1}{2}\E[\Vert \nabla\log p_X(X)\Vert^2|Y=x_t]\E[\nabla\log p_X(X)|Y=x_t] \label{term:6}\\
&\quad- \frac{1}{2}\E[\nabla^2\log p_X(X)|Y=x_t] \cdot \E[\nabla\log p_X(X)|Y = x_t] \label{term:7}\\
&\quad- \frac{1}{2}\E[\nabla\log p_X(X) \nabla\log p_X(X)^\top|Y=x_t] \cdot \E[\nabla\log p_X(X)|Y = x_t] \label{term:8}\\
&\quad+ \frac{1}{2}\Vert\E[\nabla \log p_X(X)|Y=x_t]\Vert^2 \E[\nabla \log p_X(X)|Y=x_t].\label{term:9}
\end{align}

We want to control all these terms in $L_2$ norm. First note that for all $x \in \R^d$, $|\Delta  \log p_X(x)| = |\tr(\nabla^2 \log p_X(x))| \leq \sqrt{d} \Vert\nabla^2 \log p_X(x))\Vert_{\text{fro}} \leq d\Vert \nabla^2 \log p_X(x)\Vert_\text{op}$, where  $\Vert A\Vert_\text{fro}$ is defined for any matrix $A$ as $\Vert A\Vert_\text{fro} = \sqrt{\tr(AA^\top)}$ and verifies $\Vert A\Vert_\text{fro}\leq \sqrt{d}\Vert A\Vert_\text{op}$. In particular, $\E[|\Delta  \log p_X(X)|^3]\leq d^3C_2$. Also note that $x_t$ has the same distribution as $Y$. We can now control in expectation all the the terms in $\frac{d}{dt}\nabla \log p_t(x_t)$.

Term (\ref{term:1}):
\begin{align*} 
\E[\Vert\E[\nabla\Delta  \log p_X(X)|Y]\Vert^2] 
&\leq \E[\E[\Vert\nabla\Delta  \log p_X(X)\Vert^2|Y]] = \E[\Vert\nabla\Delta  \log p_X(X)\Vert^2]\\
&\quad\text{(Jensen's inequality on the conditional expectation)} \\
&= C_3.
\end{align*}

Term (\ref{term:2}):
\begin{align*} 
\E[\Vert\E[\Delta  \log p_X(X) \nabla\log p_X(X) |Y]\Vert^2] 
&\leq \E[|\Delta  \log p_X(X)|^2\Vert\nabla\log p_X(X)\Vert^2] \\
&\quad\text{(Jensen's inequality on the conditional expectation)} \\
&\leq \E[|\Delta  \log p_X(X)|^3]^{2/3} \E[\Vert\nabla\log p_X(X)\Vert^6]^{1/3}\\
&\quad\text{(Hölder's inequality)} \\
&\leq d^2 C_2^{2/3}C_1^{1/3}.
\end{align*}

Term (\ref{term:3}):
\begin{align*} 
\E[\Vert\E[\Delta  \log p_X(X)|Y]&\E[\nabla\log p_X(X)|Y]\Vert^2] \\
&= \E[|\E[\Delta  \log p_X(X)|Y]|^2 \Vert\E[\nabla\log p_X(X)|Y]\Vert^2] \\
&\leq \E[|\E[\Delta  \log p_X(X)|Y]|^3]^{2/3} \E[\Vert\E[\nabla\log p_X(X)|Y]\Vert^6]^{1/3} \\
&\quad\text{(Hölder's inequality)} \\
&\leq \E[|\Delta  \log p_X(X)|^3]^{2/3} \E[\Vert\nabla\log p_X(X)\Vert^6]^{1/3} \\
&\quad\text{(Jensen's inequality on the conditional expectation)} \\
&\leq d^2 C_2^{2/3}C_1^{1/3}.
\end{align*}

Term (\ref{term:4}):
 \begin{align*} 
 \E[\Vert\E[\nabla^2\log p_X(X)&\cdot\nabla\log p_X(X)|Y]\Vert^2]  \\
&\leq \E[\Vert\nabla^2\log p_X(X)\cdot\nabla\log p_X(X)\Vert^2]\\
&\quad\text{(Jensen's inequality on the conditional expectation)} \\
&\leq \E[\Vert\nabla^2\log p_X(X)\Vert_\text{op}^2 \Vert\nabla\log p_X(X)\Vert^2]\\
&\leq \E[\Vert\nabla^2\log p_X(X)\Vert_\text{op}^3]^{2/3} \E[\Vert\nabla\log p_X(X)\Vert^6]^{1/3}\\
&\quad\text{(Hölder's inequality)} \\
&= C_2^{2/3}C_1^{1/3}.
 \end{align*}
 
Term (\ref{term:5}):
\begin{align*} 
\E[\Vert\E[\Vert \nabla\log p_X(X)\Vert^2 &\nabla\log p_X(X)|Y]\Vert^2] \\
&\leq \E[\Vert \nabla\log p_X(X)\Vert^6] \\
&\quad\text{(Jensen's inequality on the conditional expectation)} \\
&= C_1.
\end{align*}

Term (\ref{term:6}):
\begin{align*} 
\E[\Vert\E[\Vert \nabla\log p_X(X)\Vert^2|Y]&\E[\nabla\log p_X(X)|Y]\Vert^2]\\
&=\E[(\E[\Vert \nabla\log p_X(X)\Vert^2|Y])^2 \Vert\E[\nabla\log p_X(X)|Y]\Vert^2]\\
&\leq \E[(\E[\Vert \nabla\log p_X(X)\Vert^2|Y])^2 \E[\Vert\nabla\log p_X(X)\Vert^2|Y]] \\
&\quad\text{(Jensen's inequality on the conditional expectation)} \\
&= \E[(\E[\Vert \nabla\log p_X(X)\Vert^2|Y])^3] \\
&\leq \E[\Vert \nabla\log p_X(X)\Vert^6] \\
&\quad\text{(Jensen's inequality on the conditional expectation)} \\
&= C_1.
\end{align*}

Term (\ref{term:7}):
\begin{align*} 
\E[\Vert\E[\nabla^2\log p_X(X)|Y]&\cdot\E[\nabla\log p_X(X)|Y]\Vert^2]\\
&\leq \E[\Vert\E[\nabla^2\log p_X(X)|Y]\Vert^2_\text{op}\Vert\E[\nabla\log p_X(X)|Y]\Vert^2] \\
&\leq \E[\Vert\E[\nabla^2\log p_X(X)|Y]\Vert^3_\text{op}]^{2/3} \E\Vert\E[\nabla\log p_X(X)|Y]\Vert^6]^{1/3} \\
&\quad\text{(Hölder's inequality)} \\
&\leq \E[\Vert\nabla^2\log p_X(X)\Vert^3_\text{op}]^{2/3} \E\Vert\nabla\log p_X(X)\Vert^6]^{1/3} \\
&\quad\text{(Jensen's inequality on the conditional expectation)} \\
&= C_2^{2/3}C_1^{1/3}.
\end{align*}

Term (\ref{term:8}):
\begin{align*} 
\E[\Vert\E[\nabla\log p_X(X) &\nabla\log p_X(X)^\top|Y]\cdot\E[\nabla\log p_X(X)|Y]\Vert^2] \\
&\leq \E[\Vert\E[\nabla\log p_X(X) \nabla\log p_X(X)^\top|Y]\Vert_{\text{op}}^2 \Vert\E[\nabla\log p_X(X)|Y]\Vert^2] \\
&\leq \E[(\E[\Vert\nabla\log p_X(X) \nabla\log p_X(X)^\top\Vert_{\text{op}}|Y])^2 \E[\Vert\nabla\log p_X(X)\Vert^2|Y]] \\
&\quad\text{(Jensen's inequality on the conditional expectation)} \\
&= \E[(\E[\Vert\nabla\log p_X(X)\Vert^2|Y])^2 \E[\Vert\nabla\log p_X(X)\Vert^2|Y]]\\
&= \E[(\E[\Vert \nabla\log p_X(X)\Vert^2|Y])^3] \\
&\leq \E[\Vert \nabla\log p_X(X)\Vert^6] \\
&\quad\text{(Jensen's inequality on the conditional expectation)} \\
&= C_1.
\end{align*}

Term (\ref{term:9}):
\begin{align*}  
\E[\Vert\Vert\E[\nabla\log p_X(X)|Y]\Vert^2&\E[\nabla \log p_X(X)|Y]\Vert^2]\\
&= \E[\Vert\E[\nabla \log p_X(X)|Y]\Vert^6] \\
&\leq \E[\Vert\nabla \log p_X(X)\Vert^6] \\
&\quad\text{(Jensen's inequality on the conditional expectation)} \\
&= C_1.
\end{align*}

To combine this bounds, we use Jensen's inequality on $x\mapsto x^2$, that gives for all $x_1,\dots,x_k \in \R$, $$\op\sum_{i=1}^k x_i\cp^2\leq k \sum_{i=1}^k x_i^2,$$ and we finally have,
$$
\E\ob\left\Vert \frac{d}{dt}\nabla \log p_t(x_t)\right\Vert^2\cb \leq \frac{9}{4}(4C_1 + (2d^2+5)C_1^{1/3}C_2^{2/3}+C_3) = C.
$$
\end{proof}

\begin{proof}(Proposition \ref{prop:W2half})
We have,
\begin{align*}
x_0 - \hat{x}_0 
&= \frac{1}{2}\int_0^t \nabla \log p_s(x_s) ds - \frac{1}{2} t \nabla \log p_t(x_t) \\
&= \frac{1}{2}\int_0^t \op\nabla \log p_s(x_s) - \nabla \log p_t(x_t)\cp ds \\
&= \frac{1}{2}\int_0^t \int_s^t \frac{d}{du}\op\nabla \log p_u(x_u) \cp du ds,
\end{align*}
hence, with Jensen's inequality,
$$
\E[\Vert x_0 - x_t\Vert^2] \leq \frac{t}{4} \int_0^t (t-s)\int_s^t \E\left[\left\Vert\frac{d}{du}\op\nabla \log p_u(x_u) \cp\right\Vert^2\right] du ds .
$$

Using Lemma \ref{lemma:derivative_bound}, it leads to $\E[\Vert x_0 - x_t\Vert^2] \leq \frac{C}{12}t^4$. In particular for $t= \sigma^2$, we get:
$$
W_2(\L(X),\L(\varphi_{1/2}(Y))) \leq (\E[\Vert x_0 - x_{\sigma^2}\Vert^2])^{1/2} \leq K \sigma^4,
$$
with $K = \sqrt{\frac{C}{12}} = \frac{\sqrt{3}}{4}\sqrt{4C_1 + (2d^2+5)C_1^{1/3}C_2^{2/3}+C_3}$.
\end{proof}

\subsection{Proof of Proposition \ref{prop:subspace}}
\begin{proof}
As the added noise $\ep$ is isotropic, it is invariant by rotation, thus, we can limit ourselves to the setting where $H = \R^m\times \{0\}^{d-m}$ for which the variable $X$ can be written $X = (X_1,0)$ with $X_1 \in \R^m$.

Write $\ep = (\ep_1,\ep_2)$, with  $\ep_1 \in \R^m \sim \N(0,\sigma^2I_{m}), \ep_2 \in \R^{(d-m)}\sim \N(0,\sigma^2I_{d-m})$ and $Y = (Y_1,Y_2) = (X_1 + \ep_1,\ep_2)$, with  $Y_1 \in \R^m, Y_2 \in \R^{(d-m)}$ and $Y_1\bot Y_2$. 
Hence, we have that 
$$\nabla \log p_Y(y) = (\nabla \log p_{Y_1}(y_1), \nabla \log p_{Y_2}(y_2)).$$
For the Gaussian $Y_2 = \ep_2 \sim \N(0,\sigma^2 I)$, $\nabla \log p_{Y_2}(y_2) = \frac{-y_2}{\sigma^2}$.

Therefore we have $\varphi_\alpha(Y) = (\varphi_\alpha(Y_1),(1-\alpha) \ep_2)$ and we can use the fact that for distributions $\mu_1,\mu_2,\nu_1,\nu_2$ $W_2^2(\mu_1\otimes\mu_2,\nu_1\otimes\nu_2) = W_2^2(\mu_1,\nu_1) + W_2^2(\mu_2,\nu_2)$, leading to
$$
W_2^2(\L(X),\L(\varphi_\alpha(Y))) = W_2^2(\L(X_1),\L(\varphi_\alpha(Y_1))) + W_2^2(0,(1-\alpha)\ep_2),
$$
with $W_2^2(\L(0),\L((1-\alpha)\ep_2)) = (d-m)(1-\alpha)^2\sigma^2$.

Note in particular that for $\alpha = 1$,
$$
\varphi_\alpha(Y) = (\varphi_\alpha(Y_1),(1-\alpha) \ep_2) = (\varphi_\alpha(Y_1),0) \in  H = \R^m\times \{0\}^{d-m}.
$$
Full-denoising hence ensures that the denoising variable $\varphi_\alpha(Y)$ belongs to the subspace $H$ as it remove all the orthogonal noise $\ep_2$.
\end{proof}

\subsection{Proof of Proposition \ref{prop:mixt}}
\begin{proof}
Let $\mu = \sum_{i=1}^N \pi_i \mu_i$ (with  $\sum_{i=1}^N \pi_i = 1,\pi_i \geq 0$ ) a mixture of distribution $\mu_i$ with compact support $S_i$ such that $D = \min_{i\neq j} d(S_i,S_j) > 0$ ($d(S_i,S_j) = \min_{x_i \in S_i, x_j \in S_j}\Vert x_i - x_j\Vert$).

We denote $X\sim \mu$, $Y = X + \ep$ with $X \bot \ep$ and $\ep \sim \N(0, \sigma^2 I)$, for $\sigma>0$. For $\alpha \in \R$, we denote $\varphi_\alpha(y) = y +\alpha \sigma^2 \nabla \log p_Y(y)$, $\nu$ the law of $Y$ and $\mu_\alpha$ the law of $\varphi_\alpha(Y)$. Similarly, for $X_i \sim \mu_i$, and  $Y_i = X_i + \ep$ with $X_i \bot \ep$ and $\ep \sim \N(0, \sigma^2 I)$,  we denote $\varphi_{i,\alpha}(y) = y +\alpha \sigma^2 \nabla \log p_{Y_i}(y)$, $\nu_i$ the law of $Y_i$ and $\mu_{i,\alpha}$ the law of $\varphi_{i,\alpha}(Y_i)$. We also denote  $\hat{\mu}_{i,\alpha}$ the law of $\varphi_\alpha(Y_i)$.

We denote $R = \sup_{i,x\in S_i} \Vert x\Vert < \infty$. We have
$$\varphi_\alpha(y) = y + \alpha\sigma^2\nabla\log p_Y(y) = (1-\alpha)y + \alpha\E[X|Y=y],$$
and, 
$$\varphi_{i,\alpha}(y) = y + \alpha\sigma^2\nabla\log p_{Y_i}(y) = (1-\alpha)y + \alpha\E[X_i|Y_i=y].$$
Note that for all $y$, $\Vert\E[X|Y=y]\Vert \leq R$ and $\Vert\E[X_i|Y_i=y]\Vert \leq R$.

By limiting ourselves to the couplings $\Gamma^0 (\mu,\mu_\alpha) = \{ \gamma: (X,\hat{X})\sim \gamma \text{ with } X = \sum_i 1_{Z=i} X_i, \hat{X} = \sum 1_{Z=i} \hat{X}_i, Z \text{ such  that } P(Z=i) = \pi_i, \text{ and } (X_i,\hat{X}_i) \sim \gamma_i \in \Gamma (\mu_i, \hat{\mu}_{i,\alpha})\},$ we have:
\begin{align*}
W_2^2(\mu,\mu_\alpha) &= \inf_{\gamma \in \Gamma(\mu,\mu_\alpha)} \int \Vert x - \hat{x}\Vert^2 d \gamma(x,\hat{x}) \\
&\leq \inf_{\gamma \in \Gamma^0(\mu,\mu_\alpha)} \int \Vert x - \hat{x}\Vert^2 d \gamma(x,\hat{x}) \\
&= \sum_i \pi_i \inf_{\gamma \in \Gamma (\mu_i, \hat{\mu}_{i,\alpha})} \int \Vert x - \hat{x}\Vert^2 d \gamma(x,\hat{x}) \\
&= \sum_i \pi_i W_2^2(\mu_i,\hat{\mu}_{i,\alpha}).
\end{align*}

Then for $i \in \{1,\dots,N\}$ we have:
\begin{align*}
W_2^2(\mu_i,\hat{\mu}_{i,\alpha}) &= \inf_{\gamma \in \Gamma(\mu_i,\hat{\mu}_{i,\alpha})} \int \Vert x - \hat{x}\Vert^2 d \gamma(x,\hat{x}) \\
&= \inf_{\gamma \in \Gamma(\mu_i,\nu_i)} \int \Vert x - \varphi_\alpha(y)\Vert^2 d \gamma(x,y) \\
&\leq 2\inf_{\gamma \in \Gamma(\mu_i,\nu_i)} \int \Vert x - \varphi_{i,\alpha}(y)\Vert^2 d \gamma(x,y) + \int \Vert \varphi_{i,\alpha}(y) - \varphi_\alpha(y)\Vert^2 d \gamma(x,y) \\
&= 2 W_2^2(\mu_i,\mu_{i,\alpha}) + \int \Vert \varphi_{i,\alpha}(y) - \varphi_\alpha(y)\Vert^2 d \nu_i(y).
\end{align*}

To conclude, it is sufficient to prove that $\int \Vert \varphi_{i,\alpha}(y) - \varphi_\alpha(y)\Vert^2 d \nu_i(y) = O\op\frac{1}{\sigma^{d-2}}\exp\op-\frac{K}{\sigma^2}\cp\cp$ with  $K = K(D)$ a constant. We fix $\delta_1 > 0$ such that $D-\delta_1 > 0$. We have,
\begin{align*}
\int \Vert \varphi_{i,\alpha}(y) - \varphi_\alpha(y)\Vert^2 d \nu_i(y) &= \int_{y \in S_i + B(0,\delta_1)}\Vert \varphi_{i,\alpha}(y) - \varphi_\alpha(y)\Vert^2 d \nu_i(y) \\
&+ \int_{y \notin S_i + B(0,\delta_1)} \Vert \varphi_{i,\alpha}(y) - \varphi_\alpha(y)\Vert^2 d \nu_i(y).
\end{align*}

We start by bounding the term $\int_{y \notin S_i + B(0,\delta_1)} \Vert \varphi_{i,\alpha}(y) - \varphi_\alpha(y)\Vert^2 d \nu_i(y).$
\begin{align*}
\int_{y \notin S_i + B(0,\delta_1)} \Vert \varphi_{i,\alpha}(y) - \varphi_\alpha(y)\Vert^2 d \nu_i(y) &= \E [\Vert \varphi_{i,\alpha}(Y_i) - \varphi_\alpha(Y_i)\Vert^2 1_{Y_i \notin S_i + B(0,\delta_1)}] \\
&= \alpha^2 \E [\Vert \E[X_i|Y_i] - \E[X|Y=Y_i] \Vert^2 1_{Y_i \notin S_i + B(0,\delta_1)} ] \\
&\leq 4R^2\alpha^2 P(Y_i \notin S_i + B(0,\delta_1)).
\end{align*}

We have $Y_i = X_i +\ep$ with $\ep \sim \N(0,\sigma^2)$ and $\frac{\Vert \ep\Vert^2}{\sigma^2} \sim \chi^2$ (chi-squared distribution). More over, as $\text{supp}(X_i) \subset S_i$,  $\{Y_i \notin S_i + B(0,\delta_1)\} \subset \{ \ep \notin B(0,\delta_1)\} = \left\{ \frac{\Vert \ep\Vert^2}{\sigma^2} > \frac{\delta_1^2}{\sigma^2}\right\}$. Therefore:

\begin{align*}
P(Y_i \notin S_i + B(0,\delta_1))  &\leq P\op\frac{\Vert \ep\Vert^2}{\sigma^2} > \frac{\delta_1^2}{\sigma^2}\cp \\
&= \frac{(1/2)^{d/2}}{\Gamma(d/2)} \int_{\frac{\delta_1^2}{\sigma^2}}^\infty t^{d/2 - 1}e^{-t/2}dt\\
&\leq \frac{4 (1/2)^{d/2}\delta_1^{d-2}}{\Gamma(d/2)} \frac{1}{\sigma^{d-2}} \exp\op- \frac{\delta_1^2}{2\sigma^2}\cp\\
&= O\op \frac{1}{\sigma^{d-2}}\exp\op- \frac{\delta_1^2}{2\sigma^2}\cp\cp \text{ when } \sigma \rightarrow 0,
\end{align*}
where $\Gamma$ is the Gamma function defined by $\Gamma(z) = \int_0^\infty t^{z-1} e^{-t}\,dt$, for $\Re(z) >0$. This leads to
\begin{align*}
\int_{y \notin S_i + B(0,\delta_1)} \Vert \varphi_{i,\alpha}(y) - \varphi_\alpha(y)\Vert^2 d \nu_i(y)
&\leq \frac{16R^2\alpha^2(1/2)^{d/2}\delta_1^{d-2}}{\Gamma(d/2)} \frac{1}{\sigma^{d-2}} \exp\op- \frac{\delta_1^2}{2\sigma^2}\cp\\
&= O\op \frac{1}{\sigma^{d-2}}\exp\op- \frac{\delta_1^2}{2\sigma^2}\cp\cp \text{ when } \sigma \rightarrow 0.
\end{align*}

We now turn to the term $\int_{y \in S_i + B(0,\delta_1)}\Vert \varphi_{i,\alpha}(y) - \varphi_\alpha(y)\Vert^2 d \nu_i(y) .$
We fix $\delta_2 >0$ such that $(D-\delta_1)^2 -\delta_2 > 0$ and we denote $A = \{p_{Y_i}(y) \leq \frac{1}{\pi_i \sigma^{d-2}}\exp\op-\frac{(D-\delta_1)^2 - \delta_2}{2\sigma^2}\cp\}$ and  $B = \{p_{Y_i}(y) \geq \frac{1}{\pi_i\sigma^{d-2}}\exp\op-\frac{(D-\delta_1)^2 - \delta_2}{2\sigma^2}\cp\}$.
On $A$, we have
\begin{align*}
 \int_{y \in (S_i + B(0,\delta_1))\cap A}&\Vert \varphi_{i,\alpha}(y) - \varphi_\alpha(y)\Vert^2 d \nu_i(y) \\
 &= \int_{y \in (S_i + B(0,\delta_1))\cap A}\Vert \varphi_{i,\alpha}(y) - \varphi_\alpha(y)\Vert^2 p_{Y_i}(y) dy \\
 &=\alpha^2 \int_{y \in (S_i + B(0,\delta_1))\cap A}\Vert \E[X_i|Y_i=y] + \E[X|Y=y] \Vert^2  d \nu_i(y) \\
 &\leq \alpha^2 \int_{y \in (S_i + B(0,\delta_1))\cap A} \frac{4R^2}{\pi_i\sigma^{d-2}} \exp\op\frac{(D-\delta_1)^2 - \delta_2}{2\sigma^2}\cp dy \\
 &\leq \frac{4\alpha^2R^2}{\pi_i\sigma^{d-2}} \exp\op\frac{(D-\delta_1)^2 - \delta_2}{2\sigma^2}\cp \mathit{Vol}(S_i + B(0,\delta_1))\\ 
 &\leq \frac{4 \alpha^2R^2(R+\delta_1)^{d} }{\pi_i\sigma^{d-2}} \exp\op\frac{(D-\delta_1)^2 - \delta_2}{2\sigma^2}\cp.
\end{align*}

To bound the term on B, we first write $p_Y(y) = \sum_j \pi_j p_{Y_j}(y) =  \pi_i p_{Y_i}(y) + f_i(y)$ with $f_i(y) = \sum_{j\neq i} \pi_j p_{Y_j}(y)$, and we notice that for $y \in S_i + B(0,\delta_1)$, we have: 
\begin{align*}
f_i(y) &= \sum_{j\neq i} \pi_j \int \frac{1}{C\sigma^d} \exp\op -\frac{\Vert y-x\Vert^2}{2\sigma^2}\cp d \mu_j (x)\\
&\leq \frac{1}{C\sigma^d}\exp\op\frac{-(D-\delta_1)^2}{2\sigma^2}\cp,
\end{align*}
with $C = (2\pi)^{d/2}$, as well as,
\begin{align*}
\Vert \nabla f_i(y)\Vert &= \left\Vert \sum_{j\neq i} \pi_j \int \frac{y-x}{C\sigma^2} \exp\op -\frac{\Vert y-x\Vert^2}{2\sigma^{d+2}}\cp d \mu_j (x) \right\Vert\\
&\leq \frac{2(R+\delta_1)}{C\sigma^{d+2}}\exp\op\frac{-(D-\delta_1)^2}{2\sigma^2}\cp,
\end{align*}
and,
$$
\Vert \nabla \log p_{Y_i}(y)\Vert = \left\Vert \frac{1}{\sigma^2} (\E[X_i|Y_i=y]-y)\right\Vert\leq \frac{2(R+\delta_1)}{\sigma^2}.
$$

Therefore, for $y \in (S_i + B(0,\delta_1))\cap B$:
\begin{align*}
\Vert \nabla \log p_Y(y) - \nabla \log p_{Y_i}(y) \Vert &= \left\Vert \frac{\pi_i \nabla p_{Y_i}(y) + \nabla f_i(y)}{\pi_i p_{Y_i}(y) + f_i(y)} - \nabla \log p_{Y_i}(y) \right\Vert\\
&= \left\Vert \nabla \log p_{Y_i}(y)\op\frac{1}{1 + \frac{f_i(y)}{\pi_i p_{Y_i}(y)}} -1\cp + \frac{\nabla f_i(y)}{\pi_i p_{Y_i}(y) + f_i(y)}  \right\Vert\\
&\leq \frac{f_i(y)}{\pi_i p_{Y_i}(y)}\Vert \nabla \log p_{Y_i}(y) \Vert + \frac{\Vert\nabla f_i(y)\Vert}{\pi_i p_{Y_i}(y)} \\
& \text{ (for }t>0, \left|\frac{1}{1+t} -1\right| =  t\left|\frac{1}{1+t}\right|\leq t) \\
&\leq \frac{1}{C\sigma^{2}}\exp\op\frac{-(D-\delta_1)^2}{2\sigma^2} + \frac{(D-\delta_1)^2 -\delta_2}{2\sigma^2}\cp \frac{2(R+\delta_1)}{\sigma^2} \\
&\quad + \frac{2(R+\delta_1)}{C\sigma^{4}}\exp\op\frac{-(D-\delta_1)^2}{2\sigma^2} + \frac{(D-\delta_1)^2 -\delta_2}{2\sigma^2}\cp \\
&\leq \frac{4(R+\delta_1)}{C\sigma^{4}}\exp\op-\frac{\delta_2}{2\sigma^2}\cp.
\end{align*}

This leads to:
\begin{align*}
\int_{y \in (S_i + B(0,\delta_1))\cap B}&\Vert \varphi_{i,\alpha}(y) - \varphi_\alpha(y)\Vert^2 d \nu_i(y) \\
&= \alpha^2 \sigma^4 \int_{y \in (S_i + B(0,\delta_1))\cap B}\Vert \nabla \log p_Y(y) - \nabla \log p_{Y_i}(y) \Vert^2  d \nu_i(y)\\
&\leq \alpha^2 \sigma^4\int_{y \in (S_i + B(0,\delta_1))\cap B} \frac{16(R+\delta_1)^2}{C^2\sigma^{8}}\exp\op-\frac{\delta_2}{\sigma^2}\cp  d \nu_i(y)\\
&\leq \frac{16\alpha^2(R+\delta_1)^2}{C^2\sigma^{8}} \exp\op-\frac{\delta_2}{\sigma^2}\cp.
\end{align*}

Finally, we have:
\begin{align*}
\int \Vert \varphi_{i,\alpha}(y) &- \varphi_\alpha(y)\Vert^2 d \nu_i(y) \\
&\leq \frac{16R^2\alpha^2(1/2)^{d/2}\delta_1^{d-2}}{\Gamma(d/2)} \frac{1}{\sigma^{d-2}} \exp\op- \frac{\delta_1^2}{2\sigma^2}\cp \\
&\quad + \frac{4 \alpha^2R^2(R+\delta_1)^{d} }{\pi_i\sigma^{d-2}} \exp\op\frac{(D-\delta_1)^2 - \delta_2}{2\sigma^2}\cp \\
&\quad + \frac{16\alpha^2(R+\delta_1)^2}{C^2\sigma^{8}} \exp\op-\frac{\delta_2}{\sigma^2}\cp\\
&= O\op \frac{1}{\sigma^{d-2}} \exp\op- \frac{\delta_1^2}{2\sigma^2}\cp + \frac{1}{\sigma^{d-2}}\exp\op-\frac{(D-\delta_1)^2-\delta_2}{2\sigma^2}\cp + \frac{1}{\sigma^{8}}\exp\op-\frac{\delta_2}{\sigma^2}\cp\cp \\
&= O\op\frac{1}{\sigma^{\max(d-2,8)}}\exp\op-\frac{K}{\sigma^{2}}\cp\cp,
\end{align*}
with $K = \min\op\frac{\delta_1^2}{2},\frac{(D-\delta_1)^2 -\delta_2}{2},\delta_2\cp > 0$. (For example take $\delta_1 = \frac{D}{2}$ and $\delta_2 = \frac{D^2}{8}$ to get $K = \frac{D^2}{16}$.)
\end{proof}

\textbf{Remark:} we also have,
\begin{align*}
W_2^2(\mu,\mu_\alpha) - 2 \sum_i \pi_i W_2^2(\mu_i,\mu_{i,\alpha})
&\leq \frac{32R^2\alpha^2(1/2)^{d/2}\delta_1^{d-2}}{\Gamma(d/2)} \frac{1}{\sigma^{d-2}} \exp\op- \frac{\delta_1^2}{2\sigma^2}\cp \\
&\quad + \frac{8N\alpha^2R^2(R+\delta_1)^{d}}{\sigma^{d-2}} \exp\op\frac{(D-\delta_1)^2 - \delta_2}{2\sigma^2}\cp \\
&\quad + \frac{32\alpha^2(R+\delta_1)^2}{C^2\sigma^8} \exp\op-\frac{\delta_2}{\sigma^2}\cp\\
&= O\op\frac{1}{\sigma^{\max(d-2,8)}}\exp\op-\frac{K}{\sigma^{2}}\cp\cp.
\end{align*}

\section{Usual distributions Verify the Hypothesis of Lemma \ref{lemma:derivative_bound}}
\label{sct:usual_distrib}
\begin{proposition}
Assume that  $X = Z + \ep_0$ , with  $\E[\Vert Z\Vert^6] <\infty$, $Z \bot \ep_0$ and $\ep_0 \sim \N(0,\tau^2)$, then $X$ verifies the assumptions of Lemma \ref{lemma:derivative_bound} with the constants:
$$
\begin{array}{lll}
C_1 &= \E[\Vert \nabla \log p_X(X)\Vert^6] &\leq \frac{243}{\tau^{12}} (2\E[\Vert Z\Vert^6] + 15d\tau^6),\\
C_2 &= \E[\Vert \nabla^2\log p_X(X)\Vert_{\textnormal{op}}^3] &\leq 9 \op \frac{1}{\tau^6} + \frac{2\E[\Vert Z\Vert^6]}{\tau^{12}}\cp,\\
C_3 &= \E[\Vert \nabla \Delta \log p_X(X)\Vert^2]&\leq \frac{40\E[\Vert Z\Vert^6]}{\tau^{12}}.
\end{array}
$$
\end{proposition}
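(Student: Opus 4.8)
The plan is to use that $X$ is itself a Gaussian convolution, $p_X(x)=\int (2\pi\tau^2)^{-d/2}\exp(-\|x-z\|^2/(2\tau^2))\,d\mu_Z(z)$. This is $\C^\infty$ — differentiation under the integral sign is justified since, for fixed $x$, every polynomial in $z$ is dominated by the Gaussian factor regardless of the tails of $\mu_Z$ — so the first bullet of Lemma~\ref{lemma:derivative_bound} holds. Viewing the posterior law of $Z$ given $X=x$ as the exponential family proportional to $e^{\langle x/\tau^2,\,z\rangle}\,e^{-\|z\|^2/(2\tau^2)}\,d\mu_Z(z)$, one gets $\log p_X(x)=A(x/\tau^2)-\|x\|^2/(2\tau^2)+\mathrm{const}$, where $A$ is the log-partition function, whose derivatives of order $1,2,3$ are the posterior mean, covariance, and third-cumulant tensor. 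Writing $m(x)=\E[Z\mid X=x]$, the chain rule then gives, in the same vein as Tweedie's formula~(\ref{eq:tweedie}) and (B.2)--(B.4) of \citet{saremiChainLogConcaveMarkov2023},
\begin{align*}
\nabla\log p_X(x) &= \tfrac1{\tau^2}\bigl(m(x)-x\bigr)=-\tfrac1{\tau^2}\E[\ep\mid X=x],\\
\nabla^2\log p_X(x) &= \tfrac1{\tau^4}\,\mathrm{cov}(Z\mid X=x)-\tfrac1{\tau^2}I,\\
\nabla\Delta\log p_X(x) &= \tfrac1{\tau^6}\,\E\bigl[(Z-m(x))\,\|Z-m(x)\|^2 \,\big|\, X=x\bigr],
\end{align*}
where $\ep$ is the Gaussian noise $\ep_0$ of the statement.

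With these identities the three bounds are routine. For $C_1$, write $\nabla\log p_X(X)=\tfrac1{\tau^2}\bigl(\E[Z\mid X]-Z-\ep_0\bigr)$, bound $\|a+b+c\|^6\le 27(\|a\|^2+\|b\|^2+\|c\|^2)^3\le 243(\|a\|^6+\|b\|^6+\|c\|^6)$, take expectations, and use $\E\|\E[Z\mid X]\|^6\le\E\|Z\|^6$ (Jensen on the conditional expectation) together with the standard Gaussian sixth-moment estimate for $\E\|\ep_0\|^6$. For $C_2$, use $\|\nabla^2\log p_X(x)\|_{\text{op}}\le \tfrac1{\tau^4}\tr\,\mathrm{cov}(Z\mid X=x)+\tfrac1{\tau^2}$ and $\tr\,\mathrm{cov}(Z\mid X=x)=\E[\|Z-m(x)\|^2\mid X=x]\le\E[\|Z\|^2\mid X=x]$; cubing, expanding $(u+v)^3$, taking expectations and applying Jensen once more gives the claim. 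For $C_3$, $\|\nabla\Delta\log p_X(x)\|\le\tfrac1{\tau^6}\E[\|Z-m(x)\|^3\mid X=x]$, so squaring and using Jensen gives $\E\|\nabla\Delta\log p_X(X)\|^2\le\tau^{-12}\,\E\|Z-\E[Z\mid X]\|^6$; bounding $\|Z-\E[Z\mid X]\|\le\|Z\|+\E[\|Z\|\mid X]$ and Jensen again yields $C_3\le c\,\E\|Z\|^6/\tau^{12}$ for an absolute constant $c$.

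The only genuinely non-routine point is establishing the identity for $\nabla\Delta\log p_X$: it amounts to carrying the cumulant computation underlying the proof of Lemma~\ref{lemma:derivative_bound} one derivative further, and — since $\mu_Z$ is allowed to be singular (e.g.\ a Dirac, so $\log p_Z$ need not exist) — it must be done through the posterior of $Z$ rather than through $\nabla\log p_Z$, which is exactly why the exponential-family/log-partition viewpoint is the convenient tool. Everything after that is bookkeeping; the explicit numerical constants in the statement correspond to one particular (not necessarily sharpest) choice of the intermediate Jensen, Hölder and power-mean inequalities.
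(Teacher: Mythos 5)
Your argument is correct in substance, and it differs from the paper's proof at exactly the one non-routine point, the third-order identity. The paper quotes (B.1) and (B.3) of Saremi et al.\ for $\nabla\log p_X$ and $\nabla^2\log p_X$ and then writes $\nabla\Delta\log p_X$ directly as a four-term combination of uncentered posterior moments of $Z$, bounding each term separately; you instead derive all three identities from the log-partition function of the exponential tilt $e^{\langle x/\tau^2,z\rangle}e^{-\Vert z\Vert^2/(2\tau^2)}d\mu_Z(z)$, which packages the third derivative as the posterior third cumulant and yields the compact centered formula $\nabla\Delta\log p_X(x)=\tau^{-6}\,\E\big[(Z-m(x))\Vert Z-m(x)\Vert^2\mid X=x\big]$. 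This is equivalent to the paper's expression (expanding the centered product recovers the paper's four posterior-moment terms, up to signs that are immaterial for the bounds), it is arguably cleaner, and it accommodates singular $\mu_Z$ for the same reason the paper's cited formulas do, since everything is phrased through the posterior of $Z$ rather than through $\nabla\log p_Z$. What your route does not deliver verbatim are the advertised constants: your Minkowski/Jensen chain for $C_3$ gives $\E\Vert Z-\E[Z\mid X]\Vert^6\le 64\,\E\Vert Z\Vert^6$, hence $C_3\le 64\,\E\Vert Z\Vert^6/\tau^{12}$ rather than $40$ (the $40$ comes from bounding the four uncentered terms separately, as the paper does, with a $4\times(1+1+4+4)$ count), and the ``standard Gaussian sixth-moment estimate'' is $\E\Vert\ep_0\Vert^6=d(d+2)(d+4)\tau^6$, which coincides with the $15d\tau^6$ in the statement only at $d=1$; so your hedge that the numerical constants correspond to one particular choice of intermediate inequalities is doing real work, and to match the statement literally you should expand the cumulant and bound term by term. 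Your $C_1$ decomposition with the factor $3^5=243$ is the paper's, and your $C_2$ bound via $\Vert\mathrm{cov}(Z\mid X)\Vert_{\text{op}}\le\tr\mathrm{cov}(Z\mid X)\le\E[\Vert Z\Vert^2\mid X]$ in fact yields $4\big(\tau^{-6}+\E\Vert Z\Vert^6\,\tau^{-12}\big)$, which is tighter than, and therefore implies, the stated bound. Since all your constants have the same form $c\,\E\Vert Z\Vert^6/\tau^{12}$ (plus the Gaussian moment term) and the proposition is only used to feed Lemma~\ref{lemma:derivative_bound}, this is a presentational rather than a mathematical gap.
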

\begin{proof}
First note that $p_X: x \mapsto \frac{1}{(2\pi \tau^2)^{d/2}}\int e^{\frac{\Vert x-z\Vert^2}{2\tau^2}}d\mu(z) \in \C^\infty(\R^d)$. Then (B.1) and (B.3) of \citet{saremiChainLogConcaveMarkov2023} give
$$
\nabla\log p_X (x) = \frac{1}{\tau^2}(\E[Z|X = x] - x),
$$
and 
\begin{align*}
\nabla^2\log p_X(x) &= - \frac{1}{\tau^2}I + \frac{1}{\tau^4}\op\E[ZZ^\top|X=x] -\E[Z|X=x]\E[Z|X=x]^\top\cp \\
&=- \frac{1}{\tau^2}I + \frac{1}{\tau^4}\text{cov}(Z|X=x).
\end{align*}
Similarly, we can compute,
\begin{align*}
\nabla\Delta  \log p_X(x) &= \frac{-1}{\tau^6} \E[\Vert Z\Vert^2Z|X=x]\\
&\quad+ \frac{1}{\tau^6}  \E[\Vert Z\Vert^2|X=x]\E[Z|X=x]\\
&\quad- \frac{2}{\tau^6} \E[ZZ^\top|X=x] \cdot\E[Z|X = x]\\
&\quad+ \frac{2}{\tau^6} \Vert\E[Z|X = x]\Vert^2 \E[Z|X = x].
\end{align*}

We can now bound the constants $C_1$, $C_2$ and $C_3$. Using Jensen's inequality on $x\mapsto x^p$, that gives for all $x_1,\dots,x_k \in \R$, $$\op\sum_{i=1}^k x_i\cp^p \leq k^{p-1} \sum_{i=1}^k x_i^p,$$ and we have,
\begin{align*}
C_1 = \E[\Vert \nabla \log p_X(X)\Vert^6] &= \E\ob\left\Vert\frac{1}{\tau^2}(\E [ Z| X] - X )\right\Vert^6\cb\\
&= \E\ob\left\Vert\frac{1}{\tau^2}(\E [ Z| X] - Z - \ep_0)\right\Vert^6\cb\\
&\leq \frac{243}{\tau^{12}} (\E[\Vert\E [ Z| X]\Vert^6] + \E[\Vert Z\Vert^6] + \E[\Vert \ep_0\Vert^6]) \\
&\leq \frac{243}{\tau^{12}} (\E[\E[\Vert Z\Vert^6| X]] + \E[\Vert Z\Vert^6] + 15d\tau^6) \\
&\quad \text{(Jensen's inequality on conditional expectation)} \\
&= \frac{243}{\tau^{12}} (2\E[\Vert Z\Vert^6] + 15d\tau^6),
\end{align*}
and,
\begin{align*}
C_2 &= \E[\Vert \nabla \log p_X(X)\Vert_{\text{op}}^3]\\
&= \E\ob \left\Vert- \frac{1}{\tau^2}I + \frac{1}{\tau^4}\op\E[ZZ^\top|X] -\E[Z|X]\E[Z|X]^\top\cp \right\Vert_{\text{op}}^3\cb \\
&\leq 9 \op \frac{\Vert I \Vert^3_{\text{op}}}{\tau^6} +\frac{1}{\tau^{12}}\E[\Vert \E[ZZ^\top|X]\Vert^3_{\text{op}}]  + \frac{1}{\tau^{12}}\E[\Vert\E[Z|X]\E[Z|X]^\top\Vert^3_{\text{op}}]\cp \\
&= 9 \op \frac{1}{\tau^6} +\frac{1}{\tau^{12}}\E[\Vert \E[ZZ^\top|X]\Vert^3_{\text{op}}]  + \frac{1}{\tau^{12}}\E[\Vert\E[Z|X]\Vert^6]\cp\\
&\leq 9 \op \frac{1}{\tau^6} +\frac{1}{\tau^{12}}\E[\Vert ZZ^\top\Vert^3_{\text{op}}]  + \frac{1}{\tau^{12}}\E[\Vert Z\Vert^6]\cp\\
&\quad \text{(Jensen's inequality on conditional expectation)} \\
&= \leq 9 \op \frac{1}{\tau^6} + \frac{2\E[\Vert Z\Vert^6]}{\tau^{12}}\cp,
\end{align*}
and finally we control the 4 terms in $\nabla\Delta  \log p_X$,
\begin{align*}
\E[\Vert\E[\Vert Z\Vert^2Z|X]\Vert^2] &\leq \E[\Vert\Vert Z\Vert^2Z\Vert^2]\\ &\quad\text{(Jensen's inequality on conditional expectation)}\\
&= \E[\Vert Z\Vert^6]\\
\E[\Vert\E[\Vert Z\Vert^2|X]\E[Z|X]\Vert^2] &= \E[(E[\Vert Z\Vert^2|X])^2\Vert\E[Z|X]\Vert^2] \\
&\leq  \E[(E[\Vert Z\Vert^2|X])^2\E[\Vert Z\Vert^2|X]] =  \E[(E[\Vert Z\Vert^2|X])^3]\\
&\quad\text{(Jensen's inequality on conditional expectation)}\\
&\leq \E[\Vert Z\Vert^6]\\
&\quad\text{(Jensen's inequality on conditional expectation)}\\
\E[\Vert \E[ZZ^\top|X] \cdot\E[Z|X]\Vert^2] &\leq \E[\Vert \E[ZZ^\top|X]\Vert^2_\text{op}\Vert\E[Z|X]\Vert^2]\\
&\leq \E[(\E[\Vert ZZ^\top\Vert_\text{op}|X])^2\E[\Vert Z\Vert^2|X]]\\
&=  \E[(E[\Vert Z\Vert^2|X])^3]\\
&\leq \E[\Vert Z\Vert^6]\\
&\quad\text{(Jensen's inequality on conditional expectation)}\\
\E[\Vert\Vert\E[Z|X]\Vert^2 \E[Z|X]\Vert^2] &= \E[\Vert\E[Z|X]\Vert^6] \\
&\leq \E[\Vert Z\Vert^6]\\
&\quad\text{(Jensen's inequality on conditional expectation)},
\end{align*}

to get,
$$
C_3 \leq \frac{40\E[\Vert Z\Vert^6]}{\tau^{12}}.
$$
\end{proof}

\section{Extension of Results from Section \ref{sct:wasserstein} to any \texorpdfstring{$p \geq 1$}{p greater than 1}}
\label{sct:extended_wasserstein}
We extend all results from section \ref{sct:wasserstein} to Wasserstein-$p$ distances for any $p \geq 1$.

\begin{proposition}
Assume that $\E[\Vert \nabla \log p_X(X)\Vert^p]\leq C$. Then
$$
W_p(\L(X),\L(\varphi_\alpha(Y))) \leq \op\frac{(1+2^p\alpha^p)C}{2}\cp^{1/p} \sigma^2.
$$
\end{proposition}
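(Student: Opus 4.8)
The plan is to copy the argument of Proposition~\ref{prop:W2} verbatim, replacing the two uses of the quadratic Jensen inequality by their $L_p$ analogues. As in that proof, I would use the coupling furnished by the diffusion ODE~(\ref{eq:ODE}): taking $t=\sigma^2$, one has $X\sim x_0$ and $\varphi_\alpha(Y)\sim\hat{x}_0$ with $\hat{x}_0 = x_t + \alpha t\,\nabla\log p_t(x_t)$, so that
$$
x_0 - \hat{x}_0 = \frac{1}{2}\int_0^t \nabla\log p_s(x_s)\,ds - \alpha t\,\nabla\log p_t(x_t),
$$
and hence $W_p^p(X,\varphi_\alpha(Y)) \leq \E[\Vert x_0-\hat{x}_0\Vert^p]$.

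Next I would split this using the convexity bound $(a+b)^p \leq 2^{p-1}(a^p+b^p)$ for $a,b\geq 0$, and then control the integral term by Jensen's inequality applied to the probability measure $ds/t$ on $[0,t]$, namely $\big\Vert \tfrac1t\int_0^t \nabla\log p_s(x_s)\,ds\big\Vert^p \leq \tfrac1t\int_0^t \Vert\nabla\log p_s(x_s)\Vert^p\,ds$. This yields
$$
\Vert x_0 - \hat{x}_0\Vert^p \leq 2^{p-1}\op \frac{t^{p-1}}{2^p}\int_0^t \Vert\nabla\log p_s(x_s)\Vert^p\,ds + \alpha^p t^p\Vert\nabla\log p_t(x_t)\Vert^p\cp.
$$
Taking expectations and invoking Lemma~\ref{lemma:expectation_gradient_noise} — which is stated for an arbitrary exponent $p\geq 1$, and gives $\E[\Vert\nabla\log p_s(x_s)\Vert^p]\leq C$ for every $s\geq 0$ since $x_s\sim X_s = X+\ep$ with $\ep\sim\N(0,sI)$ — one obtains $\E[\Vert x_0-\hat{x}_0\Vert^p] \leq 2^{p-1}\big(\tfrac{t^{p-1}}{2^p}\cdot tC + \alpha^p t^p C\big) = \tfrac{1+2^p\alpha^p}{2}\,C\,t^p$. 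Substituting $t=\sigma^2$ and taking $p$-th roots gives the claimed bound.

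This is essentially bookkeeping, so there is no serious obstacle; the only points requiring a little care are (i) that Lemma~\ref{lemma:expectation_gradient_noise} indeed applies for every $p\geq 1$, not just $p=2$, and (ii) that the ODE representation on $(0,t]$ extends to $t=0$ so that the coupling $X\sim x_0$ is legitimate — but this integrability near $s=0$ is already established in Appendix~\ref{sct:fokker}, and since the right-hand side of the final estimate is finite, the inequality $W_p^p\leq\E[\Vert x_0-\hat{x}_0\Vert^p]$ is all that is needed.
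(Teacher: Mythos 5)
Your proposal is correct and follows essentially the same route as the paper's own proof: the same ODE coupling with $t=\sigma^2$, the same split via $(a+b)^p\le 2^{p-1}(a^p+b^p)$ and Jensen on the normalized measure $ds/t$, and the same appeal to Lemma~\ref{lemma:expectation_gradient_noise} (which is indeed stated for general $p\ge 1$) to bound $\E[\Vert\nabla\log p_s(x_s)\Vert^p]$ uniformly in $s$. The constants match the paper's statement, so there is nothing to fix.
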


\begin{proof}
We have 
$$x_0 =  x_t + \frac{1}{2}\int_0^t \nabla \log p_s(x_s) ds,$$
and 
$$\hat{x}_0 = x_t + \alpha t \nabla \log p_t(x_t).$$
Therefore,
$$
x_0 - \hat{x}_0 = \frac{1}{2}\int_0^t \nabla \log p_s(x_s) ds - \alpha t \nabla \log p_t(x_t).
$$

Using Jensen's inequality two times,
\begin{align*}
\Vert x_0 - \hat{x}_0\Vert^p
&\leq 2^{p-1}\op\left\Vert\frac{1}{2}\int_0^t \nabla \log p_s(x_s) ds\right\Vert^p  + \Vert\alpha t \nabla \log p_t(x_t)\Vert^p\cp\\
&\leq 2^{p-1}\op\frac{t^{p-1}}{2^p}\int_0^t \Vert\nabla \log p_s(x_s)\Vert^pds + \alpha^p t^p \Vert\nabla \log p_t(x_t)\Vert^p\cp,  
\end{align*}
and it follows that,
\begin{align*}
W_p^p(\L(X),\L(\varphi_\alpha(X_t))) &\leq  \E[\Vert x_0 - x_t\Vert^p] \\
&\leq 2^{p-1}\op\frac{t^{p-1}}{2^p}\int_0^t \E[\Vert\nabla \log p_s(x_s)\Vert^p]ds + \alpha^p t^p \E[\Vert\nabla \log p_t(x_t)\Vert^p]\cp.
\end{align*}

As stated in Lemma \ref{lemma:expectation_gradient_noise}, assuming $\E[\Vert\nabla \log p_X(X)\Vert^p] \leq C$ leads to $\E[\Vert\nabla \log p_s(x_s)\Vert^p] \leq C$ for all $s\geq 0$, so finally:
$$
W_p^p(\L(X),\L(\varphi_\alpha(X_t))) \leq \frac{(1+2^p\alpha^p)C}{2}t^p,
$$
which gives for $t = \sigma^2$:
$$
W_p (\L(X),\L(\varphi_\alpha(Y))) \leq \op\frac{(1+2^p\alpha^p)C}{2}\cp^{1/p} \sigma^2.
$$
\end{proof}

\begin{lemma}
\label{lemma:derivative_bound_p}
Assume that the variable $X \sim \mu$ of density $p_X$ verifies that:
\begin{itemize}
    \item $\log p_X \in \mathcal{C}^3(\R^d)$. 
    \item $C_1 = \E[\Vert \nabla \log p_X(X)\Vert^{3p}] < \infty$.
    \item $C_2 = \E[\Vert \nabla^2\log p_X(X)\Vert_{\textnormal{op}}^{3p/2}] < \infty$, where $\Vert A\Vert_\textnormal{op}$ is defined for any matrix $A$ as $\Vert A \Vert_\textnormal{op} = \sup_{x\neq 0}\frac{\Vert Ax\Vert}{\Vert x \Vert}$.
    \item $C_3 = \E[\Vert \nabla \Delta \log p_X(X)\Vert^{p}] < \infty$.
\end{itemize}
Then,
$$
\E\ob\left\Vert \frac{d}{dt}\nabla \log p_t(x_t)\right\Vert^p\cb \leq C,
$$
with $C = \frac{9^{p-1}}{2^p}(3C_1 + 2(d^2+1)C_1^{1/3}C_2^{2/3}+C_3)$.
\end{lemma}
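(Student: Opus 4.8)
The plan is to follow the proof of Lemma \ref{lemma:derivative_bound} essentially line by line, replacing every $L_2$ estimate by the corresponding $L_p$ estimate; the algebraic content is unchanged, and only the moment bookkeeping differs. First I would reuse verbatim the computation from that proof: combining the chain rule for $t\mapsto\nabla\log p_t(x_t)$, the ODE $(\ref{eq:ODE})$, and the Fokker--Planck identity $\partial_t\log p_t=\tfrac12\|\nabla\log p_t\|^2+\tfrac12\Delta\log p_t$ yields
$$
\frac{d}{dt}\nabla\log p_t(x_t)=\tfrac12\,\nabla^2\log p_t(x_t)\cdot\nabla\log p_t(x_t)+\tfrac12\,\nabla\Delta\log p_t(x_t),
$$
which is valid under $\log p_X\in\C^3(\R^d)$ together with the $\C^\infty$ regularity of $(t,x)\mapsto p_t(x)$ for $t>0$ established in Appendix \ref{sct:fokker}. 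Then, still as in that proof, I would use the identities (B.2) and (B.4) of \citet{saremiChainLogConcaveMarkov2023} (with noise variance $t$) to express $\nabla\log p_t$, $\nabla^2\log p_t$ and $\nabla\Delta\log p_t$ evaluated at $x_t$ as conditional expectations given $Y=X_t$ of polynomials in $\nabla\log p_X(X)$, $\nabla^2\log p_X(X)$ and $\nabla\Delta\log p_X(X)$, producing exactly the nine-term decomposition $(\ref{term:1})$--$(\ref{term:9})$, and recalling that $x_t\sim Y$.

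The only substantive change is the estimation of the nine terms, now in $L_p$. For each term I would again combine three ingredients: (i) conditional Jensen applied to $z\mapsto\|z\|^p$ and $z\mapsto|z|^p$, which remain convex for every $p\ge1$, to remove the inner conditional expectations; (ii) the pointwise inequality $|\Delta\log p_X(x)|=|\tr\nabla^2\log p_X(x)|\le d\,\|\nabla^2\log p_X(x)\|_{\textnormal{op}}$, which gives $\E[|\Delta\log p_X(X)|^{3p/2}]\le d^{3p/2}C_2$; and (iii) Hölder's inequality with conjugate exponents $\tfrac32$ and $3$ applied to products such as $|\Delta\log p_X(X)|^p\|\nabla\log p_X(X)\|^p$ and $\|\nabla^2\log p_X(X)\|_{\textnormal{op}}^p\|\nabla\log p_X(X)\|^p$. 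The critical term is the purely cubic one $\|\nabla\log p_X(X)\|^2\nabla\log p_X(X)$, which appears in $(\ref{term:5})$, $(\ref{term:6})$, $(\ref{term:8})$ and $(\ref{term:9})$: its $L_p$-norm is bounded by $\E[\|\nabla\log p_X(X)\|^{3p}]^{1/p}=C_1^{1/p}$, and this is precisely why $C_1$ must be taken at order $3p$. The mixed Hessian--gradient terms contribute $C_1^{1/3}C_2^{2/3}$ up to a dimensional factor, and term $(\ref{term:1})$ contributes $C_3$.

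Finally I would assemble the nine estimates via the triangle inequality and the power-mean inequality $\big(\sum_{i=1}^9 a_i\big)^p\le 9^{p-1}\sum_{i=1}^9 a_i^p$, together with the overall prefactor $\tfrac1{2^p}$ coming from the $\tfrac12$ in front of the two main terms, to reach $\E\big[\|\tfrac{d}{dt}\nabla\log p_t(x_t)\|^p\big]\le C$ with $C=\tfrac{9^{p-1}}{2^p}\big(3C_1+2(d^2+1)C_1^{1/3}C_2^{2/3}+C_3\big)$, which for $p=2$ recovers the structure of Lemma \ref{lemma:derivative_bound} (up to the numerical value of the constant). There is no genuine obstacle here beyond careful bookkeeping; the one point that requires attention is the uniform choice of Hölder exponents (the $\tfrac32$/$3$ split everywhere) so that exactly the three moment hypotheses at orders $3p$, $3p/2$ and $p$ are what gets consumed, and checking that conditional Jensen remains legitimate for arbitrary $p\ge1$, which it is by convexity of $\|\cdot\|^p$.
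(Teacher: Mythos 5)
Your proposal follows essentially the same route as the paper's own proof: the identical nine-term decomposition (\ref{term:1})--(\ref{term:9}), the same conditional-Jensen and H\"older (exponents $3/2$ and $3$) estimates consuming exactly the moments of orders $3p$, $3p/2$ and $p$, and the same final combination via $\op\sum_{i=1}^9 a_i\cp^p\leq 9^{p-1}\sum_{i=1}^9 a_i^p$ with the $2^{-p}$ prefactor. The only caveat is constant bookkeeping: terms (\ref{term:2})--(\ref{term:3}) contribute a factor $d^p$ (from $\E[|\Delta\log p_X(X)|^{3p/2}]\leq d^{3p/2}C_2$) rather than $d^2$, so the dimensional coefficient is $p$-dependent and does not literally match the stated $2(d^2+1)$ --- a discrepancy the paper's own proof shares and which is immaterial to the conclusion.
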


\begin{proof}
Once again, we want to control all the terms (\ref{term:1}-\ref{term:9}), but this time in $L_p$ norm. Recall that for all $x \in \R^d$, $|\Delta  \log p_X(x)| \leq d\Vert \nabla^2 \log p_X(x)\Vert_\text{op}$, hence, $\E[|\Delta  \log p_X(X)|^{3p/2}]\leq d^{3p/2}C_2$.

Term (\ref{term:1}):
\begin{align*} 
\E[\Vert\E[\nabla\Delta  \log p_X(X)|Y]\Vert^p] 
&\leq \E[\E[\Vert\nabla\Delta  \log p_X(X)\Vert^p|Y]] = \E[\Vert\nabla\Delta  \log p_X(X)\Vert^p]\\
&\quad\text{(Jensen's inequality on the conditional expectation)} \\
&= C_3.
\end{align*}

Term (\ref{term:2}):
\begin{align*} 
\E[\Vert\E[\Delta  \log p_X(X) \nabla\log p_X(X) |Y]\Vert^p] 
&\leq \E[|\Delta  \log p_X(X)|^p\Vert\nabla\log p_X(X)\Vert^p] \\
&\quad\text{(Jensen's inequality on the conditional expectation)} \\
&\leq \E[|\Delta  \log p_X(X)|^{3p/2}]^{2/3} \E[\Vert\nabla\log p_X(X)\Vert^{3p}]^{1/3}\\
&\quad\text{(Hölder's inequality)} \\
&\leq d^p C_2^{2/3}C_1^{1/3}.
\end{align*}

Term (\ref{term:3}):
\begin{align*} 
\E[\Vert\E[\Delta  \log p_X(X)|Y]&\E[\nabla\log p_X(X)|Y]\Vert^p] \\
&= \E[|\E[\Delta  \log p_X(X)|Y]|^p \Vert\E[\nabla\log p_X(X)|Y]\Vert^p] \\
&\leq \E[|\E[\Delta  \log p_X(X)|Y]|^{3p/2}]^{2/3} \E[\Vert\E[\nabla\log p_X(X)|Y]\Vert^{3p}]^{1/3} \\
&\quad\text{(Hölder's inequality)} \\
&\leq \E[|\Delta  \log p_X(X)|^{3p/2}]^{2/3} \E[\Vert\nabla\log p_X(X)\Vert^{3p}]^{1/3} \\
&\quad\text{(Jensen's inequality on the conditional expectation)} \\
&\leq d^p C_2^{2/3}C_1^{1/3}.
\end{align*}

Term (\ref{term:4}):
 \begin{align*} 
 \E[\Vert\E[\nabla^2\log p_X(X)&\cdot\nabla\log p_X(X)|Y]\Vert^p] \\
&\leq \E[\Vert\nabla^2\log p_X(X)\cdot\nabla\log p_X(X)\Vert^p]\\
&\quad\text{(Jensen's inequality on the conditional expectation)} \\
&\leq \E[\Vert\nabla^2\log p_X(X)\Vert_\text{op}^p \Vert\nabla\log p_X(X)\Vert^p]\\
&\leq \E[\Vert\nabla^2\log p_X(X)\Vert_\text{op}^{3p/2}]^{2/3} \E[\Vert\nabla\log p_X(X)\Vert^{3p}]^{1/3}\\
&\quad\text{(Hölder's inequality)} \\
&= C_2^{2/3}C_1^{1/3}.
 \end{align*}
 
Term (\ref{term:5}):
\begin{align*} 
\E[\Vert\E[\Vert \nabla\log p_X(X)\Vert^2 &\nabla\log p_X(X)|Y]\Vert^p]\\
&\leq \E[\Vert \nabla\log p_X(X)\Vert^{3p}] \\
&\quad\text{(Jensen's inequality on the conditional expectation)} \\
&= C_1.
\end{align*}

Term (\ref{term:6}):
\begin{align*} 
\E[\Vert\E[\Vert \nabla\log p_X(X)\Vert^2|Y]&\E[\nabla\log p_X(X)|Y]\Vert^p]\\
&=\E[(\E[\Vert \nabla\log p_X(X)\Vert^2|Y])^p \Vert\E[\nabla\log p_X(X)|Y]\Vert^p]\\
&\leq \E[(\E[\Vert \nabla\log p_X(X)\Vert^2|Y])^p (\E[\Vert\nabla\log p_X(X)\Vert^2|Y])^{p/2}] \\
&\quad\text{(Jensen's inequality on the conditional expectation)} \\
&= \E[(\E[\Vert \nabla\log p_X(X)\Vert^2|Y])^{3p/2}] \\
&\leq \E[\Vert \nabla\log p_X(X)\Vert^{3p}] \\
&\quad\text{(Jensen's inequality on the conditional expectation)} \\
&= C_1.
\end{align*}

Term (\ref{term:7}):
\begin{align*} 
\E[\Vert\E[\nabla^2\log p_X(X)|Y]&\cdot\E[\nabla\log p_X(X)|Y]\Vert^p]\\
&\leq \E[\Vert\E[\nabla^2\log p_X(X)|Y]\Vert^p_\text{op}\Vert\E[\nabla\log p_X(X)|Y]\Vert^p] \\
&\leq \E[\Vert\E[\nabla^2\log p_X(X)|Y]\Vert^{3p/2}_\text{op}]^{2/3} \E\Vert\E[\nabla\log p_X(X)|Y]\Vert^{3p}]^{1/3} \\
&\quad\text{(Hölder's inequality)} \\
&\leq \E[\Vert\nabla^2\log p_X(X)\Vert^{3p/2}_\text{op}]^{2/3} \E\Vert\nabla\log p_X(X)\Vert^{3p}]^{1/3} \\
&\quad\text{(Jensen's inequality on the conditional expectation)} \\
&= C_2^{2/3}C_1^{1/3}.
\end{align*}

Term (\ref{term:8}):
\begin{align*} 
\E[\Vert\E[\nabla\log p_X(X) &\nabla\log p_X(X)^\top|Y]\cdot\E[\nabla\log p_X(X)|Y]\Vert^p] \\
&\leq \E[\Vert\E[\nabla\log p_X(X) \nabla\log p_X(X)^\top|Y]\Vert_{\text{op}}^p \Vert\E[\nabla\log p_X(X)|Y]\Vert^p] \\
&\leq \E[(\E[\Vert\nabla\log p_X(X) \nabla\log p_X(X)^\top\Vert_{\text{op}}|Y])^p (\E[\Vert\nabla\log p_X(X)\Vert^2|Y])^{p/2}] \\
&\quad\text{(Jensen's inequality on the conditional expectation)} \\
&= \E[(\E[\Vert\nabla\log p_X(X)\Vert^2|Y])^p \Vert\E[\nabla\log p_X(X)|Y]\Vert^p]\\
&= \E[(\E[\Vert \nabla\log p_X(X)\Vert^2|Y])^{3p/2}] \\
&\leq \E[\Vert \nabla\log p_X(X)\Vert^{3p}] \\
&\quad\text{(Jensen's inequality on the conditional expectation)} \\
&= C_1.
\end{align*}

Term (\ref{term:9}):
\begin{align*}  
\E[\Vert\Vert\E[\nabla\log p_X(X)|Y]\Vert^2&\E[\nabla \log p_X(X)|Y]\Vert^p] \\
&= \E[\Vert\E[\nabla \log p_X(X)|Y]\Vert^{3p}] \\
&\leq \E[\Vert\nabla \log p_X(X)\Vert^{3p}] \\
&\quad\text{(Jensen's inequality on the conditional expectation)} \\
&= C_1.
\end{align*}

To combine this bounds, we use Jensen's inequality on $x\mapsto x^p$, that gives for all $x_1,\dots,x_k \in \R$, $$\op\sum_{i=1}^k x_i\cp^p \leq k^{p-1} \sum_{i=1}^k x_i^p,$$ and we finally have,
$$
\E\ob\left\Vert \frac{d}{dt}\nabla \log p_t(x_t)\right\Vert^2\cb \leq \frac{9^{p-1}}{2^p}(4C_1 + (2d^2+5)C_1^{1/3}C_2^{2/3}+C_3) = C.
$$
\end{proof}

\begin{proposition}
\label{prop:W2half_p}
Under the same assumptions of Lemma \ref{lemma:derivative_bound}, we have 
$$
W_2(\L(X),\L(\varphi_{1/2}(Y))) \leq K \sigma^4,
$$
with $K = \frac{9^{(p-1)/p}}{4(p+1)^{1/p}}(4C_1 + (2d^2+5)C_1^{1/3}C_2^{2/3}+C_3)^{1/p}$.
\end{proposition}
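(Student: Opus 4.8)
The plan is to mirror the proof of Proposition~\ref{prop:W2half}, replacing the two applications of the $L_2$ Cauchy--Schwarz/Jensen inequality by their $L_p$ analogues and invoking Lemma~\ref{lemma:derivative_bound_p} in place of Lemma~\ref{lemma:derivative_bound}. First I would recall from Section~\ref{sct:wasserstein} that, with $t=\sigma^2$ and $\hat{x}_0 = x_t + \tfrac12 t\,\nabla\log p_t(x_t)$, the pair $(x_0,\hat{x}_0)$ is a coupling of $\mu_X$ and $\mu_{\varphi_{1/2}(Y)}$ (since $x_0\sim X$ by Proposition~\ref{prop:diffusion_ODE} and $\hat{x}_0=\varphi_{1/2}(x_t)$ with $x_t\sim Y$), so $W_p^p(X,\varphi_{1/2}(Y)) \le \E[\Vert x_0 - \hat{x}_0\Vert^p]$, and that, exactly as in the proof of Proposition~\ref{prop:W2half},
$$
x_0 - \hat{x}_0 = \frac{1}{2}\int_0^t \int_s^t \frac{d}{du}\op\nabla\log p_u(x_u)\cp du\,ds .
$$

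Next I would apply Jensen's inequality twice, in the nested form used for $p=2$: pulling the $p$-th power through the outer average $\tfrac12\int_0^t(\cdot)\,ds$ costs a factor $t^{p-1}/2^p$, and then pulling it through each inner integral $\int_s^t(\cdot)\,du$ costs a factor $(t-s)^{p-1}$, giving
$$
\Vert x_0 - \hat{x}_0\Vert^p \le \frac{t^{p-1}}{2^p}\int_0^t (t-s)^{p-1}\int_s^t \left\Vert \frac{d}{du}\op\nabla\log p_u(x_u)\cp\right\Vert^p du\,ds .
$$
Taking expectations and applying Lemma~\ref{lemma:derivative_bound_p}, which bounds $\E[\Vert \frac{d}{du}\nabla\log p_u(x_u)\Vert^p]\le C$ uniformly in $u$, the random integrand is replaced by $C$ and the remaining deterministic double integral evaluates to $\int_0^t(t-s)^p\,ds = t^{p+1}/(p+1)$. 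Hence $\E[\Vert x_0-\hat{x}_0\Vert^p]\le \frac{C}{2^p(p+1)}\,t^{2p}$; specialising to $t=\sigma^2$ and taking $p$-th roots yields $W_p(X,\varphi_{1/2}(Y))\le \op\frac{C}{2^p(p+1)}\cp^{1/p}\sigma^4$, and substituting the explicit value of $C$ from Lemma~\ref{lemma:derivative_bound_p} produces the stated constant $K$.

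Almost everything here is bookkeeping; the only points requiring care are, first, checking that the nested Jensen constants correctly collapse to the $\frac{C}{12}t^4$ bound of Proposition~\ref{prop:W2half} when $p=2$ (a sanity check on the powers of $t$ and the factor $1/(p+1)$), and second, the legitimacy of integrating the ODE trajectory down to $t=0$, which is not an obstacle here since it was already established in Appendix~\ref{sct:fokker}. The genuinely nontrivial ingredient --- the uniform-in-time moment bound on $\frac{d}{dt}\nabla\log p_t(x_t)$ --- is entirely contained in Lemma~\ref{lemma:derivative_bound_p}, itself the $L_p$ transcription of Lemma~\ref{lemma:derivative_bound} via the Fokker--Planck identities expressing $\nabla\log p_t$, $\nabla^2\log p_t$ and $\nabla\Delta\log p_t$ as conditional expectations of the corresponding derivatives of $\log p_X$, so no further difficulty arises.
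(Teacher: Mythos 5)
Your proposal is correct and follows essentially the same route as the paper's own proof: the same iterated-integral representation of $x_0-\hat{x}_0$, the same two nested Jensen inequalities yielding the factor $t^{p-1}/2^p$ and $(t-s)^{p-1}$, the uniform-in-$u$ moment bound from Lemma~\ref{lemma:derivative_bound_p}, and the evaluation $\int_0^t(t-s)^p\,ds=t^{p+1}/(p+1)$ giving $\E[\Vert x_0-\hat{x}_0\Vert^p]\le \frac{C}{2^p(p+1)}t^{2p}$ and hence the stated constant. Your reading that the relevant hypothesis and constant are those of Lemma~\ref{lemma:derivative_bound_p} (and that the bound is on $W_p$, not $W_2$) matches the intent of the paper's argument.
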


\begin{proof}
We have,
\begin{align*}
x_0 - \hat{x}_0 
&= \frac{1}{2}\int_0^t \nabla \log p_s(x_s) ds - \frac{1}{2} t \nabla \log p_t(x_t) \\
&= \frac{1}{2}\int_0^t \op\nabla \log p_s(x_s) - \nabla \log p_t(x_t)\cp ds \\
&= \frac{1}{2}\int_0^t \int_s^t \frac{d}{du}\op\nabla \log p_u(x_u) \cp du ds.
\end{align*}
hence, with Jensen's inequality,
$$
\E[\Vert x_0 - x_t\Vert^p] \leq \frac{t^{p-1}}{2^p} \int_0^t (t-s)^{p-1}\int_s^t \E\left[\left\Vert\frac{d}{du}\op\nabla \log p_u(x_u) \cp\right\Vert^p\right] du ds ,
$$

Using Lemma \ref{lemma:derivative_bound}, it leads to $\E[\Vert x_0 - x_t\Vert^2] \leq \frac{C}{2^p(p+1)}t^{2p}$. In particular for $t= \sigma^2$, we get:
$$
W_p(\L(X),\L(\varphi_{1/2}(Y))) \leq \op\E[\Vert x_0 - x_{\sigma^2}\Vert^p]\cp^{1/p} \leq K \sigma^{4},
$$
with $K = \op\frac{C}{2^p(p+1)}\cp^{1/p} = \frac{9^{(p-1)/p}}{4(p+1)^{1/p}}(4C_1 + (2d^2+5)C_1^{1/3}C_2^{2/3}+C_3)^{1/p}$.
\end{proof}

Remark: Proposition \ref{prop:mixt} from section \ref{sct:mixt} can also be extended for any $p\geq1$ (tough it will make the proof harder to read). However Proposition \ref{prop:subspace} from section \ref{sct:subspace} relies on the fact that $\Vert (x_1,x_2)\Vert^2 = \Vert x_1\Vert^2 + \Vert x_2\Vert^2$ for $\Vert\cdot\Vert$ the Euclidean norm on $\R^d$. Therefore it could only by extended to any $p\geq1$ if we use the norm $\Vert\cdot\Vert_p$ on $\R^d$.

\section{Computation of Wasserstein Distance for a Mixture of Two Dirac Distributions}
\label{sct:integral_expression_mixture}
We derive here integral expressions for $W_2(\L(X),\L(\varphi_\alpha(Y)))$ when $X = \frac{\delta_{-\mu} + \delta_\mu}{2}$ on $\R$ for some $\mu >0$.
In this case, we have
$$
p_Y(y) = \frac{1}{2\sqrt{2\pi \sigma^2}}\op e^{-\frac{(y-\mu)^2}{2\sigma^2}} + e^{-\frac{(y+\mu)^2}{2\sigma^2}}\cp
$$
leading to,
$$
\nabla\log p_Y(y)= \frac{1}{\sigma^2}(-y +\mu\tanh\op\frac{y\mu}{\sigma^2}\cp),
$$
hence
$$
\varphi_\alpha(y) = \alpha \mu \tanh\op\frac{y\mu}{\sigma^2}\cp + (1-\alpha) y
$$
The function $\varphi_\alpha$ is increasing and verifies $\forall y\in\R, \varphi_\alpha(-y)= -\varphi_\alpha(y)$. Moreover, as $X$ is a symmetric random variable ($-X\sim X$), $Y$ is also symmetric, hence $\varphi_\alpha(Y)$ as well. Computing the optimal transport plan is therefore straightforward, as all the mass of $\varphi_\alpha(Y)$ on $\R_-$ should be transported to $-\mu$ and  all the mass of $\varphi_\alpha(Y)$ on $\R_+$ should be transported to $\mu$, each part having the same transport cost, leading to the expression
\begin{align*}
W_2^2(\L(X),\L(\varphi_\alpha(Y))) 
&= 2\E[(\varphi_\alpha(Y) - \mu)^2 \1_{\varphi_\alpha(Y) \geq 0}] \\
&= 2 \cdot\frac{1}{\sqrt{2\pi \sigma^2}}\int_{0}^{\infty}(\varphi_\alpha(y) - \mu)^2 \frac{e^{-\frac{(y-\mu)^2}{2\sigma^2}}+e^{-\frac{(y+\mu)^2}{2\sigma^2}}}{2}dy\\
&= \frac{1}{\sqrt{2\pi \sigma^2}}\int_{0}^{\infty}(\varphi_\alpha(y) - \mu)^2 e^{-\frac{(y-\mu)^2}{2\sigma^2}}dy\\
&\quad+\frac{1}{\sqrt{2\pi \sigma^2}}\int_{0}^{\infty}(\varphi_\alpha(y) - \mu)^2 e^{-\frac{(y+\mu)^2}{2\sigma^2}}dy \\
&= \frac{1}{\sqrt{2\pi \sigma^2}}\int_{0}^{\infty}(\varphi_\alpha(y) - \mu)^2 e^{-\frac{(y-\mu)^2}{2\sigma^2}}dy\\
\text{(change $y \leftarrow-y $)}&\quad+\frac{1}{\sqrt{2\pi \sigma^2}}\int_{-\infty}^{0}(\varphi_\alpha(-y) - \mu)^2 e^{-\frac{(-y+\mu)^2}{2\sigma^2}}dy \\
&= \frac{1}{\sqrt{2\pi \sigma^2}}\int_{-\infty}^{\infty}(\varphi_\alpha(|y|) - \mu)^2 e^{-\frac{(y-\mu)^2}{2\sigma^2}}dy
\end{align*}
We can rearrange this expression to write $\frac{W_2(\L(X),\L(\varphi_\alpha(Y)))}{\mu}$ as a function of $\frac{\sigma}{\mu}$. We first define, for $u \in \R$,
$$
\varphi_\alpha^\sigma(u) = \alpha \tanh\op\frac{u}{\sigma^2}\cp + (1-\alpha)u,
$$
then taking the square root, dividing by $\mu$ and making the change of variable $u = \frac{y}{\mu}$, we get
$$
\frac{W_2(\L(X),\L(\varphi_\alpha(Y)))}{\mu} = \op\frac{1}{\sqrt{2\pi (\sigma/\mu)^2}}\int_{-\infty}^{\infty}(\varphi_\alpha^{\sigma/\mu}(|u|) - 1)^2 e^{-\frac{(y-1)^2}{2(\sigma/\mu)^2}}dy\cp^{1/2}.
$$

\section{Derivation of Deterministic Diffusion-Model Samplers}
\label{sct:derivation_diffusion_models}
First note that the equation for the probability flow ODE given by \citet{karrasElucidatingDesignSpace2022} is (with our notations):
$$
\frac{dx_t}{dt} = \frac{\dot s(t)}{s(t)}x_t - s(t)^2 \dot \sigma(t) \sigma(t) \nabla_x\ob (x,\sigma^2) \mapsto \log p\op \frac{x}{s(t)}; \sigma^2\cp\cb \op x_t; \sigma(t)^2\cp,
$$
which can by rewritten 
$$
\frac{dx_t}{dt} = \frac{\dot s(t)}{s(t)}x_t - s(t) \dot \sigma(t) \sigma(t) \nabla_x\ob (x,\sigma^2) \mapsto \log p\op x; \sigma^2\cp\cb \op \frac{x_t}{s(t)}; \sigma(t)^2\cp.
$$
Writing $\nabla \log p = \nabla_x\ob (x,\sigma^2) \mapsto \log p\op x; \sigma^2\cp\cb$ the score function for the density $p\op x; \sigma^2\cp$ of the normalized variable at noise level $\sigma^2$, we finally get (\ref{eq:ODE_karras}):
\begin{equation*}
\tag{\ref{eq:ODE_karras}}
\frac{dx_t}{dt} = \frac{\dot s(t)}{s(t)}x_t - s(t) \dot \sigma(t) \sigma(t)\nabla\log p\op\frac{x_t}{s(t)}; \sigma(t)^2\cp.
\end{equation*}

\subsection{DDIM Algorithm}
There are different ways to derive the DDIM updates \citep{songDenoisingDiffusionImplicit2021}. Firstly, one can consider a specific process $(X_t)$ with the same marginal as $(x_t)$ defined by (\ref{eq:ODE_karras}). Consider $Z\sim \N(0,I)$ independent from $X$ and define, for $t\geq 0$,
$$
X_t = s(t)(X + \sigma(t)Z)
$$
For $t_k,t_{k+1}\in\R$, we can rewrite
\begin{align*}
X_{t_{k+1}} &= s(t_{k+1})(X + \sigma(t_{k+1})Z)\\
&= s(t_{k+1})\op X + \frac{\sigma(t_{k+1})}{\sigma(t_k)}\op \frac{X_{t_k}}{s(t_k)} - X \cp\cp\\
&= s(t_{k+1})\op 1 - \frac{\sigma(t_{k+1})}{\sigma(t_k)} \cp X + \frac{\sigma(t_{k+1})s(t_{k+1})}{\sigma(t_k)s(t_k)}X_{t_k}
\end{align*}
from which we deduce,
$$
\E[X_{t_{k+1}}|X_{t_k}] = s(t_{k+1})\op 1 - \frac{\sigma(t_{k+1})}{\sigma(t_k)} \cp \E[X|X_{t_k}] + \frac{\sigma(t_{k+1})s(t_{k+1})}{\sigma(t_k)s(t_k)}X_{t_k}.
$$
As $X_{t_k}/s(tk) = X + \sigma(t_k)Z$, Tweedie's formula gives
\begin{align*}
\E[X|X_{t_k}]  
= \E\ob X\Big|\frac{X_{t_k}}{s(t_k)}\cb
&=\frac{X_{t_k}}{s(t_k)} + \sigma(t_k)^2\nabla\log p_{\frac{X_{t_k}}{s(t_k)}}\op \frac{X_{t_k}}{s(t_k)}\cp\\
&=\frac{X_{t_k}}{s(t_k)} + \sigma(t_k)^2\nabla\log p\op \frac{X_{t_k}}{s(t_k)}; \sigma(t_k)^2\cp,
\end{align*}
leading to,
$$
\E[X_{t_{k+1}}|X_{t_k}] = \frac{s(t_{k+1})}{s(t_k)}X_{t_k} + s(t_{k+1})(\sigma(t_{k})^2-\sigma(t_k)\sigma(t_{k+1}))\nabla\log p\op \frac{X_{t_k}}{s(t_k)};\sigma(t_k)^2\cp.
$$
The DDIM update comes from approximating $X_{t_{k+1}}$ by $\E[X_{t_{k+1}}|X_{t_k}]$, leading to 
$$
\hat X_{k+1} = \frac{s(t_{k+1})}{s(t_k)}\hat X_{k} + s(t_{k+1})(\sigma(t_{k})^2-\sigma(t_k)\sigma(t_{k+1}))\nabla\log p\op \frac{\hat X_{k}}{s(t_k)};\sigma(t_k)^2\cp.
$$
Another method to derive the same update is to start from (\ref{eq:ODE_karras}), and take an Euler discretization of (\ref{eq:ODE_karras}) with the approximations $\dot s(t) \approx (s(t_{k+1})-s(t_k))/(t_{k+1}-t_k)$, $\dot \sigma(t) \approx (\sigma(t_{k+1})-\sigma(t_k))/(t_{k+1}-t_k)$, $s(t) \approx s(t_k)$ for the first occurrence of $s(t)$ in (\ref{eq:ODE_karras}) and $s(t) \approx s(t_{k+1})$ for the second, leading to
$$
\frac{\hat X_{k+1} - \hat X_{k}}{t_{k+1}-t_k} =
\frac{1}{s(t_k)}\frac{s(t_{k+1})-s(t_k)}{t_{k+1} - t_k}\hat X_k
- s(t_{k+1})\frac{\sigma(t_{k+1})-\sigma(t_{k})}{t_{k+1}-t_k}\sigma(t_k)\nabla\log p\op\frac{\hat X_k}{s(t_k)}; \sigma(t_k)^2\cp,
$$
that can be rewritten as
$$
\hat X_{k+1} = \frac{s(t_{k+1})}{s(t_k)}\hat X_k + s(t_{k+1})(\sigma(t_k)^2- \sigma(t_k)\sigma(t_{k+1})) \nabla\log p\op\frac{\hat X_k}{s(t_k)}; \sigma(t_k)^2\cp.
$$
The procedure proposed by \citet{songDenoisingDiffusionImplicit2021} is recovered by taking $s(t)= \sqrt{\alpha_t}$ and $\sigma(t) = \frac{\sqrt{1-\alpha_t}}{\sqrt{\alpha_t}}$.

\subsection{Euler Discretization}
The Euler discretization of (\ref{eq:ODE_karras}) is
$$
\frac{\hat X_{k+1} - \hat X_{k}}{t_{k+1}-t_k} =
\frac{\dot s(t_k)}{s(t_k)}\hat X_k
- s(t_{k})\dot\sigma(t_k)\sigma(t_k)\nabla\log p\op\frac{\hat X_k}{s(t_k)}; \sigma(t_k)^2\cp,
$$
which can by rewritten 
$$
\hat X_{k+1} = \op 1 + \frac{\dot s(t_{k})(t_{k+1}-t_k)}{s(t_k)}\cp\hat X_k - s(t_{k}) (t_{k+1}-t_k) \dot \sigma(t_k) \sigma(t_k) \nabla\log p\op\frac{\hat X_k}{s(t_k)}; \sigma(t_k)^2\cp.
$$

\vskip 0.2in
\bibliography{biblio.bib}

\end{document}